\tikzset{%
  semithick,
  >={Stealth[width=2mm,length=2.75mm]},
  obs/.style = {name = #1, circle, draw, inner sep = 8pt, label = center:$#1$},
  col/.style 2 args = {name = #1, circle, color = #2, draw, inner sep = 8pt, label = center:\color{#2}$#1$},
  lat/.style = {name = #1, regular polygon, regular polygon sides = 4, draw,  inner sep = 7pt, label = center:$#1$}
}
\definecolor{violet}{rgb}{0.7,0,0.7}
\definecolor{gray}{rgb}{0.4,0.4,0.4}
\newcommand{\cond}{\,\vert\,}
\newcommand{\sV}{\mathcal{V}}
\newcommand{\sM}{\mathcal{M}}
\newcommand{\sG}{\mathcal{G}}
\newcommand{\sT}{\mathcal{T}}
\newcommand{\sF}{\mathcal{F}}
\newcommand{\sA}{\mathcal{A}}
\newcommand{\sB}{\mathcal{B}}
\newcommand{\sC}{\mathcal{C}}
\newcommand{\Pa}[1][]{%
  \ifthenelse{ \equal{#1}{} }
    {\textrm{Pa}}
    {\textrm{Pa}_{#1}}
}
\newcommand{\Ch}[1][]{%
  \ifthenelse{ \equal{#1}{} }
    {\textrm{Ch}}
    {\textrm{Ch}_{#1}}
}
\newcommand{\An}[1][]{%
  \ifthenelse{ \equal{#1}{} }
    {\textrm{An}}
    {\textrm{An}_{#1}}
}
\newcommand{\De}[1][]{%
  \ifthenelse{ \equal{#1}{} }
    {\textrm{De}}
    {\textrm{De}_{#1}}
}
\newcommand{\Ne}[1][]{%
  \ifthenelse{ \equal{#1}{} }
    {\textrm{Ne}}
    {\textrm{Ne}_{#1}}
}
\newcommand{\Co}[1][]{%
  \ifthenelse{ \equal{#1}{} }
    {\textrm{Co}}
    {\textrm{Co}_{#1}}
}
\newcommand{\rec}[1][]{%
  \ifthenelse{ \equal{#1}{} }
    {\textrm{Re}}
    {\textrm{Re}_{#1}}
}
\newcommand{\emi}[1][]{%
  \ifthenelse{ \equal{#1}{} }
    {\textrm{Em}}
    {\textrm{Em}_{#1}}
}
\newcommand{\compfinder}{\textsc{FindTrComp}}
\newcommand{\clustfinder}{\textsc{FindTrClust}}
\newcommand{\expander}{\textsc{ExpandClust}}
  \DeclareFontShape{OT1}{cmr}{m}{scit}{<->ssub*cmr/m/sc}{}%
\newcommand\independent{\protect\mathpalette{\protect\independenT}{\perp}} 
\def\independenT#1#2{\mathrel{\rlap{$#1#2$}\mkern2mu{#1#2}}}
\newcommand{\doo}{\textrm{do}}
\definecolor{colA}{RGB}{241,86,63} 
\definecolor{colB}{RGB}{0,82,174} 
\definecolor{colC}{RGB}{129,103,0}
\begin{document}

\title{Clustering and Structural Robustness in Causal Diagrams}

\author{\name Santtu Tikka \email santtu.tikka@jyu.fi \\
        \name Jouni Helske \email jouni.helske@jyu.fi \\
        \name Juha Karvanen \email juha.t.karvanen@jyu.fi \\
        \addr Department of Mathematics and Statistics \\
        P.O.Box 35 (MaD) FI-40014 University of Jyvaskyla, Finland}
\editor{Francis Bach}


\maketitle

\begin{abstract}%
Graphs are commonly used to represent and visualize causal relations. For a small number of variables, this approach provides a succinct and clear view of the scenario at hand. As the number of variables under study increases, the graphical approach may become impractical, and the clarity of the representation is lost. Clustering of variables is a natural way to reduce the size of the causal diagram, but it may erroneously change the essential properties of the causal relations if implemented arbitrarily. We define a specific type of cluster, called transit cluster, that is guaranteed to preserve the identifiability properties of causal effects under certain conditions. We provide a sound and complete algorithm for finding all transit clusters in a given graph and demonstrate how clustering can simplify the identification of causal effects. We also study the inverse problem, where one starts with a clustered graph and looks for extended graphs where the identifiability properties of causal effects remain unchanged. We show that this kind of structural robustness is closely related to transit clusters.
\end{abstract}

\begin{keywords}
causal inference, graph theory, algorithm, identifiability, directed acyclic graph
\end{keywords}

\section{Introduction} \label{sect:introduction}

Directed acyclic graphs (DAGs) and their extensions are commonly used to describe causal relations between variables in epidemiology and other fields \citep{pearl1995, Greenland1999, Tennant2021}. The power of graphs lies in their ability to visualize the assumed structure, and at the same time, to serve as well-defined inputs for algorithms such as those that solve the nonparametric identifiability of causal effects \citep{shpitser2006,lee2019,lee2020,tikka2021dosearch}. The graphical approach has been criticized by proponents of potential outcome framework \citep{Rubin1974} for its impracticality when a large number of variables is considered \citep{Imbens2020}. This criticism is partially justified: the visual clarity of a graph is easily lost when the number of vertices is more than a few, especially in the case of several crossing edges \citep{purchase1997}. Moreover, in some settings the identifiability of causal effects is an NP-hard problem \citep{tikka2020csi}, which makes it impractical to consider large graphs. A possible remedy for these difficulties is to cluster the variables to reduce the size of the graph. A question then arises whether the clustered graph and the original graph are equivalent with respect to the identifiability properties of causal effects.

The idea of clustering is natural and has been used in causal inference explicitly and implicitly. The back-door criterion \citep{pearl1993bayesian}, the front-door criterion \citep{pearl1995}, and ignorability assumptions in the potential outcome framework \citep{rosenbaum1983central} impose conditions upon a set (i.e., a cluster) of variables and the structure inside the set is not important. 
Explicitly, clusters have been constructed starting from structural equations \citep{skorstad1990clustered} or multivariate data \citep{entner2012estimating,parviainen2017learning,nisimov2021improving}. Outside causal inference,  many clustering methods for directed graphs have been proposed under varying premises  \citep{malliaros2013clustering}.


Clustering can be also viewed from a different starting point as a way to construct causal models where the causal relationships between clusters of variables are specified instead of the relationships between the variables themselves. For instance, a recent review \citep{Tennant2021} found that many DAGs in applied health research included so-called "super-nodes" \citep{Kornaropoulos2013} which represent multiple variables with the implicit assumption of strong connectivity of the corresponding vertices. This viewpoint emphasizes the uncertainty of structural assumptions and the fact that we may not possess sufficient knowledge about the domain under study to fully specify individual relationships between variables. The goal in this type of clustering is structural robustness; inferences made with the clustered graph can be safely applied in any graph that is compatible with the clustering, but not necessarily vice versa. This approach was considered in a formal setting by \citet{anand2021}. 


Clustering is different from latent projection \citep{verma1993graphical} that also can be used to simplify the structure of DAG in causal inference \citep{tikka2017}. Figure~\ref{fig:simplecluster} demonstrates that a cluster of vertices is not always equivalent to a latent projection in terms of identification. We consider the identifiability of a query $p(x_b \cond \doo(x_a))$ from observations $p(x_a,x_b,x_{c_1},x_{c_2})$. In this task, we may apply clustering $C = \{c_1,c_2\}$ and obtain an identifying formula similar to the original solution. However, if we use a latent projection to consider either $c_1$ or $c_2$ as unobserved in graph $\sG_1$, a bidirected edge between $a$ and $b$ appears, and the query is not identifiable anymore. In the graph $\sG_2$, the query is not identifiable if a latent projection is used to consider both $c_1$ and $c_2$ as unobserved, although projecting only either $c_1$ or $c_2$ retains the identifiability. On the other hand, arbitrary clustering of variables in a DAG does not necessarily retain the identifiability either.

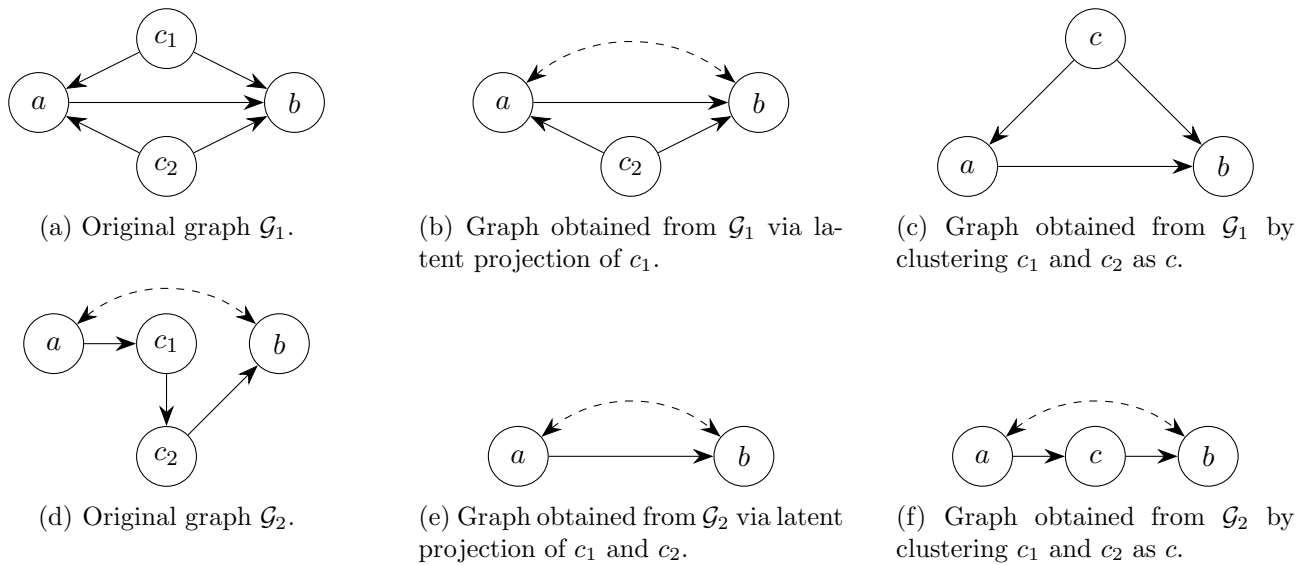
\begin{figure}[ht]
\begin{center}
\begin{subfigure}[t]{0.30\textwidth}
\centering
\begin{tikzpicture}[scale=1.7]
\node [obs = {a}] at (0,0) {};
\node [obs = {b}] at (2,0) {};
\node [obs = {c_1}] at (1,0.5) {};
\node [obs = {c_2}] at (1,-0.5) {};
\draw [->] (a) -- (b);
\draw [->] (c_1) -- (b);
\draw [->] (c_2) -- (b);
\draw [->] (c_1) -- (a);
\draw [->] (c_2) -- (a);
\end{tikzpicture}
\caption{Original graph $\sG_1$.} 
\end{subfigure}
\hfill
\begin{subfigure}[t]{0.32\textwidth}
\centering
 \begin{tikzpicture}[scale=1.7]
\node [obs = {a}] at (0,0) {};
\node [obs = {b}] at (2,0) {};
\node [obs = {c_2}] at (1,-0.5) {};
\draw [->] (a) -- (b);
\draw [->] (c_2) -- (b);
\draw [<->, dashed] (a) to [bend left=40]  (b);
\draw [->] (c_2) -- (a);
\end{tikzpicture}
\caption{Graph obtained from $\sG_1$ via latent projection of $c_1$.}
\end{subfigure}
\hfill
\begin{subfigure}[t]{0.30\textwidth}
 \centering
 \begin{tikzpicture}[scale=1.7]
\node [obs = {a}] at (0,0) {};
\node [obs = {b}] at (2,0) {};
\node [obs = {c}] at (1,1) {};
\draw [->] (a) -- (b);
\draw [->] (c) -- (a);
\draw [->] (c) -- (b);
\end{tikzpicture}
\caption{Graph obtained from $\sG_1$ by clustering $c_1$ and $c_2$ as $c$.}
\end{subfigure}

\begin{subfigure}[t]{0.3\textwidth}
 \centering
\begin{tikzpicture}[scale=1.5]
\node [obs = {a}] at (0,0) {};
\node [obs = {b}] at (2,0) {};
\node [obs = {c_1}] at (1,0) {};
\node [obs = {c_2}] at (1,-1) {};
\draw [->] (a) -- (c_1);
\draw [->] (c_1) -- (c_2);
\draw [->] (c_2) -- (b);
\draw [<->, dashed] (a) to [bend left=40]  (b);
\end{tikzpicture}
\caption{Original graph $\sG_2$.} 
\end{subfigure}
\hfill
\begin{subfigure}[t]{0.32\textwidth}
\centering
\begin{tikzpicture}[scale=1.5]
\node [obs = {a}] at (0,0) {};
\node [obs = {b}] at (2,0) {};
\draw [->] (a) -- (b);
\draw [<->, dashed] (a) to [bend left=40]  (b);
\end{tikzpicture}
\caption{Graph obtained from $\sG_2$ via latent projection of $c_1$ and $c_2$.}
\end{subfigure}
\hfill
\begin{subfigure}[t]{0.30\textwidth}
\centering
\begin{tikzpicture}[scale=1.5]
\node [obs = {a}] at (0,0) {};
\node [obs = {b}] at (2,0) {};
\node [obs = {c}] at (1,0) {};
\draw [->] (a) -- (c);
\draw [->] (c) -- (b);
\draw [<->, dashed] (a) to [bend left=40]  (b);
\end{tikzpicture}
\caption{Graph obtained from $\sG_2$ by clustering $c_1$ and $c_2$ as $c$.} 
\end{subfigure}
\caption{Two examples on clustering of vertices.} \label{fig:simplecluster}
\end{center}
\end{figure}

As the first contribution, we introduce a specific type of cluster, called transit cluster, and present conditions for the equivalence of causal effect identifiability between the original and the clustered graph. We consider clustering as an operation that transforms a DAG into a new DAG where the cluster is represented by a single vertex. Our approach toward clustering builds on the intuitive idea that information flows through a cluster and the detailed structure inside the cluster is often irrelevant. We assume that the DAG being clustered is fully specified.

As the second contribution, we provide a sound and complete algorithm for finding all transit clusters in a given graph and demonstrate how clustering can simplify the identification of causal effects.
While polynomial-time algorithms exist for many important causal identification problems, the resulting identifying functional can be complicated \citep{tikka2017simplifying}.
Clustering vertices in the graph can reduce the computational burden and lead to identifying functionals with a simpler structure.


As the third contribution, we study the inverse problem, where one starts with a clustered graph and looks for extended graphs where the identifiability properties of causal effects remain unchanged. This problem is related to the top-down causal modeling where one starts by creating the DAG with concepts, such as ``work history'', ``socio-economic background'' or ``genetic factors'', and only later divides these concepts into actual variables. Here a transit cluster represents this kind of concept. We present an iterative procedure that can be used to extend a single vertex to an arbitrary transit cluster. We show that transit clusters are structurally robust in the sense that under certain conditions the structure inside the cluster is irrelevant for identification. A schematic illustration of contributions of the paper is presented in Figure~\ref{fig:schematic}.

\begin{figure}[ht]
\begin{center}
\scalebox{0.72}{
  \begin{tikzpicture}[yscale = 1.65]
    \node at (-5, 0) (l1) {\large{\emph{Causal inference with large graphs}}};
    \node[draw, rounded corners, inner sep = 10] at (-5,-1) (l2) {Large unclustered graph};
    \node[draw, chamfered rectangle, chamfered rectangle xsep=2cm, align=center, fill = lightgray] at (-5,-2.5) (l3) {Clustering\\(Algorithms~\ref{alg:find_components} and \ref{alg:find_clusters})};
    \node at (-7.5, -4) {All transit clusters};
    \draw[draw=black, rounded corners] (-9.5,-7.15) rectangle ++(9.0, 3.40);
    \node[draw, rounded corners, inner sep = 8] at (-7.25,-4.75) (t1) {Transit cluster};
    \node[draw, rounded corners, inner sep = 8] at (-3,-4.75) (c1) {Clustered graph};
    \node[draw, rounded corners, inner sep = 8] at (-7.25,-5.5) (t2) {Transit cluster};
    \node[draw, rounded corners, inner sep = 8] at (-3,-5.5) (c2) {Clustered graph};
    \node at (-5.25,-6) {$\vdots$};
    \node[draw, rounded corners, inner sep = 8] at (-7.25,-6.5) (t3) {Transit cluster};
    \node[draw, rounded corners, inner sep = 8] at (-3,-6.5) (c3) {Clustered graph};
    \node at (5, 0) (r1) {\large{\emph{Top-down causal modeling}}};
    \node[draw, rounded corners, inner sep = 10] at (5,-1) (r2) {Clustered graph};
    \node[draw, chamfered rectangle, chamfered rectangle xsep=2cm, align=center, fill = lightgray] at (5,-2.5) (r3) {Peripheral extension\\(Definition~\ref{def:peripheral},\\Theorems~\ref{th:peripheral_induced} and \ref{th:peripheral_construction})};
    \node[draw, rounded corners, inner sep = 10] at (5,-4.1) (r4) {Extended graph};
    \node[draw, rounded corners, inner sep = 10] at (5,-5.35) (r5) {Unclustered graph};
    \draw [->] (l2) -- (l3);
    \draw [->] (l3) -- (-5, -3.75);
    \draw [->] (t1) -- (c1);
    \draw [->] (t2) -- (c2);
    \draw [->] (t3) -- (c3);
    \draw [->] (r2) -- (r3);
    \draw [->] (r3) -- (r4);
    \draw [->] (r4) -- (r5);
    \draw [->] (r4) -- (9, -4.1) -- (9,-2.5) -- (r3);
  \end{tikzpicture}
}
\caption{A schematic illustration of the contributions of the paper. In causal inference with large graphs (left), the application of the proposed clustering algorithm to a large graph results in a collection of all transit clusters, each of which corresponds to a clustered graph. In top-down causal modeling (right), we start with a clustered graph and iteratively apply peripheral extension to obtain an unclustered graph that shares the key properties with the clustered graph.} \label{fig:schematic}
\end{center}
\end{figure}
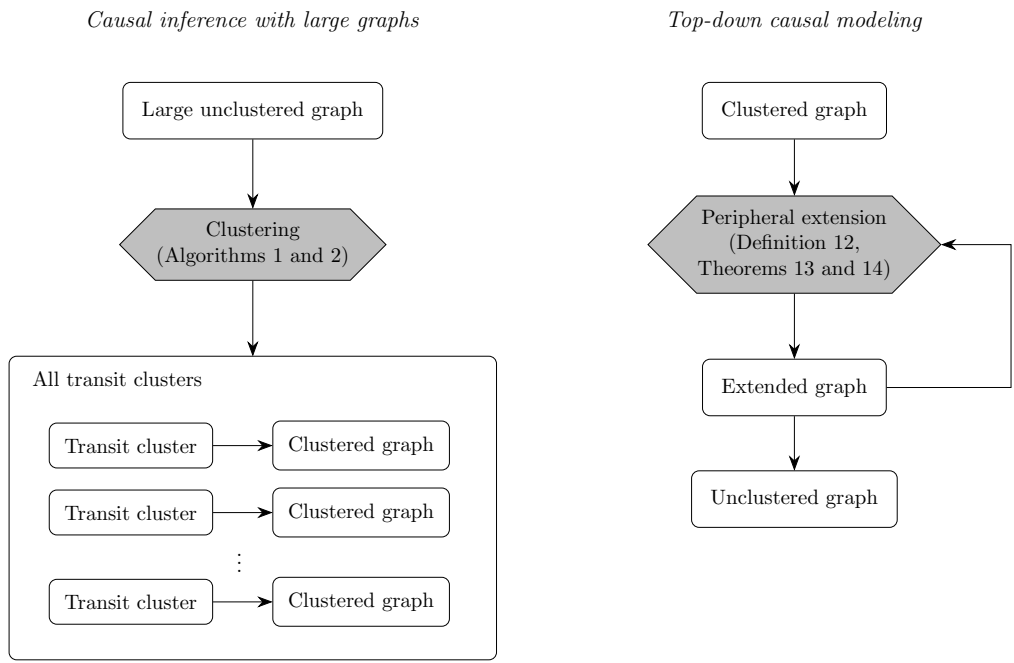


The rest of the paper is organized as follows. In Section~\ref{sec:cluster}, we define the transit cluster and prove its key properties. In Section~\ref{sect:algorithm}, we present an algorithm for finding all transit clusters of a DAG and prove that it is sound and complete. After considering clustering from a purely graphical point of view in Sections~\ref{sec:cluster} and \ref{sect:algorithm}, we then proceed to consider clustering in causal diagrams in Section~\ref{sec:transitcausal}, where we provide results on the identifiability of causal effects for specific transit clusters. In Section~\ref{sec:robustness}, we consider structural robustness and its connection to transit clusters. Illustrative examples on the clustering and structural robustness are given Section~\ref{sec:illustration}. Section~\ref{sec:discussion} concludes the paper. Code for the clustering algorithms and examples are available in a GitHub repository: \url{https://github.com/santikka/transit_cluster}.

\section{Clustering Vertices in DAGs} \label{sec:cluster}
We begin by introducing the notation used for directed graphs. A DAG $\sG = (V,E)$ is an ordered pair of two sets where $V$ is a set of indices (vertices), i.e., $V = \{1,\ldots,n\}$, and $E$ is a set of ordered pairs (directed edges) $E \subseteq \{(i,j) \mid i,j \in V\}$. Vertices and edges are denoted with small letters. 

In a DAG $\sG = (V,E)$, $\Pa[\sG](A)$, $\Ch[\sG](A)$, $\An[\sG](A)$ and $\De[\sG](A)$ denote the parents, children, ancestors and descendants of vertex set $A \subseteq V$ including $A$, respectively. The neighbors of a vertex set $A$ including $A$ is denoted by $\Ne[\sG](A) \equiv \Ch[\sG](A) \cup \Pa[\sG](A)$. Vertices connected to $A$ including $A$ is denoted by $\Co[\sG](A)$. The corresponding sets of the previous that exclude $A$ are denoted by $\Pa[\sG]^*(A)$, $\Ch[\sG]^*(A)$, $\An[\sG]^*(A)$, $\De[\sG]^*(A)$, $\Ne[\sG]^*(A)$, and $\Co[\sG]^*(A)$. If there is only one relevant graph $\sG$ in a given context, we will sometimes omit the subscript from these sets for clarity, and simply write $\Pa(A)$ or $\Pa^*(A)$, for example. 

We use the notation $\sG[W]$ to denote an induced subgraph $(W, F)$ of $\sG = (V,E)$, where $W \subseteq V$ and $F$ contains those edges of $E$ with both endpoints in $W$. Similarly, $\sG[\overline A, \underline B]$ denotes an induced edge subgraph obtained from $\sG$ by removing incoming edges to $A \subseteq V$ and outgoing edges of $B \subseteq V$. The collection of vertex sets that induce the components of $\sG$ is denoted by $\sC(\sG)$.

By a cluster we mean a subset of vertices of a DAG. Note that there are different definitions of ``clustering'' and ``cluster graph'' in other contexts. The motivation for the name ``cluster'' becomes evident when we consider a graph that represents the cluster as a single vertex:

\begin{definition}[Clustering] \label{def:clustering} 
Clustering of a set of vertices $T \subset V$ in a DAG $\sG=(V,E)$ induces a graph $\sG^\prime=(V^\prime,E^\prime)$ obtained from $\sG$ by removing vertices $T$ and adding a new vertex $t$ that has parents  $\Pa^*_{\sG}(T)$ and children $\Ch^*_{\sG}(T)$. In addition, sets $W \subset V$ and $W^\prime \subset V^\prime$ are clustering equivalent if $W \setminus T = W^\prime \setminus \{t\}$ and $T \subset W$ if and only if $t \in W^\prime$. 
\end{definition}

Definition~\ref{def:clustering} captures the intuitive idea of clustering where the incoming and outgoing edges of the cluster are the same as the incoming and outgoing edges of its representative in the induced graph. However, without any constraints on the set $T$ being clustered, this definition is too general in the sense that the properties of the induced graph may be drastically different from the original graph. For example, the induced graph is not necessarily a DAG or it may contain paths that were not present in the original graph.

Next, we will present conditions for the clustered vertices $T$ that guarantee the usefulness of the clustering. Our approach is based on the intuitive notion that the effects flow through the cluster and the edges between the clustered vertices do not matter. Only those edges that connect to vertices outside the cluster are relevant. For this purpose, we define two special sets of vertices.

\begin{definition}[Receiver] For a set of vertices $T$ in a DAG $\sG = (V,E)$, the set of \emph{receivers} is the set
\[
  \rec[\sG](T) \equiv \{v \in T \mid \Pa[\sG](v) \cap (V \setminus T) \neq \emptyset\}.
\]
\end{definition}
The set of receivers for a set of vertices $T \subset V$ are those members of $T$ that have parents outside of $T$ in $\sG$. To complement the receivers, we also define the following.
\begin{definition}[Emitter] For a set of vertices $T$ in a DAG $\sG = (V, E)$, the set of $\emph{emitters}$ is the set
\[
  \emi[\sG](T) \equiv \{v \in T \mid \Ch[\sG](v) \cap (V \setminus T) \neq \emptyset\}.
\]
\end{definition}

We will use shortcut notation $\sG[T^=]$ to denote a subgraph induced by $T$ such that the incoming edges to receivers of $T$ and outgoing edges from emitters of $T$ are removed. We are now ready to define a cluster that preserves the fundamental structure of the graph.

\begin{definition}[Transit cluster] \label{def:transitcluster}
A non-empty set $T \subset V$ is a \emph{transit cluster} in a connected DAG $\sG = (V,E)$ if the following conditions hold
\begin{enumerate} 
  \item \label{itm:parents} $\Pa(r_i) \setminus T = \Pa(r_j) \setminus T$ for all pairs $r_i,r_j \in \rec(T)$,
  \item \label{itm:children} $\Ch(e_i) \setminus T = \Ch(e_j) \setminus T$ for all pairs $e_i,e_j \in \emi(T)$, 
  \item \label{itm:no_ronsy_allowed} For all vertices $t_i \in T$, there exists a receiver $r$ or an emitter $e$ such that $t_i$ and $r$ or $t_i$ and $e$ are connected via an undirected path in $\sG[T^=]$.
  \item \label{itm:forallr} If $\emi(T) \neq \emptyset$ then for all $r \in \rec(T)$ there exists $e \in \emi(T)$ such that $e \in \De(r)$,
  \item \label{itm:foralle} If $\rec(T) \neq \emptyset$ then for all $e \in \emi(T)$ there exists $r \in \rec(T)$ such that $r \in \An(e)$.
\end{enumerate}
\end{definition}

In other words, a transit cluster is a set of vertices such that any member of its receivers has the same parents outside of the set, and any member of its emitters has the same children outside of the set. Additionally, we disallow those vertices from belonging to the cluster that are disconnected from the receivers or emitters when incoming edges of receivers and outgoing edges of emitters have been removed. Finally, for any receiver, there is always a directed path connecting that receiver to an emitter, and conversely, for any emitter, there is always a directed path connecting a receiver to that emitter. Together, these features endow the grouped set of vertices with several desirable properties. The set of all transit clusters of $\sG$ is denoted by $\Upsilon_\sG$.

The purpose behind the first two conditions in Definition~\ref{def:transitcluster} is to ensure that no new paths from the parents of the receivers to the children of emitters are created by performing the clustering. The third condition ensures that a transit cluster is characterized by its receivers and emitters, as we will show later. The last two conditions enforce the idea of information flow through the cluster. Examples of transit clusters are presented in Figure~\ref{fig:transitclusterexamples}.

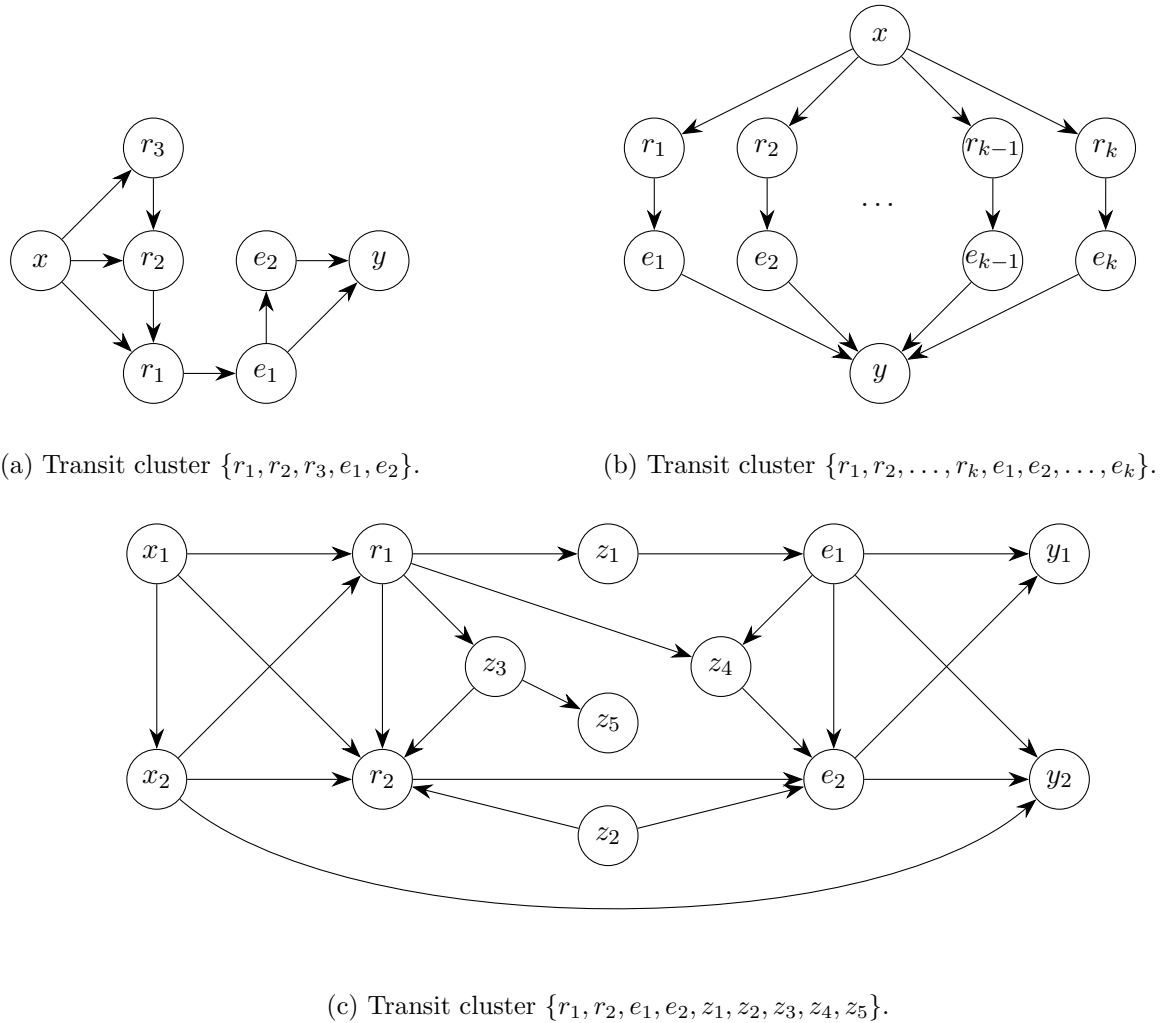
\begin{figure}[!ht]
\begin{subfigure}[t]{0.40\textwidth}
\begin{center}
  \begin{tikzpicture}[scale=1.5]
  \node [obs = {x}] at (0,2) {};
  \node [obs = {r_1}] at (1,1) {};
  \node [obs = {r_2}] at (1,2) {};
  \node [obs = {r_3}] at (1,3) {};
  \node [obs = {e_1}] at (2,1) {};
  \node [obs = {e_2}] at (2,2) {};
  \node [obs = {y}] at (3,2) {};
  \draw[->] (x) -- (r_1);
  \draw[->] (x) -- (r_2);
  \draw[->] (x) -- (r_3);
  \draw[->] (r_1) -- (e_1);
  \draw[->] (r_3) -- (r_2);
  \draw[->] (r_2) -- (r_1);
  \draw[->] (e_1) -- (e_2);
  \draw[->] (e_1) -- (y);
  \draw[->] (e_2) -- (y);
  \end{tikzpicture}
\end{center}
\caption{Transit cluster $\{r_1,r_2,r_3,e_1,e_2\}$.} \label{fig:minimal_representative}
\end{subfigure}
\hfill
\begin{subfigure}[t]{0.59\textwidth}
\begin{center}
  \begin{tikzpicture}[scale=1.5]
  \node [obs = {x}] at (3,4) {};
  \node [obs = {r_1}] at (1,3) {};
  \node [obs = {e_1}] at (1,2) {};
  \node [obs = {r_2}] at (2,3) {};
  \node [obs = {e_2}] at (2,2) {};
   \node [obs = {r_{k-1}}] at (4,3) {};
  \node [obs = {e_{k-1}}] at (4,2) {};
  \node [obs = {r_k}] at (5,3) {};
  \node [obs = {e_k}] at (5,2) {};
  \node [obs = {y}] at (3,1) {};  
  \node at ($(x)!.5!(y)$) {\ldots};
  \draw[->] (x) -- (r_1);
  \draw[->] (x) -- (r_2);
  \draw[->] (x) -- (r_{k-1});
  \draw[->] (x) -- (r_k);
  \draw[->] (r_1) -- (e_1);
  \draw[->] (r_2) -- (e_2);
  \draw[->] (r_{k-1}) -- (e_{k-1});
  \draw[->] (r_k) -- (e_k);
  \draw[->] (e_1) -- (y);
  \draw[->] (e_2) -- (y);
  \draw[->] (e_{k-1}) -- (y);
  \draw[->] (e_k) -- (y);
  \end{tikzpicture}
\end{center}
\caption{Transit cluster $\{r_1,r_2,\ldots,r_k,e_1,e_2,\ldots,e_k \}$.} \label{fig:numberoftransitclusters}
\end{subfigure}

\vspace{0.5cm}
\begin{subfigure}[t]{\textwidth}
\begin{center}
 \begin{tikzpicture}[xscale=3.0, yscale=1.5]
\node [obs = {x_1}] at (0,1) {};
\node [obs = {r_1}] at (1,1) {};
\node [obs = {z_1}] at (2,1) {};
\node [obs = {e_1}] at (3,1) {};
\node [obs = {y_1}] at (4,1) {};
\node [obs = {x_2}] at (0,-1) {};
\node [obs = {z_3}] at (1.5,0) {};
\node [obs = {z_4}] at (2.5,0) {};
\node [obs = {z_5}] at (2,-0.5) {};
\node [obs = {r_2}] at (1,-1) {};
\node [obs = {z_2}] at (2,-1.5) {};
\node [obs = {e_2}] at (3,-1) {};
\node [obs = {y_2}] at (4,-1) {};

\draw [->] (x_1) -- (r_1);
\draw [->] (x_1) -- (r_2);
\draw [->] (x_1) -- (x_2);
\draw [->] (r_1) -- (z_1);
\draw [->] (r_1) -- (z_3);
\draw [->] (z_1) -- (e_1);
\draw [->] (z_3) -- (r_2);
\draw [->] (z_3) -- (z_5);
\draw [->] (e_1) -- (y_1);
\draw [->] (e_1) -- (y_2);
\draw [->] (e_1) -- (z_4);
\draw [->] (z_4) -- (e_2);
\draw [->] (x_2) -- (r_1);
\draw [->] (x_2) -- (r_2);
\draw [->] (r_2) -- (e_2);
\draw [->] (e_2) -- (y_1);
\draw [->] (e_2) -- (y_2);
\draw [->] (r_1) -- (r_2);
\draw [->] (r_1) -- (z_4);
\draw [->] (e_1) -- (e_2);
\draw [->] (z_2) -- (e_2);
\draw [->] (z_2) -- (r_2);
\draw [->] (x_2) to [bend right=60]  (y_2) ;
\end{tikzpicture}
\end{center}
\caption{Transit cluster $\{r_1,r_2,e_1,e_2,z_1,z_2,z_3,z_4,z_5\}$.} \label{fig:exampletransitclusters}
\end{subfigure}
\caption{Examples of transit clusters. In addition to presented transit clusters, there are other transit clusters, for instance transit cluster $\{r_1,e_1\}$ in panel (a).}
\label{fig:transitclusterexamples}
\end{figure}

Definitions~\ref{def:clustering}--\ref{def:transitcluster} allow us to characterize the graph induced by a transit cluster as follows: 
\begin{corollary}
The induced graph $\sG^\prime$ of transit cluster $T$ is constructed from $\sG$ by replacing $T$ with a single vertex $t$ such that $\Pa[\sG^\prime]^*(t) = \Pa[\sG]^*(\rec[\sG](T)) \setminus T$ and $\Ch[\sG^\prime]^*(t) = \Ch[\sG]^*(\emi[\sG](T)) \setminus T$.
\end{corollary}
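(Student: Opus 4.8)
The plan is to reduce the statement to the definition of clustering and then simply unfold the definitions of receiver and emitter. By Definition~\ref{def:clustering}, the induced graph $\sG^\prime$ is obtained from $\sG$ by deleting the vertices $T$ and introducing a single new vertex $t$ whose parents are $\Pa[\sG]^*(T)$ and whose children are $\Ch[\sG]^*(T)$; in particular $\Pa[\sG^\prime]^*(t) = \Pa[\sG]^*(T)$ and $\Ch[\sG^\prime]^*(t) = \Ch[\sG]^*(T)$. It therefore suffices to establish the two set identities $\Pa[\sG]^*(T) = \Pa[\sG]^*(\rec[\sG](T)) \setminus T$ and $\Ch[\sG]^*(T) = \Ch[\sG]^*(\emi[\sG](T)) \setminus T$, after which the claim is immediate.

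For the first identity I would argue by double inclusion. If $v \in \Pa[\sG]^*(T)$, then $v \notin T$ and there is an edge $v \to w$ with $w \in T$; since $w$ then has a parent, namely $v$, lying in $V \setminus T$, the vertex $w$ is a receiver of $T$, so $v \in \Pa[\sG](\rec[\sG](T))$, and because $v \notin T$ while $\rec[\sG](T) \subseteq T$ we obtain $v \in \Pa[\sG]^*(\rec[\sG](T)) \setminus T$. Conversely, if $v \in \Pa[\sG]^*(\rec[\sG](T)) \setminus T$, then $v \notin T$ and $v$ is a parent of some receiver $r \in \rec[\sG](T) \subseteq T$ with $v \neq r$; the edge $v \to r$ with $v \notin T$ and $r \in T$ witnesses $v \in \Pa[\sG]^*(T)$. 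The identity for children and emitters follows by the symmetric argument, replacing ``parent'' with ``child'', ``into $T$'' with ``out of $T$'', and $\rec[\sG](T)$ with $\emi[\sG](T)$.

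The only place requiring mild care — and the closest thing to an obstacle here — is the bookkeeping between the two flavours of the neighbourhood notation: $\Pa[\sG]^*(\cdot)$ already excludes its own argument, but the argument $\rec[\sG](T)$ is in general a proper subset of $T$, so one must additionally subtract all of $T$ to recover $\Pa[\sG]^*(T)$, while $\Pa[\sG]^*(T)$ may itself contain members of $T$ that happen to be parents of receivers, which is exactly why the $\setminus T$ term is needed on the right-hand side. It is worth noting that none of the five conditions of Definition~\ref{def:transitcluster} are used: the corollary is a direct consequence of Definition~\ref{def:clustering} together with the definitions of receiver and emitter, and holds verbatim for an arbitrary set $T$.
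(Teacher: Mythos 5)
Your proof is correct and matches the paper's (implicit) treatment: the corollary is stated as an immediate consequence of Definitions~\ref{def:clustering}--\ref{def:transitcluster} with no separate proof, and your double-inclusion unfolding of the receiver/emitter definitions simply makes that explicit, including the valid observation that none of the transit-cluster conditions are actually needed. One small slip in your closing remark: it is $\Pa[\sG]^*(\rec[\sG](T))$, not $\Pa[\sG]^*(T)$, that may contain members of $T$ (the latter excludes $T$ by definition), but your actual double-inclusion argument handles this correctly.
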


We consider some desirable basic properties of transit clusters. 
We delegate the proofs of all results to Appendix~\ref{app:basic_proofs}. First, we must ensure that the graph induced by a transit cluster does not contain cycles.
\begin{restatable}{lem}{isdag}
\label{lem:isdag} 
Graph $\sG^\prime$ induced by a transit cluster $T$ in a DAG $\sG$ is a DAG.
\end{restatable}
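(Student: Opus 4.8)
The plan is to argue by contradiction: suppose $\sG'$ contains a directed cycle, and show this forces a directed cycle in $\sG$, contradicting the assumption that $\sG$ is a DAG. First I would observe that since $\sG'$ is obtained from $\sG$ by deleting the vertices of $T$ and inserting a single new vertex $t$, any directed cycle in $\sG'$ either avoids $t$ entirely — in which case it is already a directed cycle in $\sG[V \setminus T]$ and hence in $\sG$, an immediate contradiction — or it passes through $t$. So we may assume the cycle passes through $t$, and since $t$ appears only once on a cycle, the cycle has the form $t \to w_1 \to \cdots \to w_m \to t$ where $w_1, \dots, w_m \in V \setminus T$.

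Next I would translate the two edges incident to $t$ on this cycle back into $\sG$. By Definition~\ref{def:clustering} (or the Corollary), the edge $t \to w_1$ in $\sG'$ means $w_1 \in \Ch^*_\sG(\emi(T))$, so there is an emitter $e \in \emi(T)$ with $e \to w_1$ in $\sG$; likewise $w_m \to t$ means $w_m \in \Pa^*_\sG(\rec(T))$, so there is a receiver $r \in \rec(T)$ with $w_m \to r$ in $\sG$. Concatenating, we obtain in $\sG$ a directed path $e \to w_1 \to \cdots \to w_m \to r$ from an emitter to a receiver, using only vertices of $(V \setminus T) \cup \{e, r\}$. To close this into a cycle in $\sG$, I need a directed path from $r$ back to $e$ inside the cluster. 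Here conditions~\ref{itm:forallr} and~\ref{itm:foralle} of Definition~\ref{def:transitcluster} are the key: condition~\ref{itm:forallr} gives an emitter $e' \in \De_\sG(r) \cap \emi(T)$, i.e.\ a directed path $r \rightsquigarrow e'$, and condition~\ref{itm:foralle} gives a receiver $r' \in \An_\sG(e) \cap \rec(T)$, i.e.\ a directed path $r' \rightsquigarrow e$.

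The remaining gap is to connect $e'$ to $r'$ so as to splice everything into one cycle in $\sG$; the cleanest route is to avoid needing such a path directly. Instead I would exploit conditions~\ref{itm:parents} and~\ref{itm:children}: all receivers share the same external parents and all emitters share the same external children. Since $w_m \to r$ with $w_m \notin T$, condition~\ref{itm:parents} gives $w_m \to r'$ as well, so in $\sG$ we have $w_m \to r' \rightsquigarrow e$; and since $e \to w_1$ with $w_1 \notin T$, condition~\ref{itm:children} gives $e' \to w_1$, so we have $e' \to w_1$. Then in $\sG$: starting from $r'$, follow $r' \rightsquigarrow e \to w_1 \to \cdots \to w_m \to r'$ — wait, this already closes a cycle $r' \rightsquigarrow e \to w_1 \to \cdots \to w_m \to r'$ in $\sG$ (all intermediate vertices being in $(V\setminus T) \cup \{r', e\}$), contradicting acyclicity of $\sG$. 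I expect the main obstacle to be precisely this bookkeeping: making sure the reconstructed walk in $\sG$ is genuinely a directed cycle (nonempty, returning to its start) and that one does not implicitly re-enter $T$ in a way that breaks the argument, which is exactly what conditions~\ref{itm:parents}--\ref{itm:children} together with~\ref{itm:forallr}--\ref{itm:foralle} are designed to handle; the case $T$ consists of a single vertex, or $\rec(T)$ or $\emi(T)$ is empty, should be checked separately but is straightforward since then $t$ has no incoming (resp.\ outgoing) edges and cannot lie on a cycle at all.
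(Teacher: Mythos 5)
Your proposal is correct and follows essentially the same route as the paper: lift any directed cycle of $\sG'$ through $t$ back into $\sG$ by replacing $t$ with a receiver-to-emitter directed path, using conditions~\ref{itm:parents}--\ref{itm:children} together with \ref{itm:forallr}--\ref{itm:foralle} of Definition~\ref{def:transitcluster}, and derive a contradiction with acyclicity of $\sG$ (the paper phrases this as two opposing directed paths rather than a cycle, and is terser about which receiver--emitter pair is used, which your conditions~\ref{itm:parents}/\ref{itm:foralle} argument makes explicit). The only nitpick is your parenthetical claim that the reconstructed cycle's intermediate vertices lie in $(V\setminus T)\cup\{r',e\}$ --- the segment $r'\rightsquigarrow e$ may pass through other cluster vertices --- but this is immaterial since any closed directed walk in $\sG$ already yields a directed cycle.
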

Next, we note that transit clusters are uniquely defined by their receivers and emitters.
\begin{restatable}{lem}{reem} 
\label{lem:re_em}
Let $T$ and $S$ be transit clusters in a DAG $\sG = (V,E)$. If $\rec(T) = \rec(S)$ and $\emi(T) = \emi(S)$, then $T = S$.
\end{restatable}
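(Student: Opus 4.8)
The plan is to show that the receivers and emitters of a transit cluster $T$ determine $T$ completely, using condition~\ref{itm:no_ronsy_allowed} of Definition~\ref{def:transitcluster} as the main lever. Suppose $T$ and $S$ are transit clusters with $\rec(T) = \rec(S)$ and $\emi(T) = \emi(S)$; call these common sets $R$ and $\mathcal{E}$. Observe first that the subgraph $\sG[T^=]$ depends on $T$ only through $T$ itself and the edge deletions at $R$ and $\mathcal{E}$, so I want to argue that every vertex of $T$ lies in $S$ and vice versa. By symmetry it suffices to prove $T \subseteq S$. Take $t \in T$. By condition~\ref{itm:no_ronsy_allowed} applied to $T$, there is an undirected path $\pi$ in $\sG[T^=]$ from $t$ to some $w \in R \cup \mathcal{E}$.

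The key step is to walk along $\pi$ from $w$ back toward $t$ and show inductively that every vertex on $\pi$ belongs to $S$. The endpoint $w \in R \cup \mathcal{E} \subseteq S$. Now suppose $v$ is a vertex on $\pi$ that we have shown lies in $S$, and let $v'$ be the next vertex on $\pi$ toward $t$, joined to $v$ by an edge of $\sG[T^=]$. The edge between $v$ and $v'$ is present in $\sG[T^=]$, which means it is not one of the deleted edges: it is not an incoming edge of a receiver of $T$ and not an outgoing edge of an emitter of $T$. I claim this forces $v' \in S$. Indeed, if $v' \notin S$, then since $v \in S$ the edge $v \to v'$ or $v' \to v$ (whichever orientation it has in $\sG$) would make $v' \in \Ch^*_\sG(v)$ or $v' \in \Pa^*_\sG(v)$, so $v$ would be an emitter or a receiver of $S$, i.e.\ $v \in \mathcal{E} = \emi(T)$ or $v \in R = \rec(T)$. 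But then the edge between $v$ and $v'$ would be an outgoing edge of an emitter of $T$ or an incoming edge of a receiver of $T$, hence deleted in $\sG[T^=]$, contradicting its presence on $\pi$. Hence $v' \in S$, and by induction $t \in S$. This gives $T \subseteq S$, and the reverse inclusion follows by the same argument with the roles of $T$ and $S$ exchanged, so $T = S$.

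The main obstacle is making the inductive step airtight, specifically handling the orientation of the edge between $v$ and $v'$ correctly: an edge on an undirected path in $\sG[T^=]$ may point either way, and in each case I need to match the ``deleted edge'' condition of $\sG[T^=]$ to the definition of receiver (incoming edge from outside) or emitter (outgoing edge to outside) of $S$. A secondary subtlety is that $\sG[T^=]$ and $\sG[S^=]$ are a priori different graphs; the argument sidesteps this by working entirely inside $\sG[T^=]$ and only using membership in $S$ together with the receiver/emitter characterization for $S$, never the graph $\sG[S^=]$ itself. One should also note the degenerate possibility that $t$ itself lies in $R \cup \mathcal{E}$, in which case $t \in S$ immediately and there is nothing to prove; the induction handles this as the base case.
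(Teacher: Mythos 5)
Your proof is correct and follows essentially the same route as the paper's: both use condition~\ref{itm:no_ronsy_allowed} to obtain a path in $\sG[T^=]$ from an arbitrary vertex of $T$ to a receiver or emitter, and both derive a contradiction at the edge where the path would cross out of $S$, since that crossing would make the $S$-side endpoint a receiver or emitter of $S$ (hence of $T$), whose incident edge is deleted in $\sG[T^=]$. Your vertex-by-vertex induction along the path is just a mild repackaging of the paper's ``last vertex in $S$, next vertex outside $S$'' contradiction, and the orientation case analysis you flag is handled correctly.
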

Intuitively, if clustering is carried out for multiple vertex sets in sequence, the order in which the clustering is carried out should not matter in terms of the graph obtained after the last set has been clustered. This notion is captured by the next two theorems. The first one states that a transit cluster remains a transit cluster even if a disjoint transit cluster is clustered.

\begin{restatable}[Invariance of transit clusters]{theorem}{stilltransitcluster} 
\label{th:stilltransitcluster}
Let $T$ be a transit cluster in $\sG=(V,E)$ and let $\sG^\prime$ be the induced graph where $T$ is replaced by a single vertex $t$. The set $S \subset V \setminus T$ is a transit cluster in $\sG$ if and only if it is a transit cluster in $\sG^\prime$.
\end{restatable}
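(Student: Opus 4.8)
The plan is to reduce the biconditional to a condition-by-condition comparison in Definition~\ref{def:transitcluster}. Since $T$ is a transit cluster, $\sG$ is connected, and the clustering only replaces $T$ by $t$ while leaving every adjacency among $V\setminus T$ (hence among $S$, as $S\subseteq V\setminus T$ and $t\notin S$) untouched. I would first record that $\sG^\prime$ is connected as well: any undirected $\sG$-path between two vertices of $V\setminus T$ becomes an $\sG^\prime$-walk once each of its maximal segments inside $T$, together with its two flanking vertices in $V\setminus T$, is replaced by the detour through $t$. Combined with Lemma~\ref{lem:isdag}, this reduces the claim to showing that conditions \ref{itm:parents}--\ref{itm:foralle} hold for $S$ in $\sG$ if and only if they hold for $S$ in $\sG^\prime$.

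I would then establish two transfer facts. First, $\rec[\sG^\prime](S)=\rec[\sG](S)$ and $\emi[\sG^\prime](S)=\emi[\sG](S)$: a vertex $s\in S$ has a parent (resp.\ a child) outside $S$ in $\sG$ iff it has one in $\sG^\prime$, because a parent of $s$ lying in $T$ is necessarily an emitter of $T$ and is replaced in $\sG^\prime$ by the parent $t\notin S$, whereas neighbours outside $T$ are unchanged. Second, and this is the crux, for all $u,v\in V\setminus T$ one has $v\in\De[\sG](u)$ iff $v\in\De[\sG^\prime](u)$. Forward direction: a directed $u$--$v$ path in $\sG$ enters each maximal segment inside $T$ at a receiver of $T$ and leaves it at an emitter of $T$, so replacing every such segment (with its two flanking vertices) by the step ${}\to t\to{}$ gives a directed $u$--$v$ walk in $\sG^\prime$. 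Backward direction: a directed $u$--$v$ path in $\sG^\prime$ through $t$ has the form $\cdots\to p_{k-1}\to t\to p_{k+1}\to\cdots$, where $p_{k-1}$ has a child $r_0\in\rec[\sG](T)$ and $p_{k+1}$ has a parent $e_0\in\emi[\sG](T)$; by condition \ref{itm:forallr} of $T$ there is a directed path in $\sG$ from $r_0$ to some $e_1\in\emi[\sG](T)$, and by condition \ref{itm:children} of $T$ the vertex $p_{k+1}$ is also a child of $e_1$, so splicing the detour $p_{k-1}\to r_0\to\cdots\to e_1\to p_{k+1}$ into the path yields a directed $u$--$v$ walk in $\sG$. (A directed $\sG^\prime$-path can pass through $t$ only when $\rec[\sG](T)$ and $\emi[\sG](T)$ are both nonempty, so condition \ref{itm:forallr} always applies in this case.)

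The assembly is then mostly bookkeeping. For condition \ref{itm:parents}: if $r\in\rec[\sG](S)$ has a parent in $T$, then every such parent is an emitter of $T$, and by condition \ref{itm:children} of $T$ every emitter of $T$ has $r$ as a child, so $\Pa[\sG](r)\cap T=\emi[\sG](T)$. Hence $\Pa[\sG](r)\setminus S$ equals $\bigl(\Pa[\sG](r)\setminus(S\cup T)\bigr)$ together with $\emi[\sG](T)$ when $r$ has a parent in $T$ and with nothing otherwise, whereas $\Pa[\sG^\prime](r)\setminus S$ is the same set with $\{t\}$ in place of $\emi[\sG](T)$. Since $\emi[\sG](T)$ is a fixed nonempty subset of $T$ and $\{t\}$ is a fixed singleton disjoint from $V$, equality of these sets across $r_i,r_j\in\rec[\sG](S)=\rec[\sG^\prime](S)$ holds in $\sG$ iff it holds in $\sG^\prime$; both amount to saying that the $(V\setminus T)$-parts of the parent sets agree and that a receiver of $S$ has a parent in $T$ precisely when every other receiver of $S$ does. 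Condition \ref{itm:children} is symmetric, using condition \ref{itm:parents} of $T$ and $\Ch[\sG](e)\cap T=\rec[\sG](T)$ for $e\in\emi[\sG](S)$ that has a child in $T$. Condition \ref{itm:no_ronsy_allowed} is literally the same statement in both graphs, since $\sG[S^=]$ and $\sG^\prime[S^=]$ have the same vertex set $S$, the same induced edges, and the same receiver and emitter sets of $S$. Finally, conditions \ref{itm:forallr} and \ref{itm:foralle} for $S$ translate verbatim between $\sG$ and $\sG^\prime$ by the two transfer facts, since descendant and ancestor relations within $V\setminus T$ are preserved.

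The delicate point, and the only place where the hypotheses on $T$ are genuinely used, is the backward direction of the second transfer fact: reconstructing, from a directed $\sG^\prime$-path that passes through $t$, a directed path in $\sG$ that passes through $T$. Conditions \ref{itm:children} and \ref{itm:forallr} of the transit cluster $T$ are exactly what guarantee that, after entering $T$ at some receiver, one can reach an emitter whose out-neighbours outside $T$ include the prescribed exit vertex; everything else is manipulation of Definition~\ref{def:transitcluster} and the clustering construction of Definition~\ref{def:clustering}.
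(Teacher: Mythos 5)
Your proposal is correct and takes essentially the same route as the paper's proof: both establish that $\rec(S)$ and $\emi(S)$ are unchanged by the clustering of $T$ and then verify conditions \ref{itm:parents}--\ref{itm:foralle} of Definition~\ref{def:transitcluster} for $S$ in both graphs. The only difference is one of explicitness: your descendant-preservation transfer fact (together with the connectivity and acyclicity checks for $\sG^\prime$) carefully covers the case where the witnessing directed paths for conditions \ref{itm:forallr} and \ref{itm:foralle} leave $S$, a point the paper's proof passes over with the remark that these conditions ``consider only paths inside $S$''.
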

A complementary result to the previous theorem guarantees that a transit cluster will still be a transit cluster even if its subset is clustered.

\begin{restatable}[Modularity of transit clusters]{theorem}{modularity} 
\label{th:modularity}
Let $T$ be a transit cluster in graph $\sG=(V,E)$ and $\sG^\prime$ the induced graph where $T$ is replaced by a single vertex $t$. Let $S \subset V \setminus T$. The set $\{t\} \cup S$ is a transit cluster in $\sG^\prime$ if and only if $T \cup S$ is a transit cluster in $\sG$.
\end{restatable}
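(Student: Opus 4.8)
The plan is to verify that each of the five conditions in Definition~\ref{def:transitcluster} holds for $U^\prime := \{t\} \cup S$ in $\sG^\prime$ if and only if it holds for $U := T \cup S$ in $\sG$. Write $X := V \setminus U$ for the external vertices, noting that $X = V^\prime \setminus U^\prime$, and put $P := \Pa_{\sG}(\rec(T)) \setminus T$ and $Q := \Ch_{\sG}(\emi(T)) \setminus T$. By conditions~\ref{itm:parents} and~\ref{itm:children} for $T$, the set $P$ consists of the parents outside $T$ common to every receiver of $T$ and $Q$ consists of the children outside $T$ common to every emitter of $T$; since a vertex of $T$ that is not a receiver has no parent outside $T$ (and likewise for emitters), we also have $\Pa_{\sG}^*(T) = P$ and $\Ch_{\sG}^*(T) = Q$, so by Definition~\ref{def:clustering} the edges of $\sG^\prime$ are precisely the edges of $\sG$ lying within $V \setminus T$ together with $p \to t$ for each $p \in P$ and $t \to q$ for each $q \in Q$. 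The case $S = \emptyset$ is immediate: then $U^\prime = \{t\}$, which is a transit cluster because $P \cup Q \neq \emptyset$ by condition~\ref{itm:no_ronsy_allowed} for $T$; so assume $S \neq \emptyset$.

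\emph{Receiver/emitter correspondence.} For $v \in S$ a direct check gives $\Pa_{\sG^\prime}(v) \setminus U^\prime = \Pa_{\sG}(v) \setminus U$ and $\Ch_{\sG^\prime}(v) \setminus U^\prime = \Ch_{\sG}(v) \setminus U$, hence $\rec_{\sG^\prime}(U^\prime) \cap S = \rec_{\sG}(U) \cap S$ and $\emi_{\sG^\prime}(U^\prime) \cap S = \emi_{\sG}(U) \cap S$, with matching parent and child sets outside the cluster. For the new vertex, $t \in \rec_{\sG^\prime}(U^\prime)$ iff $P \cap X \neq \emptyset$; moreover $\rec_{\sG}(U) \cap T$ equals $\rec(T)$ when $P \cap X \neq \emptyset$ and is empty otherwise (any vertex of $T$ having a parent in $X$ is a receiver of $T$, whose parent set outside $T$ is $P$), and in the nonempty case $\Pa_{\sG^\prime}(t) \setminus U^\prime = P \setminus S = \Pa_{\sG}(r) \setminus U$ for every $r \in \rec(T)$; the symmetric statements hold for emitters with $Q$ in place of $P$. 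Thus, reading across $\sG$ and $\sG^\prime$, the collection of parent-sets-outside-the-cluster taken over all receivers is the same, with $t$ accounting for the contribution $\rec_{\sG}(U) \cap T \in \{\emptyset, \rec(T)\}$, and similarly for emitters; conditions~\ref{itm:parents} and~\ref{itm:children} for $U^\prime$ therefore follow immediately from the same conditions for $U$, and conversely.

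\emph{Reachability correspondence.} Next I would prove that for $a, b \in V \setminus T$ one has $b \in \De_{\sG}(a)$ iff $b \in \De_{\sG^\prime}(a)$, and that for $a \in V \setminus T$ one has $t \in \De_{\sG^\prime}(a)$ iff $\De_{\sG}(a) \cap \rec(T) \neq \emptyset$, with the dual statement $b \in \De_{\sG^\prime}(t)$ iff $\An_{\sG}(b) \cap \emi(T) \neq \emptyset$. Going from $\sG$ to $\sG^\prime$ one contracts the portion of a directed walk lying in $T$ to the single vertex $t$, using that such a walk enters $T$ only at receivers of $T$ (hence from vertices of $P$) and leaves $T$ only at emitters of $T$ (hence into vertices of $Q$); since $\sG^\prime$ is a DAG by Lemma~\ref{lem:isdag}, $t$ appears at most once on any walk in $\sG^\prime$, and one reverses the construction by replacing a subwalk $p \to t \to q$ (where $p \in P$, $q \in Q$) with $p \to r \rightsquigarrow e^\ast \to q$ in $\sG$, in which $r \in \rec(T)$ is arbitrary (valid because $p$ is a parent of every receiver of $T$), the vertex $e^\ast \in \emi(T) \cap \De_{\sG}(r)$ exists by condition~\ref{itm:forallr} for $T$, and $e^\ast \to q$ holds by condition~\ref{itm:children} for $T$. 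With this in hand, conditions~\ref{itm:forallr} and~\ref{itm:foralle} for $U^\prime$ and for $U$ become equivalent after splitting on whether the relevant receiver or emitter lies in $S$ or is the new vertex $t$: in the $S$-subcases a witness transfers directly along the reachability correspondence, and in the $t$-subcases one passes between receivers and emitters of $T$ using conditions~\ref{itm:children}, \ref{itm:forallr}, and~\ref{itm:foralle} for $T$, recalling that a vertex of $T$ is a receiver (respectively, emitter) of $U$ precisely when $P \cap X \neq \emptyset$ (respectively, $Q \cap X \neq \emptyset$), which is precisely when $t$ is a receiver (respectively, emitter) of $U^\prime$.

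It remains to treat condition~\ref{itm:no_ronsy_allowed}. Here I would first note that $\sG[U^=]$ and $\sG^\prime[(U^\prime)^=]$ induce the same subgraph on $S$, since the edges among $S$ coincide and, by the receiver/emitter correspondence, so do the edges incident to $S$ that get deleted; the entire difference between the two subgraphs therefore lies at the interface -- which vertices of $S$ are joined to $T$ inside $\sG[U^=]$, versus joined to $t$ inside $\sG^\prime[(U^\prime)^=]$. The plan is to show that these two sets of interface vertices of $S$ coincide, and that every component of $\sG[U^=]$ meeting $T$ contains a receiver or emitter of $U$ if and only if the component of $t$ in $\sG^\prime[(U^\prime)^=]$ contains a receiver or emitter of $U^\prime$; condition~\ref{itm:no_ronsy_allowed} for the transit cluster $T$ is exactly what permits moving, inside $T$, between its receivers and emitters and these interface vertices. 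I expect this to be the main obstacle, because the operation $W \mapsto \sG[W^=]$ deletes a different set of edges inside $T$ depending on whether the receivers and emitters of $T$ are ``exposed'' as receivers and emitters of $U$ -- that is, on whether $P \cap X$ and $Q \cap X$ are nonempty and on whether the relevant vertices of $P$ and $Q$ lying in $S$ are themselves receivers or emitters of $U$ -- so the transfer of undirected connectivity through the collapsed cluster calls for a careful case analysis, and it is here that condition~\ref{itm:no_ronsy_allowed} of $T$, rather than conditions~\ref{itm:parents}, \ref{itm:children}, \ref{itm:forallr}, and~\ref{itm:foralle}, is most essentially used.
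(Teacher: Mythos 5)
Your framework is sound as far as it goes: the receiver/emitter correspondence and the reachability correspondence are correct, and with them your treatment of conditions~\ref{itm:parents}, \ref{itm:children}, \ref{itm:forallr} and \ref{itm:foralle} amounts to a somewhat more systematic version of the paper's own condition-by-condition verification in both directions. The problem is condition~\ref{itm:no_ronsy_allowed}: you do not prove it. You state what you ``would'' show, call it ``the main obstacle,'' and defer the ``careful case analysis'' without carrying it out. Since Theorem~\ref{th:modularity} requires all five conditions of Definition~\ref{def:transitcluster} in both directions, this is a genuine gap, and it sits exactly where the nontrivial work lies: the graphs $\sG[(T\cup S)^=]$ and $\sG^\prime[(\{t\}\cup S)^=]$ delete different edge sets at the interface and inside $T$ depending on whether $\Pa^*_{\sG}(\rec(T))$ and $\Ch^*_{\sG}(\emi(T))$ meet $V\setminus(T\cup S)$ --- precisely the difficulty you flag but do not resolve. (The paper handles this by a case split on whether the vertex under consideration lies in $T$ or in $S$, invoking the remaining conditions of the transit cluster $T$ and of $\{t\}\cup S$ to build the connecting path.)

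For what it is worth, the obstacle is surmountable along the lines you sketch, so the missing piece is work rather than a wrong idea. Inside $T$, the edges removed in forming $\sG[(T\cup S)^=]$ are a subset of those removed in $\sG[T^=]$ (the receivers of $T\cup S$ lying in $T$ form a subset of $\rec(T)$, and likewise for emitters), so condition~\ref{itm:no_ronsy_allowed} for $T$ already connects every $t_i\in T$ to some vertex of $\rec(T)\cup\emi(T)$ within $\sG[(T\cup S)^=]$; what remains is to continue from that vertex to an actual receiver or emitter of $T\cup S$, and, in the converse direction, to lift a connecting path for $t$ or for $s\in S$ in $\sG^\prime[(\{t\}\cup S)^=]$ across the interface edges $p\rightarrow t$ and $t\rightarrow q$, replacing them by $p\rightarrow r$ and $e\rightarrow q$ together with receiver-to-emitter paths inside $T$ supplied by conditions~\ref{itm:forallr} and \ref{itm:foralle}, while checking in each case that the edges used survive the deletions defining the two $[\,\cdot\,^=]$ graphs. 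Until that case analysis is actually written down, the proof is incomplete.
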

Theorem~\ref{th:modularity} helps us to characterize the conditions for valid unions of transit clusters. The following corollary plays a key role in the algorithmic approach to finding transit clusters in Section~\ref{sect:algorithm}.

\begin{restatable}[Union of transit clusters]{cor}{clusterunion}
\label{th:union}
Let disjoint sets $S$ and $T$ be transit clusters in $\sG$. $S \cup T$ is a transit cluster in $\sG$ if
$\Pa^*(\rec(S)) = \Pa^*(\rec(T))$ and $\Ch^*(\emi(S)) = \Ch^*(\emi(T))$.
\end{restatable}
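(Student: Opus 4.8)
The plan is to derive the statement from Theorem~\ref{th:modularity} (modularity) and Theorem~\ref{th:stilltransitcluster} (invariance). Since $T$ is a transit cluster in $\sG$, form the induced graph $\sG^\prime$ in which $T$ is replaced by a single vertex $t$; by Theorem~\ref{th:stilltransitcluster} the disjoint set $S$ remains a transit cluster in $\sG^\prime$, and by Theorem~\ref{th:modularity} the set $S \cup T$ is a transit cluster in $\sG$ if and only if $\{t\} \cup S$ is a transit cluster in $\sG^\prime$. So it suffices to verify the five conditions of Definition~\ref{def:transitcluster} for $\{t\} \cup S$ in $\sG^\prime$.

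The heart of the argument is a short analysis of the hypothesis. By the corollary following Definition~\ref{def:transitcluster} together with conditions~\ref{itm:parents} and~\ref{itm:children} for $T$, in $\sG^\prime$ the vertex $t$ has the single parent set $P_T := \Pa[\sG](r) \setminus T$ (for any $r \in \rec[\sG](T)$, and $\emptyset$ if $T$ has no receiver) and the single child set $Q_T := \Ch[\sG](e) \setminus T$ (for any $e \in \emi[\sG](T)$), while every vertex of $S$ keeps its $\sG$-adjacencies except that edges to or from $T$ are rerouted to or from $t$. Write $P_S := \Pa[\sG](r) \setminus S$ for $r \in \rec[\sG](S)$. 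Intersecting $\Pa^*(\rec(S)) = \Pa^*(\rec(T))$ with $V \setminus (S \cup T)$ gives $P_S \setminus T = P_T \setminus S =: P$; an empty side of the identity forces the opposite receiver set to be empty, so $\rec[\sG](S) = \emptyset \iff \rec[\sG](T) = \emptyset$; and if $\rec[\sG](S) \neq \emptyset$ while $P = \emptyset$ then $P_S \subseteq T$ and $P_T \subseteq S$, from which one assembles (via condition~\ref{itm:foralle} for $S$ and for $T$) an infinite strictly descending ancestor chain alternating between receivers of $S$ and of $T$, contradicting acyclicity. Hence $P = \emptyset \iff \rec[\sG](S) = \emptyset \iff \rec[\sG](T) = \emptyset$, and dually $Q = \emptyset \iff \emi[\sG](S) = \emptyset \iff \emi[\sG](T) = \emptyset$.

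With these facts the five conditions are routine. One verifies that $t$ is a receiver of $\{t\} \cup S$ exactly when $P \neq \emptyset$ and an emitter exactly when $Q \neq \emptyset$; that $\rec[\sG^\prime](\{t\} \cup S) = \{t\} \cup \rec[\sG](S)$ when $P \neq \emptyset$ and is empty otherwise (dually for emitters); and that $\sG^\prime[(\{t\} \cup S)^=]$ restricted to $S$ contains $\sG[S^=]$. Conditions~\ref{itm:parents} and~\ref{itm:children} then hold since every receiver of $\{t\} \cup S$ has parent set $P$ modulo $\{t\} \cup S$ (symmetrically for emitters). For condition~\ref{itm:no_ronsy_allowed}, $t$ is always a receiver or emitter of $\{t\} \cup S$ (were $P = Q = \emptyset$, all of $\rec[\sG](S), \rec[\sG](T), \emi[\sG](S), \emi[\sG](T)$ would be empty, impossible for the nonempty transit cluster $S$), and a vertex of $S$ joins in $\sG[S^=]$ to some $w \in \rec[\sG](S) \cup \emi[\sG](S)$, where nonemptiness of $\rec[\sG](S)$ (resp. $\emi[\sG](S)$) forces $P \neq \emptyset$ (resp. $Q \neq \emptyset$), making $w$ a receiver or emitter of $\{t\} \cup S$. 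Conditions~\ref{itm:forallr} and~\ref{itm:foralle} follow by taking $t$ itself as the required emitter or receiver when the endpoint in question is $t$, and otherwise transporting the directed path guaranteed by conditions~\ref{itm:forallr}/\ref{itm:foralle} for $S$ through the clustering (a directed $\sG$-walk meeting $T$ becomes a directed $\sG^\prime$-walk through $t$). The main obstacle I anticipate is the hypothesis analysis of the second paragraph, especially the acyclicity argument establishing $P = \emptyset \iff \rec[\sG](S) = \emptyset$; once the receiver and emitter sets are pinned down, the rest is bookkeeping.
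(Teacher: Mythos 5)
Your proof is correct, but it distributes the work differently from the paper. You cluster only $T$ and then verify all five conditions of Definition~\ref{def:transitcluster} directly for $\{t\}\cup S$ in $\sG'$, invoking Theorem~\ref{th:modularity} once at the end; this forces you to pin down the receiver and emitter sets of the union explicitly, and to rule out the degenerate case ($\rec(S)\neq\emptyset$ but $P=\emptyset$, i.e.\ $P_S\subseteq T$ and $P_T\subseteq S$) by an infinite-descent argument using conditions~\ref{itm:forallr}/\ref{itm:foralle}, which I checked and which works. The paper instead clusters \emph{both} $S$ and $T$, so that the only thing to verify by hand is that the two-vertex set $\{s,t\}$ is a transit cluster in the doubly clustered graph $\sG''$ — a near-trivial check, since $s$ and $t$ automatically share the outside parent set $P$ and outside child set $Q$, and your degenerate case is excluded for free because it would put the edges $s\rightarrow t$ and $t\rightarrow s$ simultaneously in $\sG''$, contradicting Lemma~\ref{lem:isdag} — and then applies Theorem~\ref{th:modularity} twice to pull the conclusion back to $\sG'$ and then to $\sG$. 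So both arguments hinge on modularity; yours buys a self-contained verification at the cost of the receiver/emitter bookkeeping and the explicit acyclicity argument, while the paper's double-clustering trick makes the direct check essentially immediate. If you wanted to shorten your write-up, clustering $S$ as well before checking the definition would let you delete most of your second and third paragraphs.
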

While other types of unions of transit clusters can result in valid transit clusters, it turns out that the particular union specified by Corollary~\ref{th:union} is the only one we actually need.

In a practical setting, there may be vertices that we cannot or do not want to include in the same cluster with other vertices. Thus the set of possible clusters may be restricted.

\begin{definition}[Restricted transit cluster] \label{def:restricted}
Let $\sG = (V,E)$ be a DAG and $R \subseteq V$. A restricted transit cluster $T\subset V$ with respect to $R$ in $\sG$ is a transit cluster in $\sG$ such that $T \subseteq R$.
\end{definition}
We denote the set of all restricted transit clusters with respect to $R$ by $\Upsilon_{\sG|R} \equiv \{ T \mid T \in \Upsilon_\sG, T \subseteq R \}$. The next definition specifies the operations that can be applied to a DAG so that a transit cluster remains a transit cluster. 

\begin{definition}[Peripheral extension] \label{def:peripheral}
Let $T = \{t_1,\ldots,t_k\}$ be a transit cluster in a DAG $\sG$. Let $\sG^+$ be a DAG obtained from $\sG$ by one of the following operations:
\begin{enumerate}
\item \label{peri:addedge} Add an edge $t_i \rightarrow t_j$ where $t_i, t_j \in T$ and the edge does not create a cycle,
\item \label{peri:addmediator} Replace the edge $t_i \rightarrow t_j$ 
by the path $t_i \rightarrow t_{k+1} \rightarrow t_j$ where $t_{k+1}$ is a new vertex,
\item \label{peri:dividenode} Divide vertex $t_i$ as follows: add a new vertex $t_{k+1}$ such that $\Ch^*_{\sG^+}(t_{k+1}) = \Ch^*_{\sG}(t_i)$, remove all outgoing edges of $t_i$ and add edge  $t_{i} \rightarrow t_{k+1}$.
\item \label{peri:addparent} Add a new vertex $t_{k+1}$ and the edge $t_{k+1} \rightarrow t_i$ and  where $t_i \in T \setminus \rec(T)$,
\item  \label{peri:addchild}  Add a new vertex $t_{k+1}$ and the edge $t_i \rightarrow t_{k+1}$ where $t_i \in T \setminus \emi(T)$.
\end{enumerate}
Adding a receiver or an emitter when $\rec(T) \neq \emptyset$ and $\emi(T) \neq \emptyset$\,\textnormal{:}
\begin{enumerate}
\setcounter{enumi}{5}
\item \label{peri:addrec} Add a new vertex $t_{k+1}$  (a new receiver) such that  $\Pa^*_{\sG^+}(t_{k+1}) = \Pa^*_{\sG}(\rec(T))$, and add an edge $t_{k+1} \rightarrow t_j$ where $t_j \in T$ is on a path from a receiver to an emitter  in $\sG$.
\item \label{peri:addemi} Add a new vertex $t_{k+1}$ (a new emitter) such that  $\Ch^*_{\sG^+}(t_{k+1}) = \Ch^*_{\sG}(\emi(T))$, and add an edge $t_j  \rightarrow t_{k+1}$ where $t_j \in T$ is on a path from a receiver to an emitter in $\sG$.
\item \label{peri:addrecemi} Add two new vertices $t_{k+1}$ (a new receiver) and $t_{k+2}$ (a new emitter) such that  $\Pa^*_{\sG^+}(t_{k+1}) = \Pa^*_{\sG}(\rec(T))$ and  $\Ch^*_{\sG^+}(t_{k+2}) = \Ch^*_{\sG}(\emi(T))$, and add the edge $t_{k+1} \rightarrow t_{k+2}$.
\end{enumerate}
Adding a receiver  when $\rec(T) \neq \emptyset$ and $\emi(T) = \emptyset$\,\textnormal{:}
\begin{enumerate}
\setcounter{enumi}{8}
\item \label{peri:addrec_noemi} Add a new vertex $t_{k+1}$  (a new receiver) such that  $\Pa^*_{\sG^+}(t_{k+1}) = \Pa^*_{\sG}(\rec(T))$.
\end{enumerate}
Adding an emitter when $\rec(T) = \emptyset$ and $\emi(T) \neq \emptyset$\,\textnormal{:}
\begin{enumerate}
\setcounter{enumi}{9}
\item \label{peri:addemi_norec} Add a new vertex $t_{k+1}$ (a new emitter) such that  $\Ch^*_{\sG^+}(t_{k+1}) = \Ch^*_{\sG}(\emi(T))$.
\end{enumerate}
Now define $T^+ = T$ if operation~\ref{peri:addedge} was applied,  $T^+ = T \cup \{ t_{k+1} \}$ if operation~\ref{peri:addmediator}, \ref{peri:dividenode}, \ref{peri:addparent}, \ref{peri:addchild}, \ref{peri:addrec}, \ref{peri:addemi}, \ref{peri:addrec_noemi} or \ref{peri:addemi_norec} was applied, and $T^+ = T \cup \{ t_{k+1}, t_{k+2} \}$ if operation~\ref{peri:addrecemi} was applied. In all cases, $T^+$ is a \emph{peripheral extension} of $T$, and $\sG^+$ is the corresponding peripheral extension graph.
\end{definition}
Operations 1--5 modify the internal structure of the transit cluster. New vertices and edges can be added but new parents for receivers or new children for emitters cannot be added with these operations. Operations 6--10 add new receivers and emitters. Here the allowed operations differ for transit clusters that have only receivers, only emitters and both receivers and emitters. Special conditions are needed to make sure that $T^+$ fulfills the conditions of Definition~\ref{def:transitcluster}. The following theorem shows that a peripheral extension always results in a transit cluster. 

\begin{restatable}{theorem}{peripheralinduced}
\label{th:peripheral_induced} Let $T^{+}$ be a peripheral extension of a transit cluster $T$ in a DAG $\sG = (V,E)$ and let $\sG^\prime$ be the induced graph where $T$ is replaced by a single vertex. Then $T^{+}$ is a transit cluster in the corresponding peripheral extension graph $\sG^+$ and $\sG^\prime$ is the induced graph of $T^{+}$.
\end{restatable}
A complementary result shows that any transit cluster can be constructed iteratively with operations of Definition~\ref{def:peripheral}.

\begin{restatable}{theorem}{peripheralconstruction}
\label{th:peripheral_construction}
Let $\sG^\prime$ be the induced graph of a transit cluster constructed from $\sG$ by replacing set $T$ by a single vertex $t$. Then $\sG$ and $T$ can be constructed by iteratively applying operations of Definition~\ref{def:peripheral} to transit cluster $\{t\}$ in $\sG^\prime$.
\end{restatable}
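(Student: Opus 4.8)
The plan is to prove a slightly strengthened statement by well-founded induction: for every DAG $\sG$ with a transit cluster $T$, the pair $(\sG,T)$ is reachable from $(\sG^\prime,\{t\})$ by a finite sequence of peripheral extensions, where $\sG^\prime$ and $t$ are as in the statement. The induction is on $(|T|,|E_T|)$ in the lexicographic order, with $E_T$ the set of edges of $\sG$ having both endpoints in $T$. The base case $|T|=1$ is immediate: $\sG^\prime$ is then just $\sG$ with the single vertex of $T$ renamed $t$, the empty sequence of extensions works, and $\{t\}$ is always a transit cluster in $\sG^\prime$.

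For the inductive step with $|T|\ge 2$, the crux is a \emph{reduction claim}: there exist a DAG $\sG^-$ and a transit cluster $T^-$ in $\sG^-$ with $(|T^-|,|E_{T^-}|)<_{\mathrm{lex}}(|T|,|E_T|)$ such that $T$ in $\sG$ is a peripheral extension of $T^-$ in $\sG^-$ via one of the operations of Definition~\ref{def:peripheral}. Granting this, Theorem~\ref{th:peripheral_induced} shows that the induced graph of $T^-$ in $\sG^-$ coincides with that of $T$ in $\sG$, namely $\sG^\prime$ with representative vertex $t$; the induction hypothesis then makes $(\sG^-,T^-)$ reachable from $(\sG^\prime,\{t\})$, and appending the witnessing extension yields $(\sG,T)$. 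Note that the secondary coordinate $|E_T|$ is genuinely needed, since reversing operation~\ref{peri:addedge} removes an edge without changing $|T|$.

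Proving the reduction claim amounts to showing that no transit cluster of size at least two is ``irreducible''. I would split on which of $\rec(T)$ and $\emi(T)$ is nonempty; connectedness of $\sG$ together with $T\subsetneq V$ forces at least one to be nonempty, so the cases are ``both'', ``receivers only'', and ``emitters only'', the last being the edge-reversal dual of the second. In each case one strips off a topologically extreme vertex or a redundant edge of $\sG[T]$: (i) a purely internal source or sink of $\sG[T]$ with a single internal neighbour reverses operation~\ref{peri:addmediator}, \ref{peri:dividenode}, \ref{peri:addparent} or \ref{peri:addchild}, where Condition~\ref{itm:no_ronsy_allowed} of Definition~\ref{def:transitcluster} guarantees such a vertex has a non-receiver/non-emitter neighbour and is hence not isolated in $\sG[T^=]$; (ii) a receiver (resp.\ emitter) that is a source (resp.\ sink) of $\sG[T]$ and whose parent set equals the common external set $\Pa^*(\rec(T))$ (resp.\ whose child set equals $\Ch^*(\emi(T))$) reverses operation~\ref{peri:addrec}, \ref{peri:addemi}, \ref{peri:addrecemi}, \ref{peri:addrec_noemi} or \ref{peri:addemi_norec}, with Conditions~\ref{itm:parents} and \ref{itm:children} ensuring that the reduced interface still matches; (iii) if no vertex can be removed but $E_T\neq\emptyset$, some internal edge can be deleted while preserving all five conditions, reversing operation~\ref{peri:addedge}, and Conditions~\ref{itm:forallr} and \ref{itm:foralle} are what force such an edge to exist.

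The main obstacle is case (iii) together with exhaustiveness: ruling out a transit cluster with $|T|\ge 2$ from which neither a vertex nor an edge can be stripped. I expect this to reduce to a careful analysis of a topologically extreme vertex of $\sG[T]$ --- classifying it as receiver, emitter, or purely internal and bounding its in- and out-degree within $\sG[T]$ --- and then invoking Conditions~\ref{itm:no_ronsy_allowed}--\ref{itm:foralle} to eliminate the surviving configurations. A secondary subtlety is that reversing operation~\ref{peri:dividenode} does not merely delete a vertex: $\sG^-$ is obtained from $\sG$ by re-parenting the children of the stripped vertex to the vertex it was split from, so one must directly re-verify that $\sG^-$ is a DAG and that $T^-$ is a transit cluster in $\sG^-$, rather than quoting a vertex-deletion lemma.
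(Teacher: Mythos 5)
Your overall strategy (a well-founded ``peeling'' induction on $(|T|,|E_T|)$, using Theorem~\ref{th:peripheral_induced} to keep the induced graph fixed) is a legitimate alternative to the paper's argument, which runs in the opposite direction: the paper gives an explicit construction order, first creating one receiver--emitter path from $t$ via operations~\ref{peri:dividenode} and \ref{peri:addmediator}, then realizing every remaining receiver--emitter path via operations~\ref{peri:addrec}--\ref{peri:addrecemi} together with \ref{peri:addedge} and \ref{peri:addmediator} according to which endpoints are already built, then attaching all leftover vertices of $T$ via operations~\ref{peri:addparent} and \ref{peri:addchild} (condition~\ref{itm:no_ronsy_allowed} of Definition~\ref{def:transitcluster} guarantees they hang off the part already built), and finally inserting the missing internal edges with operation~\ref{peri:addedge}. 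However, as written your proposal has a genuine gap: the entire mathematical content of the theorem sits in your ``reduction claim'' --- that no transit cluster with $|T|\ge 2$ is irreducible --- and you do not prove it; you explicitly label its exhaustiveness (in particular case~(iii)) as the main obstacle and say you ``expect'' it to follow from an analysis of a topologically extreme vertex. Without that lemma the induction never gets off the ground, so the proposal is a proof plan rather than a proof.

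Moreover, the sketched case split is not yet aligned with the operations closely enough to be waved through. Reversing operation~\ref{peri:addrec} needs a receiver with \emph{no} internal parents and \emph{exactly one} internal child whose parent set is exactly $\Pa^*(\rec(T))$; a general receiver may have internal parents and several internal children, so reaching a reversible configuration already requires the edge-stripping step of case~(iii), and one must show that stripping an internal edge never destroys conditions~\ref{itm:no_ronsy_allowed}--\ref{itm:foralle} (it can, e.g.\ when the edge lies on the only directed path from some receiver to any emitter, or when its removal isolates a non-receiver, non-emitter vertex in $\sG[T^=]$), nor the connectedness of $\sG^-$ that Definition~\ref{def:transitcluster} presupposes. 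Vertices that are simultaneously receivers and emitters, and the asymmetry between deleting a vertex and reversing operation~\ref{peri:dividenode} (a merge, which you do note), also have to be handled explicitly. These are exactly the configurations the paper's forward construction is designed to avoid confronting case by case, which is why completing your route would require substantially more work than the remaining text suggests.
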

So far, our focus has been on transit clusters in general. It turns out that transit clusters can always be constructed from smaller ``building blocks'' which we call transit components. Furthermore, it is much easier to find transit components in a given DAG than transit clusters.

\begin{definition}[Transit component] \label{def:transit_component}
  A transit cluster $T$ in a DAG $\sG$ is a transit component if $T$ is connected in $\sG[T]$.
\end{definition}
We extend the notion of restriction to transit components: the set of all restricted transit components of a DAG $\sG$ with respect to the set $R$ is denoted by $\sT_{\sG|R}$, and when $R = V$, i.e., when the aforementioned set corresponds to components without restriction, we simply write $\sT_\sG$. The intuitive idea behind the purpose of transit components is encapsulated in the following theorem.

\begin{restatable}[Transit cluster decomposition]{theorem}{clusterdecomposition}
\label{thm:cluster_is_component_union}
Let $T$ be a transit cluster in a DAG $\sG = (V,E)$. If $T$ is not a transit component, then there exists a transit cluster $S$ and a transit component $R$ such that $T = S \cup R$ and $S \cap R = \emptyset$.
\end{restatable}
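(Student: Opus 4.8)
The plan is to produce the decomposition by hand. Since $T$ is a transit cluster but, by Definition~\ref{def:transit_component}, not a transit component, $\sG[T]$ is disconnected. I would pick any one connected component $R$ of $\sG[T]$ and set $S = T \setminus R$; then $S$ is the (non-empty) union of the remaining components, $T = S \cup R$, $S \cap R = \emptyset$, and $R$ is connected in $\sG[R]$. So the whole theorem reduces to one claim: \emph{every non-empty union $W$ of connected components of $\sG[T]$ is a transit cluster in $\sG$.} Indeed, taking $W = R$ then makes $R$ a transit component, and taking $W = S$ makes $S$ a transit cluster.

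The engine of the claim is the separation property that $\sG[T]$ has no edge between $W$ and $T \setminus W$ (adjacent vertices of $\sG[T]$ share a component). I would first extract its two consequences. (i) For $v \in W$: $\Pa(v) \cap T \subseteq W$ and $\Ch(v) \cap T \subseteq W$, hence $\Pa(v) \setminus W = \Pa(v) \setminus T$ and $\Ch(v) \setminus W = \Ch(v) \setminus T$, and therefore $\rec(W) = \rec(T) \cap W$ and $\emi(W) = \emi(T) \cap W$. (ii) Any path of $\sG[T]$ that begins in $W$ stays in $W$; applied to the subgraph $\sG[T^=]$ this, together with a check of which edges are deleted, gives that every path of $\sG[T^=]$ with all vertices in $W$ is a path of $\sG[W^=]$ (in fact $\sG[W^=] = \sG[T^=][W]$).

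Conditions~1 and~2 of Definition~\ref{def:transitcluster} for $W$ are then immediate from those for $T$, since a receiver of $W$ is a receiver of $T$ and $\Pa(r_i)\setminus W = \Pa(r_i)\setminus T$ (dually for emitters). For Condition~3, given $t_i \in W$ I would take the undirected $\sG[T^=]$-path from $t_i$ to some $w \in \rec(T) \cup \emi(T)$ supplied by Condition~3 for $T$; by (ii) this path stays in $W$ and is a path of $\sG[W^=]$, and $w \in W \cap (\rec(T)\cup\emi(T)) = \rec(W) \cup \emi(W)$.

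The part I expect to need the most care is Conditions~4 and~5, since the witnessing directed paths supplied for $T$ live in $\sG$ and may leave $T$. I would handle Condition~4 (Condition~5 is symmetric). If $\emi(T) = \emptyset$ then $\emi(W) = \emptyset$ and there is nothing to check; otherwise, take $r \in \rec(W) \subseteq \rec(T)$ and a directed path $r = u_0 \to \cdots \to u_n = e'$ in $\sG$ with $e' \in \emi(T)$ from Condition~4 for $T$. If all $u_i \in W$, then $e' \in \emi(T) \cap W = \emi(W)$ and $e' \in \De(r)$, done. Otherwise let $u_j \notin W$ be the first such vertex; then $j \ge 1$ and the edge $u_{j-1} \to u_j$ has $u_{j-1} \in W$, so by the separation property $u_j \notin T$, which makes $u_{j-1}$ a vertex of $W$ with a child outside $T$, i.e.\ $u_{j-1} \in \emi(T) \cap W = \emi(W)$, and $u_{j-1} \in \De(r)$ via the path's prefix. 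Either way $r$ has an emitter of $W$ as a descendant, so Condition~4 holds for $W$; this completes the claim and the proof.
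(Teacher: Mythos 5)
Your proof is correct and takes essentially the same route as the paper: pick a connected component $R$ of $\sG[T]$, set $S = T \setminus R$, and verify the five conditions of Definition~\ref{def:transitcluster} for each part using the absence of edges between the parts in $\sG[T]$. If anything, your treatment of conditions~4 and~5 is more careful than the paper's terse justification, since you explicitly handle the case where the witnessing directed path leaves $T$ and show that an earlier vertex on the path is then already an emitter (resp.\ receiver) of the part in question.
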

In simpler terms, any transit cluster can always be constructed iteratively from disjoint transit components. The transit cluster decomposition plays a key role in finding transit clusters.

\section{Clustering Algorithms} \label{sect:algorithm}

To find valid vertex clusters, we can always apply a naive approach and enumerate every vertex subset of a DAG $\sG$ and determine whether the conditions of Definition~\ref{def:transitcluster} hold. However, this approach quickly becomes infeasible with larger graphs. In light of Theorem~\ref{thm:cluster_is_component_union}, we can instead start by constructing the set of all transit components, and then obtain the set of all transit clusters by applying Corollary~\ref{th:union} to the components. We begin by presenting a sound and complete algorithm for finding transit components that exploit the structure of the graph by enumerating a set of candidate receiver and emitter sets. 

Algorithm~\ref{alg:find_components} (\compfinder{}) starts by constructing the candidate sets for potential receivers and emitters, $\sV_{\text{Ch}}$ and $\sV_{\text{Pa}}$, respectively on lines~\ref{line:construct_receiver_candidates} and \ref{line:construct_emitter_candidates}. Lemma~\ref{lem:re_em} shows that we only need to consider the receivers and emitters to uniquely specify a transit component. Next, we iterate over all pairs of the candidates on lines~\ref{line:loop_receiver_candidates} and \ref{line:loop_emitter_candidates}. Lines~\ref{line:restrict_an}--\ref{line:has_ch_or_pa} restrict the candidates into mutually ancestral sets $Z$ and $W$, and further exclude those candidates that cannot satisfy the properties of a transit cluster. If at least one of the obtained candidate sets $Z$ and $W$ is nonempty on line~\ref{line:re_or_em_nonempty}, we move on to construct a candidate transit component $A$ on line~\ref{line:defineA}. If $A$ obeys the restriction defined by $R$ on line~\ref{line:A_is_restricted}, we move on to the iteration over the components of $A$ in the induced subgraph $\sG[A]$ on line~\ref{line:component_iteration}. Lines~\ref{line:component_re} and \ref{line:component_em} define those members of the current candidates $Z$ and $W$ that belong to the current component $A_k$ as $Z_k$ and $W_k$ respectively. Finally, we determine whether the members of $Z_k$ have the same parents, and whether members of $W_k$ have the same children during lines~\ref{line:common_pa}--\ref{line:re_em_verify}. If this is the case, a new transit component of $\sG$ has been found, and it is added to the set $\sA$ of components found so far. Finally, this set is returned on line~\ref{line:return_components} after the outermost iterations have been completed. 

\begin{algorithm}[!ht]
  \begin{algorithmic}[1]
  \Function{FindTrComp}{$\sG, R$}
    \State $\sA \gets \emptyset$
    \State $\sV_{\text{Ch}} \gets \{ \Ch^*(v) \cap R \mid v \in V \}$ \label{line:construct_receiver_candidates}
    \State $\sV_{\text{Pa}} \gets \{ \Pa^*(v) \cap R \mid v \in V \}$ \label{line:construct_emitter_candidates}
    \ForAll{$V_i \in \sV_{\text{Ch}}$} \label{line:loop_receiver_candidates}
      \ForAll{$V_j \in \sV_{\text{Pa}}$} \label{line:loop_emitter_candidates}
        \State \textbf{if} $V_j \neq \emptyset$ \textbf{then} $Z \gets V_i \cap \An[\sG](V_j)$ \textbf{else} $Z \gets V_i$ \label{line:restrict_an}
        \State \textbf{if} $V_i \neq \emptyset$ \textbf{then} $W \gets V_j \cap \De[\sG](V_i)$ \textbf{else} $W \gets V_j$ \label{line:restrict_de}
        \If{$(\exists z_k\in Z \text{ s.t. } \Pa[\sG]^*(z_k) = \emptyset) \vee (\exists w_k\in W \text{ s.t. } \Ch[\sG]^*(w_k) = \emptyset)$} \label{line:has_ch_or_pa}
          \State \bf{continue}
        \EndIf
        \If{$Z \neq \emptyset \vee W \neq \emptyset$} \label{line:re_or_em_nonempty}
          \State $A \gets \Co[{\sG[\overline {Z},\underline {W}]}](Z \cup W)$ \label{line:defineA}
          \If{$A \subseteq R$} \label{line:A_is_restricted}
            \ForAll{$A_k \in \sC(\sG[A])$} \label{line:component_iteration}
              \State $Z_k \gets Z \cap A_k$ \label{line:component_re}
              \State $W_k \gets W \cap A_k$ \label{line:component_em}
              \State $Z_{\text{Pa}} \gets \bigcap_{z \in Z_k} \Pa[\sG]^*(z) \setminus A_k$ \label{line:common_pa}
              \State $W_{\text{Ch}} \gets \bigcap_{w \in W_k} \Ch[\sG]^*(w) \setminus A_k$ \label{line:common_ch}
              \If{$Z_{\text{Pa}} = \Pa^*(Z_k) \setminus A_k \wedge W_{\text{Ch}} = \Ch^*(W_k) \setminus A_k$} \label{line:re_em_verify}
                \State $\sA \gets \sA\, \cup \{A_k\}$ \label{line:add_component}
              \EndIf
            \EndFor
          \EndIf
        \EndIf
      \EndFor
    \EndFor
    \State \Return $\mathcal{A}$ \label{line:return_components}
  \EndFunction
  \end{algorithmic}
  \caption{An algorithm for finding all (restricted) transit components of a DAG. The inputs are a DAG $\sG = (V,E)$ and a restriction set $R$ whose members are the only vertices in $V$ that are allowed to belong to any transit component. Returns the set of all restricted transit components in $\mathcal{G}$ with respect to $R$.}
  \label{alg:find_components}
\end{algorithm}

We proceed to show that \compfinder{} always terminates.
\begin{restatable}{lem}{compterminates}
\label{lem:comp_terminates}
\compfinder{} always terminates for valid inputs $\sG$ and $R$.
\end{restatable}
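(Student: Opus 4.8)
The plan is to argue that \compfinder{} is built entirely from loops over finite collections whose bodies each perform only finitely many terminating operations, so the whole procedure halts and reaches the return on line~\ref{line:return_components}. For a valid input, $\sG = (V,E)$ is a finite DAG and $R \subseteq V$; in particular $V$ and $E$ are finite. First I would observe that the collections $\sV_{\text{Ch}}$ and $\sV_{\text{Pa}}$ constructed on lines~\ref{line:construct_receiver_candidates}--\ref{line:construct_emitter_candidates} are finite: each is a \emph{set} of subsets of $V$ indexed by the vertices of $V$, hence has cardinality at most $|V|$, and each is assembled in finite time because every $\Ch^*_\sG(v)$ and $\Pa^*_\sG(v)$ is read directly off the finite edge set $E$. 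Consequently the two nested loops on lines~\ref{line:loop_receiver_candidates} and \ref{line:loop_emitter_candidates} execute their body at most $|\sV_{\text{Ch}}|\cdot|\sV_{\text{Pa}}| \le |V|^2$ times.

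Next I would check that a single pass through this body terminates. The elementary set operations (intersection, union, set difference on subsets of $V$), the emptiness tests on lines~\ref{line:has_ch_or_pa} and \ref{line:re_or_em_nonempty}, and the restriction test $A \subseteq R$ on line~\ref{line:A_is_restricted} are trivially finite. The auxiliary operators $\An[\sG]$, $\De[\sG]$ used on lines~\ref{line:restrict_an}--\ref{line:restrict_de}, the operator $\Co[{\sG[\overline{Z},\underline{W}]}]$ on line~\ref{line:defineA}, and the component decomposition $\sC(\sG[A])$ on line~\ref{line:component_iteration} are all computed by standard graph traversals on finite (edge-)induced subgraphs of $\sG$, hence terminate. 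The \textbf{continue} statement following line~\ref{line:has_ch_or_pa} merely advances to the next iteration of the inner loop and so cannot cause nontermination. Finally, the innermost loop on line~\ref{line:component_iteration} ranges over $\sC(\sG[A])$, which partitions $A$ into at most $|A| \le |V|$ vertex sets, and its body again consists only of finitely many finite set operations (lines~\ref{line:component_re}--\ref{line:add_component}). Combining these facts---finitely many iterations of each of the three nested loops, with each loop body terminating---yields termination of \compfinder{}.

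There is essentially no hard step: the only points that need care are confirming that every helper invoked inside the body ($\An[\sG]$, $\De[\sG]$, $\Co$, and the component decomposition $\sC$) is itself a finite computation on a finite graph, and that the outer index collections $\sV_{\text{Ch}}$ and $\sV_{\text{Pa}}$ are genuine finite sets rather than objects that could be revisited indefinitely. Both follow immediately from the finiteness of $V$ and $E$, so the bulk of the write-up is just bookkeeping over the algorithm's control flow.
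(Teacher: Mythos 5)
Your argument is correct and follows essentially the same route as the paper's proof: finiteness of the candidate collections $\sV_{\text{Ch}}$ and $\sV_{\text{Pa}}$, finiteness of the component decomposition in the innermost loop, and the observation that all remaining operations are finite, nonrecursive computations on a finite graph. Your write-up is simply a more detailed bookkeeping of the same idea.
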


Lemma~\ref{lem:comp_terminates} allows us to consider the output of \compfinder{}. To show soundness of the algorithm, me must show that the output set only contains restricted transit components.
\begin{restatable}[Soundness of \compfinder{}]{theorem}{compfindersound}
Let $\sG = (V,E)$ be a DAG, $R \subseteq V$ and $\sA = \compfinder(\sG, R)$, then $\sA \subseteq \sT_{\sG|R}.$
\end{restatable}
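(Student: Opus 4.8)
The plan is to show that every set $A_k$ added to $\sA$ on line~\ref{line:add_component} satisfies all five conditions of Definition~\ref{def:transitcluster} and is connected in $\sG[A_k]$ (so that it is a transit \emph{component}), and moreover that $A_k \subseteq R$. The restriction $A_k \subseteq R$ is immediate: the check on line~\ref{line:A_is_restricted} guarantees $A \subseteq R$, and $A_k \subseteq A$. Connectedness in $\sG[A_k]$ is also immediate, since $A_k$ is by construction a component of $\sG[A]$ on line~\ref{line:component_iteration}, hence connected in $\sG[A_k] = (\sG[A])[A_k]$. So the heart of the matter is verifying the five transit-cluster conditions.

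First I would pin down the receivers and emitters of $A_k$. The key claim is that $\rec[\sG](A_k) = Z_k$ and $\emi[\sG](A_k) = W_k$, where $Z_k = Z \cap A_k$ and $W_k = W \cap A_k$ as on lines~\ref{line:component_re}--\ref{line:component_em}. To see $\rec[\sG](A_k) \subseteq Z_k$: suppose $v \in A_k$ has a parent $p \notin A_k$. By the definition of $A = \Co[{\sG[\overline Z, \underline W]}](Z \cup W)$ on line~\ref{line:defineA}, the only way $v$ can be reached from $Z \cup W$ in $\sG[\overline Z,\underline W]$ while its parent edge from $p$ is absent from that edge-subgraph is if $v \in Z$ (incoming edges of $Z$ are removed) --- otherwise the edge $p \to v$ would be present in $\sG[\overline Z, \underline W]$ and would force $p \in A$, and then $p$ would lie in the same component $A_k$ of $\sG[A]$ as $v$, contradiction. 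Hence $v \in Z \cap A_k = Z_k$. The reverse inclusion $Z_k \subseteq \rec[\sG](A_k)$ needs the filtering on lines~\ref{line:has_ch_or_pa} and the fact that $Z = V_i \cap \An(V_j)$ with $V_i = \Ch^*(v)\cap R$ for some $v$: every $z \in Z$ has $\Pa^*(z) \neq \emptyset$ (line~\ref{line:has_ch_or_pa}), and one argues that such a parent must lie outside $A_k$ --- this uses that $Z_k$'s members all have the same parents outside $A_k$ (the check on line~\ref{line:re_em_verify}) together with the construction of $Z$ from a single vertex's children. The symmetric argument handles $\emi[\sG](A_k) = W_k$.

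With that identification in hand, conditions~\ref{itm:parents} and \ref{itm:children} follow directly from the verification on line~\ref{line:re_em_verify}: $Z_{\text{Pa}} = \bigcap_{z\in Z_k}\Pa^*(z)\setminus A_k$ is required to equal $\Pa^*(Z_k)\setminus A_k = \bigcup_{z\in Z_k}\Pa^*(z)\setminus A_k$, which forces all members of $Z_k = \rec(A_k)$ to have the same parents outside $A_k$; symmetrically for $W_k = \emi(A_k)$ and condition~\ref{itm:children}. Condition~\ref{itm:no_ronsy_allowed} --- every vertex of $A_k$ is connected by an undirected path in $\sG[A_k^=]$ to a receiver or emitter --- is the step I expect to be the main obstacle, because $A_k$ is connected only in $\sG[A_k]$, not necessarily in $\sG[A_k^=]$; one must argue that the edges removed to form $\sG[A_k^=]$ are exactly the incoming edges of $Z_k$ and outgoing edges of $W_k$, and that removing them cannot disconnect an interior vertex from \emph{all} receivers and emitters. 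The cleanest route is: take $v \in A_k$; since $A_k$ is a component of $\sG[A]$ and $A \subseteq \Co[{\sG[\overline Z,\underline W]}](Z\cup W)$, there is an undirected path in $\sG[\overline Z,\underline W]$ from $v$ to some vertex of $Z \cup W$; restrict this path to its portion inside $A_k$ and show its edges survive in $\sG[A_k^=]$ (an edge of $\sG[\overline Z, \underline W]$ with both endpoints in $A_k$ is not an incoming edge of $Z_k \subseteq Z$ nor an outgoing edge of $W_k \subseteq W$, hence survives), yielding the required undirected connection. Finally, conditions~\ref{itm:forallr} and \ref{itm:foralle} --- the ancestrality requirements --- come from lines~\ref{line:restrict_an}--\ref{line:restrict_de}: when $W_k \neq \emptyset$ one has $W \subseteq \De(V_i) \supseteq \De(Z)$ while $Z \subseteq \An(V_j) \supseteq \An(W)$, and combined with condition~\ref{itm:no_ronsy_allowed} and the filtering on line~\ref{line:has_ch_or_pa} (which prevents a receiver with no usable directed route) one deduces that every receiver of $A_k$ has a descendant emitter in $A_k$ and every emitter has an ancestor receiver in $A_k$; the degenerate cases $Z = \emptyset$ or $W = \emptyset$ make conditions~\ref{itm:forallr}–\ref{itm:foralle} vacuous, which matches the \texttt{continue} on line~\ref{line:has_ch_or_pa} and the nonemptiness guard on line~\ref{line:re_or_em_nonempty}. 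Assembling these, $A_k$ is a restricted transit component, so $\sA \subseteq \sT_{\sG|R}$.
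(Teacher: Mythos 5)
Your overall architecture (verify the five conditions of Definition~\ref{def:transitcluster} for each $A_k$ added on line~\ref{line:add_component}, plus $A_k \subseteq R$ and connectedness) matches the paper's, but the claim you make the heart of the argument --- that $\rec[\sG](A_k) = Z_k$ and $\emi[\sG](A_k) = W_k$ --- is false, and this is a genuine gap. Only the inclusions $\rec[\sG](A_k) \subseteq Z_k$ and $\emi[\sG](A_k) \subseteq W_k$ hold. Concretely, let $V = \{p,z,m,w,y\}$ with edges $p \rightarrow z$, $p \rightarrow m$, $z \rightarrow m$, $z \rightarrow w$, $w \rightarrow y$, take $R = V$ and the candidate pair $V_i = \Ch^*(p) = \{z,m\}$, $V_j = \Pa^*(y) = \{w\}$. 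Then $Z = \{z\}$, $W = \{w\}$, the graph $\sG[\overline Z, \underline W]$ retains $p \rightarrow m$, $z \rightarrow m$, $z \rightarrow w$, so $A = A_k = \{p,z,m,w\}$, and the test on line~\ref{line:re_em_verify} passes because $\Pa^*(z) \setminus A_k = \emptyset$ makes both sides empty. The algorithm outputs $A_k$, and $A_k$ is indeed a transit component --- but with $\rec(A_k) = \emptyset$ and $\emi(A_k) = \{w\}$, so $Z_k = \{z\}$ is not the receiver set. Hence the reverse inclusion you sketch (``such a parent must lie outside $A_k$'') cannot be proved, and every step of your proof that routes through the identification is affected: for condition~\ref{itm:no_ronsy_allowed} your surviving path only reaches a member of $Z_k \cup W_k$, which need not be a receiver or emitter; for conditions~\ref{itm:forallr} and \ref{itm:foralle} the descendant in $W$ (resp.\ ancestor in $Z$) you produce via lines~\ref{line:restrict_an}--\ref{line:restrict_de} must additionally be shown to be an actual emitter (resp.\ receiver) of $A_k$, which your sketch does not do. There is also a smaller slip in your forward-inclusion argument: the edge $p \rightarrow v$ can be absent from $\sG[\overline Z,\underline W]$ not only because $v \in Z$ but also because $p \in W$; that case is easily repaired (then $p \in W \subseteq A$ and the edge is still present in $\sG[A]$, forcing $p \in A_k$), but it should be stated.

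The repairable version of your plan is: establish only $\rec(A_k) \subseteq Z_k$ and $\emi(A_k) \subseteq W_k$; observe that the equality on line~\ref{line:re_em_verify} then forces either $Z_k = \rec(A_k)$ or $\rec(A_k) = \emptyset$ (if one member of $Z_k$ has no parents outside $A_k$, the intersection is empty, hence so is the union, hence no member of $Z_k$ --- and thus no vertex of $A_k$ --- is a receiver), and symmetrically for emitters; this suffices for conditions~\ref{itm:parents} and \ref{itm:children}, but conditions~\ref{itm:no_ronsy_allowed}--\ref{itm:foralle} must then be argued without assuming that $Z_k$ and $W_k$ are exactly the receivers and emitters, handling in particular the degenerate situations illustrated above. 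Note that the paper's proof deliberately avoids your identification: it explicitly remarks that the construction does not guarantee that $Z$ and $W$ are the receiver and emitter sets, derives conditions~\ref{itm:no_ronsy_allowed}--\ref{itm:foralle} from the construction of $A$ on line~\ref{line:defineA} and its decomposition into components, and uses line~\ref{line:re_em_verify} only for conditions~\ref{itm:parents} and \ref{itm:children}.
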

Conversely for completeness of \compfinder{}, we must show that any restricted transit component of a DAG will be found by the algorithm.
\begin{restatable}[Completeness of \compfinder{}]{theorem}{compfindercomplete}
Let $\sG = (V,E)$ be a DAG, $R \subseteq V$, and $\sA = \compfinder(\sG, R)$ then $\sT_{\sG|R} \subseteq \sA$.
\end{restatable}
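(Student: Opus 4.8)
\textbf{Proof plan for completeness of \compfinder{}.}
The plan is to take an arbitrary restricted transit component $T \in \sT_{\sG|R}$ and exhibit a specific iteration of the nested loops of \compfinder{} that adds $T$ to $\sA$. First I would set $r$ to be an arbitrary receiver of $T$ and $e$ to be an arbitrary emitter of $T$ (one of these may fail to exist; I will handle the degenerate cases separately below). By Definition~\ref{def:transitcluster}\,\ref{itm:parents}, all receivers of $T$ share the same parent set outside $T$; pick a vertex $u \in \Pa^*_\sG(r)$ (which exists since $r$ is a receiver, and lies outside $T$), and observe that $\rec(T) \subseteq \Ch^*_\sG(u) \cap R$ because every receiver has $u$ as a parent and $T \subseteq R$. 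Thus the candidate set $V_i = \Ch^*_\sG(u) \cap R \in \sV_{\text{Ch}}$ contains $\rec(T)$. Symmetrically, picking $w \in \Ch^*_\sG(e)$ outside $T$ via Definition~\ref{def:transitcluster}\,\ref{itm:children}, the candidate $V_j = \Pa^*_\sG(w) \cap R \in \sV_{\text{Pa}}$ contains $\emi(T)$. I would then argue that on this iteration, lines~\ref{line:restrict_an}--\ref{line:restrict_de} produce $Z$ and $W$ with $\rec(T) \subseteq Z$ and $\emi(T) \subseteq W$: indeed every receiver is an ancestor of every emitter by condition~\ref{itm:forallr} (so $\rec(T) \subseteq \An[\sG](V_j)$ since $\emi(T)\subseteq V_j$), and every emitter is a descendant of some receiver by condition~\ref{itm:foralle} (so $\emi(T) \subseteq \De[\sG](V_i)$), using conditions~\ref{itm:forallr} and \ref{itm:foralle} of Definition~\ref{def:transitcluster}.

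The heart of the argument is to show that on this iteration, one of the components $A_k$ of $\sG[A]$ (with $A = \Co[{\sG[\overline Z, \underline W]}](Z \cup W)$) equals exactly $T$, and that the verification on line~\ref{line:re_em_verify} succeeds for $A_k = T$. I would first check $Z \cap T = \rec(T)$ and $W \cap T = \emi(T)$: the inclusions $\supseteq$ were just established, and for $\subseteq$, any $z \in Z \cap T$ lies in $\Ch^*_\sG(u)$, so $z$ has the parent $u \notin T$, making $z$ a receiver (symmetrically for $W \cap T$). Next, because $T$ is a transit component it is connected in $\sG[T]$; moreover removing incoming edges to $\rec(T)$ and outgoing edges from $\emi(T)$ only removes edges with one endpoint outside $T$ (by definition of receiver/emitter), so $\sG[\overline Z, \underline W][T] = \sG[T^=]$, and by condition~\ref{itm:no_ronsy_allowed} every vertex of $T$ is connected to $\rec(T) \cup \emi(T)$ in $\sG[T^=]$. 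Hence $T \subseteq \Co[{\sG[\overline Z, \underline W]}](Z \cup W) = A$. The reverse inclusion $A \cap (\text{component of }T) \subseteq T$ requires showing no vertex outside $T$ gets pulled into $T$'s connected component: I would argue that any edge leaving $T$ in $\sG[\overline Z, \underline W]$ must originate at an emitter (outgoing edges of emitters to $V\setminus T$ were deleted) or terminate at a receiver (incoming edges of receivers from $V\setminus T$ were deleted), hence no such edge survives, so $T$ is a union of connected components of $\sG[\overline Z, \underline W]$; combined with connectedness of $\sG[T^=]$, $T$ is a single component of $\sG[A]$, i.e. $T = A_k$ for some $k$. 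Then $Z_k = Z \cap T = \rec(T)$, $W_k = W \cap T = \emi(T)$, and the line~\ref{line:re_em_verify} test $Z_{\text{Pa}} = \Pa^*(Z_k)\setminus A_k$ holds precisely because condition~\ref{itm:parents} says all receivers share their external parents, so the intersection over $z \in \rec(T)$ of $\Pa^*_\sG(z)\setminus T$ equals $\Pa^*_\sG(\rec(T))\setminus T$; symmetrically for $W_{\text{Ch}}$. Also $A = A_k \subseteq R$ since $T \subseteq R$, so line~\ref{line:A_is_restricted} passes. Therefore $T$ is added on line~\ref{line:add_component}.

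Finally I would dispatch the degenerate cases. If $\rec(T) = \emptyset$ (so by condition~\ref{itm:foralle} vacuously, and $\emi(T) \neq \emptyset$ since $T$ nonempty and condition~\ref{itm:no_ronsy_allowed} forces connection to a receiver or emitter), then on the iteration with $V_i = \emptyset$ (which is in $\sV_{\text{Ch}}$ whenever some vertex has no children in $R$, or more carefully: I should check $\emptyset \in \sV_{\text{Ch}}$; if not, a small separate argument is needed) line~\ref{line:restrict_de} sets $W \gets V_j$, and with $V_j = \Pa^*_\sG(w)\cap R \supseteq \emi(T)$ for suitable $w$, the same analysis goes through with $Z = \emptyset$; symmetrically if $\emi(T) = \emptyset$. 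The main obstacle I anticipate is precisely this bookkeeping around empty candidate sets and the exact definition of $\Co[{\sG[\overline Z, \underline W]}]$ — specifically, verifying rigorously that no ``foreign'' vertex outside $T$ can be connected to $Z \cup W$ through surviving edges in the edge-subgraph, which is where conditions~\ref{itm:parents} and \ref{itm:children} (shared external neighbors) interact subtly with the component structure. The other conditions (\ref{itm:no_ronsy_allowed}--\ref{itm:foralle}) are exactly what is needed to guarantee $T$ is captured as a single component rather than split or merged, so the proof is really a matter of matching each algorithmic line to the corresponding clause of Definition~\ref{def:transitcluster}.
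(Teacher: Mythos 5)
Your overall strategy is the same as the paper's (exhibit a candidate pair $(V_i,V_j)$ whose iteration captures $T$), and several of your intermediate steps are sound: $Z \cap T = \rec(T)$, $W \cap T = \emi(T)$, and $T \subseteq A$ via condition~\ref{itm:no_ronsy_allowed}. The gap is at the decisive step where you claim that $T$ appears as a single component of $\sG[A]$ on the iteration determined by an \emph{arbitrary} external parent $u$ of a receiver and an \emph{arbitrary} external child $w$ of an emitter. Your justification only shows that no edge crosses the boundary of $T$ in the edge-deleted graph $\sG[\overline{Z},\underline{W}]$; but line~\ref{line:component_iteration} computes the components of the induced subgraph $\sG[A]$, which retains \emph{all} edges of $\sG$ among $A$ — in particular edges into receivers of $T$ and out of emitters of $T$ that were deleted only for the purpose of constructing $A$ on line~\ref{line:defineA}. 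Those edges can glue vertices of $A \setminus T$ onto $T$. The paper's own Figure~\ref{fig:minimal_representative} is a counterexample to your step: for $T=\{r_1,e_1\}$, choosing $u=x$ and $w=y$ gives $V_i=\{r_1,r_2,r_3\}$, $V_j=\{e_1,e_2\}$, hence $Z=\{r_1,r_2,r_3\}$, $W=\{e_1,e_2\}$ and $A=\{r_1,r_2,r_3,e_1,e_2\}$; in $\sG[A]$ the edges $r_3\rightarrow r_2\rightarrow r_1$ and $e_1\rightarrow e_2$ are present, so the unique component is all of $A$, the check on line~\ref{line:re_em_verify} passes for it, and that iteration adds $\{r_1,r_2,r_3,e_1,e_2\}$ — not $T$. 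So the specific iteration you exhibit need not find $T$ at all.

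This is precisely why the paper does not take an arbitrary pair but makes the ``minimal representative choice'': among all $v_i,v_j$ with $\rec(T)\subseteq\Ch^*(v_i)$ and $\emi(T)\subseteq\Pa^*(v_j)$, it picks a pair for which $V_i\cap\An(V_j)$ and $V_j\cap\De(V_i)$ are minimal (in the example, $v_i=r_2$, $v_j=e_2$), and then uses that minimality in a contradiction argument to rule out any $a\in A_k\setminus T$ adjacent to $T$ in $\sG[A]$. Your proof is missing this idea, and without it the ``foreign vertex'' step fails, as shown above. Two smaller slips: you invoke ``connectedness of $\sG[T^=]$'', which Definition~\ref{def:transit_component} does not guarantee (only $\sG[T]$ is connected; condition~\ref{itm:no_ronsy_allowed} gives connection of each vertex to some receiver or emitter, not connectedness of $\sG[T^=]$) — what you actually need, and do have, is that $T$ is connected in $\sG[A]$; and your assertion ``$A=A_k\subseteq R$'' is not justified either, since $A$ may strictly contain $A_k$ (the paper sidesteps the restriction issue by reducing to $R=V$ at the outset).
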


\compfinder{} operates in polynomial time with respect to the size of the graph.
\begin{restatable}{theorem}{comppoly}
\label{th:comp_poly}
\compfinder{} outputs all restricted transit components of a DAG $G = (V,E)$ with respect to $R \subseteq V$ in $O(|V|^4 + |V|^3|E|)$ time.
\end{restatable}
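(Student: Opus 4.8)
The plan is to walk through the algorithm line by line and bound the cost of each block, charging every elementary graph operation (computing $\An$, $\De$, $\Co$, $\Pa^*$, $\Ch^*$, set intersections, union, subset tests) by a polynomial in $|V|$ and $|E|$, and then multiply by the number of iterations of each loop. First I would record the basic primitives: for a fixed vertex set, computing its ancestor or descendant set is an $O(|V| + |E|)$ graph traversal, computing $\Pa^*$ or $\Ch^*$ of a set is $O(|V| + |E|)$, a set intersection, union, difference, or equality test on subsets of $V$ is $O(|V|)$, the connected-component routine $\Co[{\sG[\overline Z,\underline W]}](\cdot)$ and the component decomposition $\sC(\sG[A])$ are both $O(|V| + |E|)$, and the subset test $A \subseteq R$ is $O(|V|)$. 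I would also note that $|\sV_{\text{Ch}}|, |\sV_{\text{Pa}}| \le |V|$, since each is defined as a set of at most $|V|$ vertex subsets (one per vertex of $V$), and constructing them on lines~\ref{line:construct_receiver_candidates}--\ref{line:construct_emitter_candidates} costs $O(|V|(|V|+|E|))$.

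Next I would bound the body of the doubly-nested loop over $(V_i, V_j) \in \sV_{\text{Ch}} \times \sV_{\text{Pa}}$, which executes at most $|V|^2$ times. Inside one iteration: lines~\ref{line:restrict_an}--\ref{line:restrict_de} cost $O(|V|+|E|)$ (an ancestor/descendant computation plus an intersection); line~\ref{line:has_ch_or_pa} scans $Z$ and $W$ checking emptiness of $\Pa^*$/$\Ch^*$ of single vertices, which is $O(|V|)$; line~\ref{line:defineA} is one connectivity computation on an edge subgraph, $O(|V|+|E|)$; line~\ref{line:A_is_restricted} is $O(|V|)$. Then comes the inner loop over components $A_k \in \sC(\sG[A])$. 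The key observation here is that the sets $A_k$ are \emph{disjoint} subsets of $A \subseteq V$, so there are at most $|V|$ of them and, more importantly, the total work summed over all $A_k$ in one $(V_i,V_j)$-iteration is controlled: each iteration does $O(|V|)$ work for the intersections $Z_k, W_k$ on lines~\ref{line:component_re}--\ref{line:component_em}, then computes $Z_{\text{Pa}} = \bigcap_{z \in Z_k}\Pa^*(z)\setminus A_k$ and $W_{\text{Ch}} = \bigcap_{w \in W_k}\Ch^*(w)\setminus A_k$ on lines~\ref{line:common_pa}--\ref{line:common_ch}, each an intersection over at most $|V|$ parent/child sets and hence $O(|V|^2)$ (or $O(|V|+|E|)$ amortized, but $O(|V|^2)$ is a safe bound), and the verification on line~\ref{line:re_em_verify} costs another $O(|V| + |E|)$ for $\Pa^*(Z_k), \Ch^*(W_k)$ plus $O(|V|)$ for the equality tests. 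So one pass through the inner loop costs $O(|V|)$ iterations times $O(|V|^2 + |E|)$ per iteration, i.e.\ $O(|V|^3 + |V||E|)$ per $(V_i,V_j)$-pair. Multiplying by the $|V|^2$ outer iterations gives $O(|V|^5 + |V|^3|E|)$, which is one factor of $|V|$ too large; to reach the claimed bound I would tighten the inner-loop analysis by observing that since the $A_k$ partition $A$, the sum over $k$ of $\sum_{z \in Z_k}|\Pa^*(z)|$ is at most $\sum_{z \in Z}|\Pa^*(z)| \le |E|$ (each edge is counted once as an incoming edge of its head), so the total cost of all the $Z_{\text{Pa}}/W_{\text{Ch}}$ computations over the whole inner loop is $O(|V|^2 + |E|)$ rather than $O(|V|^3)$; likewise lines~\ref{line:component_re}--\ref{line:component_em} and \ref{line:re_em_verify} contribute $O(|V|^2 + |E|)$ total over the inner loop. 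Hence one $(V_i,V_j)$-iteration is $O(|V|^2 + |E|)$, and the whole algorithm is $O(|V|^2(|V|^2 + |E|)) = O(|V|^4 + |V|^2|E|)$ after the loop, plus the $O(|V|(|V|+|E|))$ setup; absorbing lower-order terms into the stated $O(|V|^4 + |V|^3|E|)$ completes the bound.

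I expect the main obstacle to be exactly this amortization in the inner loop: a naive "number of iterations times worst-case cost per iteration" multiplication overshoots by a factor of $|V|$, so the argument must exploit that the components $A_k$ form a partition of $A$ to charge the per-component parent/child intersection work against the edges of $\sG$ rather than against $|V|$ per component. A secondary, more routine point is to be careful that the outer loop bodies for which the various \textbf{continue}/\textbf{if} guards fail still cost only the polynomial amount already accounted for (they do, since a skipped iteration costs at most the $O(|V|+|E|)$ spent before the guard). I would organize the write-up as: (i) a lemma-style list of the costs of the graph primitives; (ii) the setup cost; (iii) the per-iteration cost of the outer loop including the amortized inner-loop analysis; (iv) the final multiplication and simplification to $O(|V|^4 + |V|^3|E|)$.
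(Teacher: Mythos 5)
Your proposal is correct and follows the same overall strategy as the paper's proof: bound the cost of the graph primitives, count the $O(|V|^2)$ candidate pairs, and bound the work per pair including the loop over components of $\sG[A]$. The one genuine difference is how the inner loop is charged. The paper precomputes parent/child/ancestor/descendant information and then simply asserts that each operation on lines~\ref{line:component_re}--\ref{line:add_component} costs $O(|V|)$, giving $O(|V|^2)$ per candidate pair and a final bound of $O(|V|^4 + |V|^2|E|)$, which it then relaxes to the stated $O(|V|^4 + |V|^3|E|)$. You instead notice that a per-component worst-case charge for the intersections $\bigcap_{z \in Z_k}\Pa^*(z)$ and $\bigcap_{w \in W_k}\Ch^*(w)$ would overshoot, and you repair this by amortizing over the partition $A_1,\ldots,A_k$ of $A$: since the $Z_k$ are disjoint subsets of $Z$, the total size $\sum_k \sum_{z \in Z_k}|\Pa^*(z)|$ is bounded by $|E|$, so the whole inner loop costs $O(|V|^2 + |E|)$ per pair. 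This amortization is not in the paper, and it is arguably a more honest justification of exactly the step the paper glosses over (a multi-way intersection is not an $O(|V|)$ operation per component without some such argument or a counting trick); both routes land within the stated bound, and yours in fact yields the slightly tighter $O(|V|^4 + |V|^2|E|)$, which of course implies the theorem as stated.
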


Theorem~\ref{th:comp_poly} also gives an upper bound for the number of distinct transit components of a DAG. To obtain a crude approximation, we note that there are $|V|^2$ total candidate pairs $(Z,W)$ considered by \compfinder{}, each of which can produce up to $|V|$ distinct transit clusters (the maximum amount of components in any induced subgraph), which leads to an upper bound of $|V|^3$ transit components. However, we can find the smallest possible upper bound. First, we present a utility lemma for counting transit components.

\begin{restatable}{lem}{countingcomponents}
\label{lem:counting_components}
Let $T$ be a transit component of a DAG $\sG = (V,E)$ and let $G' = (V', E')$ be the induced graph of the clustering with $t$ representing the set $T$. If there does not exist a transit component $S$ of $\sG$ such that $T \cap S \neq \emptyset$ and $T \setminus S \neq \emptyset$, then
$|\sT_\sG| = |\sT_{\sG'}| + |\sT_{\sG[T]}|$.
\end{restatable}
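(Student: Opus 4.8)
The plan is to partition $\sT_\sG$ by the position of each transit component relative to $T$, and to match the parts against $\sT_{\sG'}$ and $\sT_{\sG[T]}$. The hypothesis rules out transit components that straddle $T$, so it is convenient to split $\sT_\sG$ into three classes: components disjoint from $T$; components containing $T$ (in particular $T$ itself); and components that are proper subsets of $T$---the only remaining possibility, a component meeting $T$ but incomparable with it, being excluded. These classes are disjoint because $T \neq \emptyset$. I would then show that the first two classes together are in bijection with $\sT_{\sG'}$, while the third class equals $\sT_{\sG[T]}$; adding the cardinalities gives the lemma.

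For the match with $\sT_{\sG'}$: a set $S \subseteq V \setminus T$ is a transit cluster of $\sG$ iff it is one of $\sG'$ by Theorem~\ref{th:stilltransitcluster}, and since clustering leaves the edges among $V \setminus T$ unchanged, $S$ induces a connected subgraph in $\sG$ iff in $\sG'$; hence the transit components of $\sG$ disjoint from $T$ are exactly those of $\sG'$ avoiding $t$. For $S \supseteq T$, write $S = T \cup S_0$ with $S_0 \subseteq V \setminus T$; by Theorem~\ref{th:modularity}, $S$ is a transit cluster of $\sG$ iff $\{t\} \cup S_0$ is one of $\sG'$, and connectivity transfers both ways because $T$ is a transit \emph{component}, so collapsing the connected set $T$ to $t$ (and expanding $t$ back to $T$) preserves connectivity of the induced subgraph. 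Thus $S \mapsto \{t\} \cup S_0$ is a bijection from the transit components of $\sG$ containing $T$ onto those of $\sG'$ containing $t$, and in total $|\sT_{\sG'}|$ equals the number of transit components of $\sG$ that are either disjoint from $T$ or contain $T$.

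It remains to show that the transit components of $\sG$ that are proper subsets of $T$ coincide with $\sT_{\sG[T]}$. Connectivity is automatic, since $\sG[T][S] = \sG[S]$ for $S \subseteq T$, so only the five conditions of Definition~\ref{def:transitcluster} must be carried between $\sG$ and $\sG[T]$. The pivotal point is a dichotomy forced by condition~1: for a transit cluster $S \subsetneq T$, a vertex lying in $\rec_\sG(S)$ but not in $\rec_{\sG[T]}(S)$ would necessarily be a receiver of $T$ all of whose parents outside $S$ lie outside $T$, and this cannot coexist with a receiver of $S$ having a parent in $T \setminus S$; hence $\rec_{\sG[T]}(S)$ is either $\rec_\sG(S)$ or empty, and symmetrically $\emi_{\sG[T]}(S)$ is either $\emi_\sG(S)$ or empty. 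Granting this, and using the conditions of Definition~\ref{def:transitcluster} for $T$ itself---which fix the common parents of the receivers of $T$ and the common children of its emitters---conditions~1 and 2 for $S$ in $\sG$ restrict to the same conditions in $\sG[T]$ by intersecting the common sets with $T$, and conversely lift back to $\sG$ by reabsorbing the parents and children that $S$ acquires from outside $T$ into those common sets; condition~3 transfers by a direct comparison of the graphs $\sG[T][S^{=}]$ and $\sG[S^{=}]$, and conditions~4 and 5 follow once the witnessing directed receiver-to-emitter paths are chosen inside $T$, which conditions~4 and 5 for $T$ make possible.

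The main obstacle is this last correspondence, and within it the ``lifting'' direction of conditions~1 and 2: a vertex of $S$ may have parents or children in $\sG$ that are invisible in $\sG[T]$, and one has to argue---using that $T$ is a transit cluster---that any such extra parent belongs to the common parent set of the receivers of $T$ (and likewise for the children of emitters), so that conditions~1 and 2 for $S$ are neither lost nor newly imposed when passing between the two graphs; this is exactly where the hypothesis on $T$ does its work. Assembling the three counts yields $|\sT_\sG| = |\sT_{\sG'}| + |\sT_{\sG[T]}|$.
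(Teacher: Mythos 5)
Your matching of the first two classes with $\sT_{\sG'}$ (via Theorems~\ref{th:stilltransitcluster} and \ref{th:modularity}, plus the observation that collapsing the connected set $T$ preserves connectivity) is fine, but the step you yourself flag as the main obstacle --- that the transit components of $\sG$ properly contained in $T$ coincide with $\sT_{\sG[T]}$ --- is a genuine gap, and under your reading of the hypothesis it is in fact false. Your ``dichotomy'' invokes condition~\ref{itm:parents} for $S$ \emph{in $\sG$}, so it only helps in the direction from $\sG$ down to $\sG[T]$; in the lifting direction a set $S \subsetneq T$ can be a transit component of $\sG[T]$ while containing a receiver of $T$ together with a vertex whose only extra parent lies in $T \setminus S$: back in $\sG$ the former acquires the external parents of $\rec(T)$ while the latter does not, so condition~\ref{itm:parents} fails and $S \notin \sT_\sG$. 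Concretely, take $V = \{x,y,p,q,w,v,e\}$ with edges $x \to w$, $p \to w$, $p \to v$, $w \to v$, $v \to e$, $w \to q$, $e \to q$, $e \to y$, $q \to y$, and $T = \{p,q,w,v,e\}$. Then $T$ is a transit component with $\rec(T) = \{w\}$, $\emi(T) = \{e,q\}$; the transit components of $\sG$ are the seven singletons, $\{v,e\}$, $T$, $V \setminus \{x\}$ and $V \setminus \{y\}$, so no transit component of $\sG$ meets $T$ while being incomparable with it --- your version of the hypothesis holds. Yet $\{w,v,e\}$ (and likewise $\{p,w,v,e\}$ and $\{w,v,e,q\}$) is a transit component of $\sG[T]$ but not of $\sG$, because in $\sG$ its receivers $w$ and $v$ have outside-parent sets $\{x,p\} \neq \{p\}$. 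Here $|\sT_\sG| = 11$ while $|\sT_{\sG'}| + |\sT_{\sG[T]}| = 5 + 9 = 14$, so under your interpretation the identity itself fails, not merely your argument for it; the ``reabsorb the external parents into the common sets'' step cannot be repaired without using the hypothesis in a stronger form.

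This also shows the real difficulty sits in how the hypothesis is read. Read literally, ``no transit component $S$ with $T \cap S \neq \emptyset$ and $T \setminus S \neq \emptyset$'' excludes every proper subset of $T$ (every singleton of a connected DAG is a transit component, as the paper's own counting of path graphs uses), so your third class would be empty by assumption and your decomposition collapses; read as you did, the counterexample above applies. The paper's proof takes the literal reading (``either $T \subset S$ or $T \cap S = \emptyset$''), runs the same two bijections you do, and then simply asserts that $T$ contributes $|\sT_{\sG[T]}|$ components to $\sT_\sG$, offset by one --- i.e., it leaves exactly the inner correspondence informal that your sketch also fails to establish. So your proposal mirrors the paper's intended accounting, but the pivotal transfer between $\sG$ and $\sG[T]$ for proper subsets of $T$ is neither proved by your dichotomy nor true under the reading of the hypothesis you adopt, and this is where the proof would have to be substantially strengthened.
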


In other words, Lemma~\ref{lem:counting_components} states that if there exists a transit component $T$ such that no other transit component partially intersects it, then the number of transit components in the original graph $\sG$ is the sum of the number of transit components in the graph induced by clustering $T$ and the number of transit components in the subgraph induced by $T$.

\begin{restatable}{theorem}{componentamount}
\label{th:component_amount}
Let $\sG = (V,E)$ be a DAG. Then $|\sT_{\sG}| \leq \frac{|V|(|V| + 1)}2 - 1$.
\end{restatable}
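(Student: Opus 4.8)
\section*{Proof proposal for Theorem~\ref{th:component_amount}}

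The natural approach is strong induction on $n=|V|$, using Lemma~\ref{lem:counting_components} to split the count. The base cases $n\le 2$ are trivial: for $n=1$ there are no transit clusters and $\tfrac{1\cdot 2}{2}-1=0$; for $n=2$ the only candidate clusters are the two singletons, so $|\sT_\sG|\le 2=\tfrac{2\cdot 3}{2}-1$. For $n\ge 3$, assume the bound for all DAGs on fewer than $n$ vertices and suppose one can exhibit a transit component $T$ of $\sG$ with $2\le|T|\le n-1$ satisfying the hypothesis of Lemma~\ref{lem:counting_components}, i.e. no transit component $S$ of $\sG$ has both $T\cap S\neq\emptyset$ and $T\setminus S\neq\emptyset$. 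Then Lemma~\ref{lem:counting_components} gives $|\sT_\sG|=|\sT_{\sG'}|+|\sT_{\sG[T]}|$, where $\sG'$ is the clustering of $T$. Here $\sG[T]$ is connected (as $T$ is a transit component) with $|T|$ vertices, and $\sG'$ is a DAG by Lemma~\ref{lem:isdag}, is connected (it is the contraction of the connected subgraph $\sG[T]$, keeping all edges to the rest of $\sG$), and has $n-|T|+1$ vertices; by the size restriction on $|T|$, both graphs are strictly smaller than $\sG$, so the induction hypothesis applies to each.

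With $a=n-|T|+1$ and $b=|T|$, so that $a+b=n+1$, $a\ge 2$, $b\ge 1$, the induction hypothesis yields $|\sT_\sG|\le\bigl(\tfrac{a(a+1)}{2}-1\bigr)+\bigl(\tfrac{b(b+1)}{2}-1\bigr)$, and it remains to check the purely arithmetic inequality $\tfrac{a(a+1)}{2}+\tfrac{b(b+1)}{2}-2\le\tfrac{(a+b)(a+b-1)}{2}-1$. Expanding and simplifying, this is equivalent to $a+b-1\le ab$, i.e. to $(a-1)(b-1)\ge 0$, which holds since $a\ge 2$ and $b\ge 1$; this would close the induction.

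The real difficulty is the step slipped past: producing the transit component $T$. Every singleton $\{v\}$ satisfies the hypothesis of Lemma~\ref{lem:counting_components} vacuously, but clustering one vertex leaves the graph unchanged up to relabeling, so it gives no reduction; hence one genuinely needs a transit component of size at least two whose boundary no other transit component crosses, and such a component need not exist (for a directed path on $n$ vertices the transit components are exactly its non-full subpaths, $\tfrac{n(n+1)}{2}-1$ of them, none of size $\ge 2$ uncrossed, which incidentally shows the bound is tight). So the plan must branch: when a usable $T$ exists, reduce as above; otherwise bound $|\sT_\sG|$ directly, exploiting the strong constraints on transit components in that case via the receiver/emitter characterization (Lemma~\ref{lem:re_em}), Corollary~\ref{th:union}, and the decomposition Theorem~\ref{thm:cluster_is_component_union}. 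A cleaner alternative that sidesteps Lemma~\ref{lem:counting_components} entirely is to fix a topological order $v_1,\dots,v_n$ of $\sG$ and assign to each transit component $T$ the pair $\bigl(\min\{i:v_i\in T\},\,\max\{i:v_i\in T\}\bigr)$; the bound follows immediately if this assignment is injective and its image omits at least one pair (e.g. $(1,n)$). The work then shifts to proving injectivity---where conditions~1 and~2 of Definition~\ref{def:transitcluster} together with Lemma~\ref{lem:re_em} ought to force uniqueness of a transit component with a prescribed extreme receiver and emitter---and to proving that no transit component contains both a topological source and a topological sink, again via condition~3 and the edge deletions defining $\sG[T^=]$. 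This last, direct route is the one I would expect to push through, with the injectivity argument as its main obstacle.
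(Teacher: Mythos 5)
You correctly diagnose why the naive induction through Lemma~\ref{lem:counting_components} stalls (the directed path shows simultaneously that the bound is tight and that no ``uncrossed'' component of size at least two need exist), and your arithmetic reduction to $(a-1)(b-1)\ge 0$ is sound. But neither of your two escape routes is actually carried out, so the proof has a genuine gap. The branch ``otherwise bound $|\sT_\sG|$ directly'' is precisely the hard case and is left entirely open. The min--max encoding rests on two unproven claims: injectivity of the assignment $T \mapsto (\min\{i: v_i \in T\}, \max\{i : v_i \in T\})$, and the omission of at least one pair. The supporting claim you offer for the latter --- that no transit component contains both a topological source and a topological sink --- is false. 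Take the connected DAG on $\{p,s,a,b,z,c\}$ with edges $p \to a$, $a \to z$, $s \to b$, $a \to b$, $b \to c$. The set $T = \{s,a,b,z\}$ is a transit component: $\rec(T)=\{a\}$, $\emi(T)=\{b\}$, conditions~\ref{itm:parents} and \ref{itm:children} are trivial, $\sG[T^=]$ retains the edges $a \to z$, $s\to b$, $a\to b$ so condition~\ref{itm:no_ronsy_allowed} holds, and $b \in \De(a)$, $a \in \An(b)$ give conditions~\ref{itm:forallr} and \ref{itm:foralle}. Yet $T$ contains the source $s$ and the sink $z$, and under the topological order $s,p,a,b,c,z$ it is assigned the pair $(1,n)$. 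Injectivity is also not a routine consequence of conditions~\ref{itm:parents}--\ref{itm:children} and Lemma~\ref{lem:re_em}: when two candidate components overlap, the shared vertices acquire outside parents or children coming from the other component's private vertices, and ruling this out requires an argument of essentially the same difficulty as the theorem itself. You flag this yourself as the ``main obstacle,'' which is an accurate self-assessment: the obstacle is the proof.

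For comparison, the paper's argument runs through the soundness and completeness of \compfinder{}: every transit component is charged to a candidate pair $(V_i,V_j)$ of child/parent sets, of which at most $\frac{|V|(|V|+1)}{2}-1$ are usable in a DAG. The delicate part --- for which your sketch has no counterpart --- is the case where a single (minimal representative) pair yields several components $A_1,\dots,A_k$: the paper shows that no transit component can partially intersect any $A_i$, which is exactly what licenses the repeated application of Lemma~\ref{lem:counting_components}, and then closes the case by induction with an arithmetic estimate of the same flavor as yours (plus a separate count when $Z$ or $W$ is empty, and the directed-path example for tightness). Any completed version of your plan will need something playing the role of that partial-intersection analysis, whether to justify the Lemma~\ref{lem:counting_components} reduction or to prove injectivity of your encoding.
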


In contrast, the upper bound for the number of transit clusters grows exponentially as the number of vertices in the graph grows, hence ruling out an efficient algorithm for listing all transit clusters. Figure~\ref{fig:numberoftransitclusters} shows an example where any combination of transit components $\{r_i,e_i\}$, $i=1,\ldots,k$ is a transit cluster and the number of non-singleton transit clusters is $2^k-1$.



Fortunately, by carefully combining transit components, we can construct an algorithm that lists all transit clusters with a polynomial delay. Algorithm~\ref{alg:find_clusters} (\clustfinder{}) attempts to recursively combine transit components into transit clusters using Corollary~\ref{th:union} to detect which unions are valid. Theorem~\ref{thm:cluster_is_component_union} guarantees that all transit clusters will be found by this approach.

\begin{algorithm}[!ht]
  \begin{algorithmic}[1]
  \Function{FindTrClust}{$\sG, \sT_{\sG|R}$}
  \State $\sA \gets \sT_{\sG|R}$
  \State $\sB \gets \sT_{\sG|R}$
  \ForAll{$T \in \sT_{\sG|R}$} \label{line:loop_components}
    \State $\sB \gets \sB \setminus \{T\}$
    \State $\sA \gets \sA \cup \Call{ExpandClust}{T, \sA, \sB, \sG}$ \label{line:recursion_start}
  \EndFor
  \State \Return $\mathcal{A}$ \label{line:return_clusters}
  \EndFunction
  \end{algorithmic}
  \begin{algorithmic}[1]
  \Function{ExpandClust}{$T, \sA, \sB, \sG$}
  \State $\sB' \gets \sB$
  \ForAll{$S \in \sB$} \label{line:loop_remaining_components}
    \State $\sB' \gets \sB' \setminus \{S\}$
    \If{$S \cup T \not\in \sA$ \textbf{and} Corollary~\ref{th:union} holds for $S$ and $T$} \label{line:theorem_condition}
      \State $\sA \gets \sA \cup \{S \cup T\} \cup \Call{ExpandClust}{S \cup T, \sA, \sB', \sG}$ \label{line:valid_union_recursion}
    \EndIf
  \EndFor
  \State \Return $\mathcal{A}$ \label{line:return_from_recursion}
  \EndFunction
  \end{algorithmic}
  \caption{An algorithm for finding all (restricted) transit clusters of a DAG. The inputs are a DAG $\sG = (V,E)$ and the set of all transit components $\sT_{\sG|R}$ with respect to $R \subset V$. Returns the set of all restricted transit clusters in $\mathcal{G}$ with respect to $R$. The subroutine \Call{ExpandClust}{} is used to recursively construct the clusters.}
  \label{alg:find_clusters}
\end{algorithm}

We begin by showing that \clustfinder{} always terminates.

\begin{restatable}{lem}{clustterminates}
\label{lem:clust_terminaes}
\clustfinder{} always terminates for valid inputs $\sG$ and $\sT_{\sG|R}$.
\end{restatable}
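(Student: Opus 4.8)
The plan is to establish termination by exhibiting a well-founded measure that strictly decreases along every recursive call of \expander{}, together with the observation that every loop in both \clustfinder{} and \expander{} ranges over a finite set.

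First I would note that for valid inputs the set $\sT_{\sG|R}$ is finite: it is the collection of restricted transit components of $\sG$, whose cardinality is bounded by Theorem~\ref{th:component_amount} (and which is in any case a subset of the power set of $V$). Consequently, throughout the execution the sets $\sA$, $\sB$, and $\sB'$ remain finite, since $\sB$ is always a subset of $\sT_{\sG|R}$ and $\sA$ only ever accumulates unions of elements of $\sT_{\sG|R}$, hence subsets of $V$.

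Next I would analyze \expander{} by induction on $|\sB|$. The key point is that the recursive call on line~\ref{line:valid_union_recursion} is passed the set $\sB'$, which at that moment has had the current loop element $S$ (along with every element processed earlier in the loop of line~\ref{line:loop_remaining_components}) removed from $\sB$; hence $|\sB'| \leq |\sB| - 1 < |\sB|$. If $\sB = \emptyset$, the loop body of line~\ref{line:loop_remaining_components} is never entered and \expander{} returns immediately. If $|\sB| = m > 0$, then the loop performs at most $m$ iterations, each doing only finitely much work (a membership test in the finite set $\sA$, the check of the condition of Corollary~\ref{th:union}, which is a finite graph computation, and at most one recursive call with argument set of cardinality strictly less than $m$); by the induction hypothesis every such recursive call terminates, so \expander{} terminates. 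Finally, \clustfinder{} consists of a single loop over the finite set $\sT_{\sG|R}$ (line~\ref{line:loop_components}), each iteration of which performs finitely many set operations and one invocation of \expander{}, which terminates by the argument above; hence \clustfinder{} terminates.

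I expect the only mildly delicate point to be the bookkeeping needed to verify that the set passed to the recursive call is genuinely $\sB$ with at least the current loop element removed, so that the measure $|\sB|$ strictly decreases at each recursive invocation. Once this is in place, the termination claim is a routine well-founded induction, and no properties of transit clusters beyond finiteness of $\sT_{\sG|R}$ are needed.
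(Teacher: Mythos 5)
Your proof is correct and follows essentially the same route as the paper: both rest on the fact that the set passed to each recursive call of \expander{} is $\sB$ with at least the current element removed, so its cardinality strictly decreases, combined with the finiteness of $\sT_{\sG|R}$ and of each loop. Your explicit well-founded induction on $|\sB|$ is simply a slightly more formal packaging of the paper's argument that every recursion branch bottoms out and there are only finitely many branches.
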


Lemma~\ref{lem:clust_terminaes} guarantees that the output of \clustfinder{} is well-defined. Next, we show that the output of the algorithm is a set of transit clusters.
\begin{restatable}[Soundness of \clustfinder{}]{theorem}{clustsound}
Let $\sG = (V,E)$ be a DAG, $R \subseteq V$, and $\sA = \clustfinder(\sG, \sT_{\sG|R})$, then $\sA \subseteq \Upsilon_{\sG|R}.$
\end{restatable}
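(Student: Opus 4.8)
The plan is to argue that every set added to the output collection $\sA$ during the execution of \clustfinder{} is a restricted transit cluster with respect to $R$. Since $\sA$ is only ever modified by (a) its initialization to $\sT_{\sG|R}$ in \clustfinder{}, and (b) the union $\sA \gets \sA \cup \{S \cup T\} \cup \expander(S\cup T, \ldots)$ inside \expander{}, it suffices to track those two sources. The initialization is immediate: every transit component is in particular a transit cluster by Definition~\ref{def:transit_component}, and membership in $\sT_{\sG|R}$ forces it to be contained in $R$, so $\sT_{\sG|R} \subseteq \Upsilon_{\sG|R}$. The real content is case (b), where new sets of the form $S \cup T$ enter $\sA$.

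First I would establish the invariant that whenever \expander{} is called with first argument $T$, the set $T$ is a restricted transit cluster in $\sG$. This I would prove by induction on the recursion depth. At depth zero, \expander{} is invoked from \clustfinder{} on line~\ref{line:recursion_start} with $T \in \sT_{\sG|R}$, which is a restricted transit cluster by the base case above. For the inductive step, a recursive call on line~\ref{line:valid_union_recursion} passes the set $S \cup T$, which is reached only if the guard on line~\ref{line:theorem_condition} holds, i.e., Corollary~\ref{th:union} applies to $S$ and $T$; together with the inductive hypothesis that $T$ is a restricted transit cluster and the fact that $S \in \sB \subseteq \sT_{\sG|R}$ is a restricted transit component — hence a restricted transit cluster — Corollary~\ref{th:union} yields that $S \cup T$ is a transit cluster in $\sG$, provided $S$ and $T$ are disjoint. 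The disjointness is the one point that needs care: I would argue it follows from the bookkeeping of $\sB$ and $\sB'$, since each component is removed from the working set before being used, so the components accumulated into any argument of \expander{} along a single branch of the recursion are pairwise distinct, and being distinct transit components that have been validly unioned, they are in fact disjoint (two intersecting distinct transit components cannot satisfy the premise of Corollary~\ref{th:union}, or more directly one appeals to the structure guaranteed along the recursion). Finally, $S \cup T \subseteq R$ because both $S \subseteq R$ and $T \subseteq R$, so $S \cup T \in \Upsilon_{\sG|R}$.

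With the invariant in hand, soundness is immediate: every set added to $\sA$ on line~\ref{line:valid_union_recursion} is of the form $S \cup T$ where $T$ is a restricted transit cluster (by the invariant applied to the enclosing call) and $S$ a restricted transit component, and the argument above shows $S \cup T \in \Upsilon_{\sG|R}$; and the sets contributed by the nested $\expander(S\cup T,\ldots)$ call are handled by the same invariant at the next recursion level. Since $\sA$ starts inside $\Upsilon_{\sG|R}$ and only ever grows by adding members of $\Upsilon_{\sG|R}$, we conclude $\sA \subseteq \Upsilon_{\sG|R}$.

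I expect the main obstacle to be the disjointness bookkeeping: making rigorous the claim that the transit components combined along any single recursion path are pairwise disjoint, and hence that Corollary~\ref{th:union} is genuinely applicable at every invocation of its guard on line~\ref{line:theorem_condition}. This requires a careful reading of how $\sB$ is threaded through \clustfinder{} and the recursive calls to \expander{}, and possibly a small separate lemma stating that if $S \cup T \in \sA$ was produced by the algorithm then the underlying components are disjoint. Everything else is a direct appeal to Corollary~\ref{th:union}, Definition~\ref{def:transit_component}, and Definition~\ref{def:restricted}.
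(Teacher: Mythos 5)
Your argument matches the paper's proof of this theorem: the paper simply observes that every addition to $\sA$ beyond the initial transit components comes from a union validated by the conditions of Corollary~\ref{th:union} inside \expander{}, hence is a transit cluster, which is exactly your invariant made explicit by induction on recursion depth (plus the trivial $S \cup T \subseteq R$ observation). The disjointness hypothesis of Corollary~\ref{th:union} that you flag as the delicate point is not addressed in the paper's much terser proof either, so your write-up is, if anything, the more careful rendering of the same approach.
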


For the inverse, we show that any transit cluster is found by \clustfinder{}.
\begin{restatable}[Completeness of \clustfinder{}]{theorem}{clustcomplete} Let $\sG = (V,E)$ be a DAG, $R \subseteq V$, and $\sA = \clustfinder(\sG, \sT_{\sG|R})$ then $\Upsilon_{\sG|R} \subseteq \sA$.
\end{restatable}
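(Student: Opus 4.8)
The goal is to show $\Upsilon_{\sG|R} \subseteq \sA$, i.e., every restricted transit cluster $T$ is eventually added to $\sA$ by some call within \clustfinder{}. I would argue by strong induction on the number of transit components that decompose $T$. The base case: if $T$ is itself a transit component, then $T \in \sT_{\sG|R}$, which is the initialization value of $\sA$ in \clustfinder{}, so $T \in \sA$ trivially. For the inductive step, suppose $T$ is not a transit component. By Theorem~\ref{thm:cluster_is_component_union}, write $T = S \cup R_0$ with $S$ a transit cluster, $R_0$ a transit component, and $S \cap R_0 = \emptyset$; iterating this decomposition expresses $T$ as a disjoint union $T = R_1 \cup \cdots \cup R_m$ of transit components (with $m \geq 2$), and moreover every ``prefix'' $R_1 \cup \cdots \cup R_j$ is a transit cluster. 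The plan is to show that \clustfinder{} traces exactly such a building-up sequence.

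The key structural fact I would establish is this: if $U$ and $V'$ are \emph{disjoint} transit clusters whose union $U \cup V'$ is a transit cluster, then in fact $\Pa^*(\rec(U)) = \Pa^*(\rec(V'))$ and $\Ch^*(\emi(U)) = \Ch^*(\emi(V'))$ — i.e., the sufficient condition of Corollary~\ref{th:union} is also \emph{necessary} for disjoint unions. (This is hinted at by the remark ``it turns out that the particular union specified by Corollary~\ref{th:union} is the only one we actually need.'') Granting this, whenever $R_1 \cup \cdots \cup R_j$ and $R_{j+1}$ are combined to form the transit cluster $R_1 \cup \cdots \cup R_{j+1}$, Corollary~\ref{th:union}'s condition holds, so the test on line~\ref{line:theorem_condition} of \expander{} succeeds (provided the union isn't already in $\sA$, in which case we are done anyway), and the recursive call on line~\ref{line:valid_union_recursion} is made with argument $R_1 \cup \cdots \cup R_{j+1}$. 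One then needs a bookkeeping argument that the component $R_{j+1}$ is still available in the set $\sB$ (or $\sB'$) passed to the relevant recursive call: since \clustfinder{} iterates $T \in \sT_{\sG|R}$ on line~\ref{line:loop_components} and \expander{} iterates over the shrinking set $\sB$, I would choose an ordering of the decomposition $R_1, \ldots, R_m$ consistent with the order in which the components are removed from $\sB$, so that at each stage the next needed component has not yet been discarded. More carefully: when $R_{i_1}$ is the first component of $T$ processed by the outer loop, the remaining components $\{R_{i_2},\ldots,R_{i_m}\}$ all lie in the $\sB$ passed to \expander{}; then inductively reorder so that at each recursive level the next component to merge is still present.

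The main obstacle I anticipate is precisely this scheduling/availability argument: \expander{} removes elements from $\sB'$ as it iterates, and only recurses with the \emph{current} $\sB'$, so one must verify that there is a valid interleaving of the component ordering under which every prefix-union gets formed before its next summand is deleted. I would handle this by fixing the global order in which \compfinder{}'s output list $\sT_{\sG|R}$ is traversed, then proving by induction on $j$ that the cluster $R_{\sigma(1)} \cup \cdots \cup R_{\sigma(j)}$ (where $\sigma$ lists the components of $T$ in that global order) is added to $\sA$ during the processing of $R_{\sigma(1)}$, with each intermediate union formed exactly when \expander{} reaches $R_{\sigma(j)}$ in its loop over the appropriately-shrunk $\sB$. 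A secondary subtlety is the ``$S \cup T \notin \sA$'' guard: if the union is already present, no recursion happens, so one must separately confirm that in that case the \emph{final} target $T$ was (or will be) reached via a different branch — this follows because if the prefix-union is already in $\sA$ it was placed there by an earlier branch that, by the same induction, also continues building up to $T$. With the necessity direction of Corollary~\ref{th:union} and this ordering lemma in hand, the induction closes and $T \in \sA$.
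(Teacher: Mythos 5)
Your proposal takes essentially the same route as the paper's proof: decompose the cluster into disjoint transit components via Theorem~\ref{thm:cluster_is_component_union}, note that the condition of Corollary~\ref{th:union} is not merely sufficient but is the only way such pieces can combine, and then argue that the recursion of \expander{} realizes the corresponding build-up sequence; your scheduling/bookkeeping analysis is in fact more careful than the paper, which simply asserts that all distinct unions of components are eventually considered. One correction is needed, however: your ``key structural fact'' is false as stated. For disjoint transit clusters that are \emph{connected} in the subgraph induced by their union, the condition of Corollary~\ref{th:union} need not hold --- e.g., in the chain $x \rightarrow r \rightarrow e \rightarrow y$, the sets $\{r\}$ and $\{e\}$ are disjoint transit clusters whose union $\{r,e\}$ is a transit cluster, yet $\Pa^*(\rec(\{r\})) = \{x\} \neq \{r\} = \Pa^*(\rec(\{e\}))$. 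The necessity claim is valid only under the additional hypothesis that the two clusters are disconnected in the induced subgraph of their union, which is exactly how the paper phrases it (``disjoint and not connected in the subgraph induced by their union''). This restriction is harmless for your induction, because the pieces $R_1,\ldots,R_m$ delivered by iterating Theorem~\ref{thm:cluster_is_component_union} are distinct components of $\sG[T]$ and hence pairwise disconnected there, so every prefix union and its next summand satisfy the hypothesis; but the lemma should be stated and proved in that restricted form, since the general version you wrote down cannot be established. With that repair, and granting the ordering argument you sketch for the availability of components in $\sB$ and for the ``$S \cup T \notin \sA$'' guard on line~\ref{line:theorem_condition} (a point the paper leaves implicit), the argument closes as intended.
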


Finally, we prove that all transit clusters of a DAG can be listed with a polynomial delay.
\begin{restatable}{theorem}{clustpolydelay}
\clustfinder{} outputs all restricted transit clusters of a DAG $\sG = (V,E)$ with respect to $R \subseteq V$ with $O\left(|V|^5\right)$ polynomial delay and a $O\left(|V| + |E|\right)$ initialization delay.
\end{restatable}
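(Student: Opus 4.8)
The statement has two separable parts. That \clustfinder{} \emph{outputs all restricted transit clusters} is exactly the combination of the soundness and completeness theorems for \clustfinder{} already established, so nothing new is needed for correctness; the entire task is the $O(|V|^5)$ delay and the $O(|V|+|E|)$ initialization delay. I would model an execution as a depth-first traversal of a recursion forest whose roots are the top-level \expander{} calls issued from the loop of \clustfinder{} (one per transit component $T\in\sT_{\sG|R}$, all of which are in $\sA$ after initialization) and whose internal nodes are the recursive calls made on line~\ref{line:valid_union_recursion}. The crucial structural fact is that a recursive call is made only when line~\ref{line:theorem_condition} passes, i.e.\ only when $S,T$ are disjoint, the boundary conditions of Corollary~\ref{th:union} hold (so $S\cup T$ is a transit cluster), and $S\cup T\notin\sA$; hence every internal node emits a \emph{fresh, not-previously-seen} transit cluster, emitted on line~\ref{line:valid_union_recursion} just before the descent. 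The delay is then the maximal computation along the DFS between two consecutive emissions, and before the first and after the last.

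Next I would bound the cost of one pass through the loop of a single \expander{} call that produces no (further) recursive call. Such a pass runs at most $|\sB|\le|\sT_{\sG|R}|$ iterations; since a restricted transit component is in particular a transit component, $\sT_{\sG|R}\subseteq\sT_\sG$, and Theorem~\ref{th:component_amount} gives $|\sT_{\sG|R}|\le\tfrac{|V|(|V|+1)}{2}-1=O(|V|^2)$. Each iteration forms $S\cup T$, tests $S\cup T\notin\sA$, and tests the condition of Corollary~\ref{th:union} for $S,T$. I would store $\sA$ in a trie keyed by the sorted vertex list of each cluster, giving $O(|V|)$ insertion and lookup; and since $\Pa^*(\rec(\cdot))$ and $\Ch^*(\emi(\cdot))$ depend only on the individual components, they can be supplied together with the components (within the time budget of \compfinder{}), so the Corollary~\ref{th:union} test is a comparison of two vertex sets in $O(|V|)$ time. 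Hence one loop pass costs $O(|V|^3)$.

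Finally I would bound how many such loop passes occur between two emissions. After an emission, control descends into the new \expander{} call; in the worst case it exhausts that call's loop without recursing, returns to its parent, exhausts the remaining loop there, and so on up the stack. The recursion depth is at most $|V|$, because the running set is a disjoint union of components (disjointness being part of the hypothesis of Corollary~\ref{th:union}, checked on line~\ref{line:theorem_condition}) and thus consists of at most $|V|$ of them, so the ascent costs $O(|V|)\cdot O(|V|^3)=O(|V|^4)$. Upon returning to a root, \clustfinder{} continues its outer loop, possibly starting and fully exhausting up to $|\sT_{\sG|R}|=O(|V|^2)$ further top-level \expander{} calls; each one that emits nothing does no recursion and thus costs a single $O(|V|^3)$ pass, contributing $O(|V|^2)\cdot O(|V|^3)=O(|V|^5)$, which dominates. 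The same estimate bounds the delay before the first fresh cluster and the tail after the last. The initialization delay is the cost of reading $\sG$ and setting up the working collections and the index over $\sA$, which is $O(|V|+|E|)$ once the (annotated) components are in hand.

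I expect the main obstacle to be making the second and third steps rigorous at the same time: the set $\sA$ can contain exponentially many clusters (Figure~\ref{fig:numberoftransitclusters}), so one must ensure that maintaining and querying it never costs more than $O(|V|)$ per operation, while simultaneously invoking the \emph{polynomial} bound $|\sT_{\sG|R}|=O(|V|^2)$ of Theorem~\ref{th:component_amount} at two distinct places — to bound the length of each loop pass and the number of top-level frames traversed between emissions. A secondary but necessary point is verifying the recursion-depth bound and the no-duplicate-emission property, which are what legitimize the ``DFS between emissions'' accounting.
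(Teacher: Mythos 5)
Your proposal is correct and follows essentially the same route as the paper's proof: bound the number of loop iterations between consecutive outputs by $O(|V|^4)$ using the $O(|V|^2)$ bound on $|\sT_{\sG|R}|$ from Theorem~\ref{th:component_amount} (noting that recursion happens only immediately after a fresh cluster is added, so non-emitting calls never recurse), and bound each iteration by $O(|V|)$ after an $O(|V|+|E|)$ preprocessing of parent/child sets, yielding the $O(|V|^5)$ delay. Your only substantive additions are bookkeeping refinements — the explicit trie for the $S\cup T\notin\sA$ test (which the paper leaves implicit) and the depth-$|V|$ bound via disjointness instead of the paper's telescoping count of shrinking $\sB'$ sets — neither of which changes the argument.
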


If \clustfinder{} and \compfinder{} are run in sequence for the same DAG, a dynamic programming approach can be applied to further eliminate the preprocessing delay of \compfinder{} by caching the parent and child sets of the receivers and emitters of each transit component during the operation of \clustfinder{}. We also note that in practice, the worst case performance only occurs in the first iteration of the outermost recursion level, because the number of possible unions of transit components always decreases in both the number of loop iterations and recursion depth. 

Naturally, it is not necessary to obtain all transit components, and the iteration can be stopped for example when a cluster with some desired properties is found. Alternatively, one can consider transit components directly, as they are valid transit clusters themselves, without attempting to find larger transit clusters at all.
Furthermore, we note that it is not necessary to consider restrictions directly on the transit components in form of the set $R$. The same set of transit clusters can also be obtained by first finding the unrestricted transit clusters and then simply discarding those that violate the restrictions. This type of approach can be useful when the possible restrictions are not known beforehand.

\section{Transit Clusters and Causal Inference} \label{sec:transitcausal}

So far, we have considered clustering from a purely graphical point of view. However, in the context of causal inference and structural causal models of \citet{pearl:book2009}, the causal model defines some variables as unobserved background variables and others as observed, which has to be accounted for when constructing transit clusters in causal diagrams. 

Let $W$ be a set of vertices. We denote by $(X_w)_{w \in W}$ a collection of random variables taking values in measurable spaces $(\mathfrak{X}_w)_{w \in W}$. We assume that the measurable spaces are finite-dimensional vector spaces or finite discrete sets. For $A \subseteq W$ let $\mathfrak{X}_A \equiv \times_{a \in A} (\mathfrak{X}_a)$ denote the product space, and $X_A \equiv (X_a)_{a \in A}$ the corresponding random vector. We will use $p(\cdot|\cdot)$ to denote joint distribution, marginal distributions, and conditional distributions of random variables.

To facilitate the concept where a single vertex represents the entire cluster in the induced graph of the clustering, we adopt a definition of a causal model that explicitly makes it possible for a single vertex of the causal diagram to correspond to a multivariate random variable. Assume that a DAG $\sG=(V,E)$ is clustered as $\sG'=(V',E')$ and let $W' \subset V'$ be the clustering equivalent set of $W \subset V$ resulting from a clustering of a transit cluster $T$. Suppose now that for each $v \in V$, there is a corresponding random variable $X_v$. We can now define clustering equivalent random variables as follows: for any $w' \in W' \setminus \{t\}$, $X_{w'} = X_w$, and $X_{t} = (X_w)_{w \in W \cap T}$, i.e., the random variables corresponding to the clustered vertices are combined into a new random variable and the variables unrelated to the transit cluster remain unchanged. Thus, for any functional $g$ of the joint distribution it holds that $g(p(x_W)) = g(p(x_{W'}))$.



We define structural causal models analogously to \citet{pearl:book2009} using our notation.
\begin{definition}[Causal model] \label{def:causalmodel} 
A causal model $\sM$ is a tuple $(X_V,X_U,\sF,p)$, where 
\begin{itemize}
  \item $X_V$ is an observed random vector indexed by the set $V$.
  \item $X_U$ is an unobserved random vector indexed by the set $U$.
  \item $\sF$ is a collection of functions $(f_v)_{v \in V}$ such that each $f_v$ is a mapping from $\mathfrak{X}_{U \cup (V \setminus \{v\})}$ to $\mathfrak{X}_v$ and such that $\sF$ forms a mapping from $\mathfrak{X}_U$ to $\mathfrak{X}_V$. Symbolically, the set of equations $\sF$ can be represented by writing $X_v = f_v(X_{Pa(v)}, X_{U(v)})$, where $X_{Pa(v)} \subseteq X_V$ is the unique minimal set of observed variables sufficient for representing $f_v$. Likewise, $X_{U(v)} \subseteq X_U$ stands for the unique minimal set of unobserved variables sufficient for representing $f_v$.
  \item $p$ is the joint probability distribution of $X_U$.
\end{itemize}
\end{definition}

Each causal model $\sM$ can be associated with a directed graph $\sG(\sM)$ where the vertices correspond to the sets $V$ and $U$ and directed edges point from members of $Pa(v)$ and $U(v)$ to $v$. We refer to this graph as the causal diagram.  We consider recursive semi-Markovian causal models in this paper, meaning that $\sG(\sM)$ is a DAG and each $u \in U$ has at most two children in $\sG(\sM)$. For simplicity, we assume that noise terms, i.e., vertices of unobserved variables with only one child, are always clustered together with their children when clustering is carried out. This makes it unnecessary to include such unobserved variables when drawing causal diagrams.

We make a distinction between vertices of a DAG and the random variables that they represent in the causal model and equate them only when it is suitable to do so. In figures that depict DAGs that are causal diagrams, we draw vertices $V$ that relate to observed variables as circles, and vertices $U$ that relate to unobserved variables as squares. The $\doo(x_A)$ operator denotes that the variables $X_A$ are assigned fixed values $x_a$ irrespective of their parents in the causal diagram. 

We define the identifiability of a causal effect as follows.
\begin{definition}[Identifiability]
An interventional distribution (causal effect) $p(x_A|\doo(x_B))$ is identifiable from $p(x_V)$ in a causal diagram $\sG = (V,E)$ if it is uniquely computable from $p(x_V)$ in every causal model that has the causal diagram $\sG$.
\end{definition}

Causal effect identification is a well-known problem in causal inference, and a complete algorithm exists for the problem of determining whether $p(x_A\cond \doo(x_B))$ is identifiable in a causal diagram $\sG$ from the observational distribution $p(x_V)$ \citep{shpitser2006,tikka2017b}. An important graphical structure related to identifiability is a confounded component, or a c-component for short (sometimes also called a ``district''). C-components are typically defined for semi-Markovian causal models, that is, models where unobserved variables have exactly two children and they are represented graphically by bidirected edges instead of including the corresponding variables explicitly in the graph. Because transit clusters can contain vertices that represent unobserved variables, we provide an equivalent definition of c-components for causal diagrams where unobserved variables are present.

\begin{definition}[c-component] \label{def:c-component}
Let $\sG = (V \cup U, E)$ be a causal diagram of a causal model $\sM$. If there exists a path between every pair of vertices $i,j \in V$ such that every vertex on the path that is a member of $V$ is a collider with the exception of $i$ and $j$, and the path contains at least one vertex that is a member of $U$, then $\sG$ is a c-component.
\end{definition}

When c-components are defined as in Definition~\ref{def:c-component}, we may define maximal c-components, c-trees, c-forests and hedges analogously they are defined by \citet{shpitser2006} (see Appendix~\ref{app:hedges} for details). Any causal diagram that is not a c-component can be uniquely partitioned into a set of its maximal c-components. A causal effect $p(x_A \cond \doo(x_B))$ is identifiable from $p(x_V)$ in $\sG = (V \cup U,E)$ if and only if $\sG$ does not contain a hedge for any $p(x_{A'} \cond \doo(x_{B'}))$ in $\sG$, where $A' \subseteq A$ and $B' \subseteq B$. The existence of a hedge means that the graph has two c-components that fulfill specific graphical conditions, and that can be used to construct two causal models that agree on $p(x_V)$ but disagree on $p(x_A \cond \doo(x_B))$.

Identifiability may not be preserved by arbitrary clusters of vertices in the causal diagram, meaning that a causal effect may be identifiable in the original graph but not in the graph induced by the cluster or vice versa. Fortunately, transit clusters can be proven to preserve identifiability under specific conditions. The first condition requires that the emitters and the parents of receivers are observed:
\begin{definition} \label{def:plaintransitcluster}
A transit cluster $T$ in a DAG $\sG = (V \cup U,E)$ of a causal model $\sM$ is \emph{plain} if $\Pa^*(\rec(T)) \cup \emi(T) \subseteq V$.
\end{definition}
In a plain transit cluster, any latent confounders will always be clustered together with their children. The second condition requires that the entire transit cluster belongs to the same c-component: 
\begin{definition} \label{def:congestedtransitcluster}
A transit cluster $T$ in a DAG $\sG = (V \cup U,E)$ of a causal model $\sM$ is \emph{congested} if all members of $T$ belong to the same c-component in $\sG$ and  $\emi(T) \subseteq V$.
\end{definition}
In essence, plain and congested transit clusters do not change the c-components of the causal diagram. Thus, identifiability is preserved for plain and congested transit clusters.
\begin{restatable}{theorem}{identifiability}
\label{th:identifiability}
Let $\sG = (V \cup U,E)$ be a DAG of a causal model $\sM$ and let $A$ and $B$ be disjoint subsets of $V$. Let $T = \{t_1, \ldots, t_n\}$ be a restricted transit cluster with respect to $(V \cup U) \setminus (A \cup B)$ in $\sG$, and let $\sG' = (V' \cup U',E')$ be the induced graph of the cluster with vertex $t' \in V'$ as the representative of $T$. 
If $T$ is plain or congested, then $p(x_A \cond \doo(x_B))$ is identifiable from $p(x_V)$ in $\sG$ precisely when it is identifiable from $p(x_{V'})$ in $\sG'$. 
\end{restatable}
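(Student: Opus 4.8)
The plan is to argue through the hedge characterization of identifiability recalled before the theorem: $p(x_A \mid \doo(x_B))$ is identifiable from $p(x_V)$ in $\sG$ if and only if $\sG$ contains no hedge for any $p(x_{A'} \mid \doo(x_{B'}))$ with $A' \subseteq A$ and $B' \subseteq B$, and likewise for $\sG^\prime$. Because $T$ is a restricted transit cluster with respect to $(V \cup U) \setminus (A \cup B)$, it is disjoint from $A \cup B$, so every such $A'$ and $B'$ lies outside $T$ and equals its own clustering-equivalent set in $V'$; together with the functional identity $g(p(x_V)) = g(p(x_{V'}))$ noted before Definition~\ref{def:causalmodel}, this reduces the theorem to showing that, for every $A' \subseteq A$ and $B' \subseteq B$, the graph $\sG$ contains a hedge for $p(x_{A'} \mid \doo(x_{B'}))$ precisely when $\sG^\prime$ does.

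The next ingredient is a pair of preservation lemmas for a plain or congested transit cluster $T$, its induced graph $\sG^\prime$, and the representative $t$. First, clustering preserves ancestry: it carries ancestor sets to ancestor sets, and does so also after deleting the edges incoming to $B'$. This uses conditions~\ref{itm:forallr} and~\ref{itm:foralle} of Definition~\ref{def:transitcluster} (every receiver reaches an emitter inside $T$ and every emitter is reached from a receiver inside $T$) so that collapsing $T$ neither creates nor destroys directed reachability, together with condition~\ref{itm:parents} so that $t$ inherits exactly the receivers' external parents. Second, clustering preserves the c-component partition: two vertices of $V \setminus T$ share a c-component in $\sG$ iff their images do in $\sG^\prime$, and the c-component of $t$ is the image of the c-component(s) meeting $T$. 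For a congested cluster this is immediate because $T$ already sits inside a single c-component and $\emi(T) \subseteq V$, so contracting $T$ can neither split nor merge c-components; for a plain cluster it follows from $\Pa^*(\rec(T)) \cup \emi(T) \subseteq V$, which forces every unobserved vertex adjacent to $T$ to have all of its children inside $T$, hence to be absorbed into $t$. These two facts are the precise meaning of the remark that plain and congested transit clusters do not change the c-components of the causal diagram.

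With the preservation lemmas in hand, I would transfer hedges in both directions. Let $(F, F')$ be a hedge for $p(x_{A'} \mid \doo(x_{B'}))$ in $\sG$. If $F \cap T = \emptyset$, the images of $F$ and $F'$ under clustering form a hedge in $\sG^\prime$ by the two lemmas, since $B' \cap T = \emptyset$ keeps $B'$ unchanged. If $F \cap T \neq \emptyset$, one first chooses the hedge so that its intersection with $T$ is as thin as possible: because c-forests have out-degree at most one and, by condition~\ref{itm:children}, all emitters of $T$ agree on their children outside $T$, one prunes $F$ (and $F'$) so that $F \cap T$ is a single directed path from one receiver to one emitter of $T$ (or lies entirely inside $T$), carrying at most one entering and at most one leaving hedge edge; contracting this path to $t$ then produces c-forests exhibiting a hedge in $\sG^\prime$, with the single-c-component requirement supplied by the second lemma and the requirement that the root set lie in the appropriate ancestor set by the first. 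Conversely, a hedge in $\sG^\prime$ that avoids $t$ lifts verbatim, and one that contains $t$ is lifted by replacing $t$ with a directed path from a receiver to an emitter inside $\sG[T^=]$ --- which exists by conditions~\ref{itm:no_ronsy_allowed}, \ref{itm:forallr} and~\ref{itm:foralle} --- and re-attaching the at-most-one incoming and at-most-one outgoing hedge edge of $t$ to the endpoints of that path; conditions~\ref{itm:parents} and~\ref{itm:children} make the re-attachment consistent, and the preservation lemmas certify that a hedge results.

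I expect the main obstacle to be precisely this contraction/expansion surgery on c-forests. A c-forest is not merely a subset of a c-component: every non-root vertex must have exactly one child in it, so a naive contraction of $T$ can turn $t$ into a vertex with several children --- one per emitter carrying a distinct child of $F$ --- and destroy the forest property, while the inverse problem of expanding $t$ requires choosing a single directed channel through $T$ and checking that the resulting root set still lands in $\An(A')$ after removing the edges incoming to $B'$. Making this go through needs the pruning claim above, namely that a hedge, when one exists, can always be taken to meet $T$ in a single receiver-to-emitter path, and a careful verification that conditions~\ref{itm:parents}--\ref{itm:foralle} together with plainness or congestedness preserve all defining properties of a hedge under the contraction and its inverse. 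The two preservation lemmas, by contrast, should be routine consequences of Definition~\ref{def:transitcluster} and Definitions~\ref{def:plaintransitcluster}--\ref{def:congestedtransitcluster}.
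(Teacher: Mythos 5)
Your overall route is the same as the paper's: reduce to the hedge criterion, argue that a plain or congested transit cluster leaves the c-component structure intact, and transfer hedges between $\sG$ and $\sG'$ in both directions (the reverse direction, replacing $t$ by a receiver-to-emitter path or by a receiver when $t$ is in the root set, matches the paper's argument almost verbatim). But the forward direction of your argument rests on an unproven normalization step, and you say so yourself: the claim that a hedge meeting $T$ can always be pruned so that it meets $T$ in a single receiver-to-emitter directed path, carrying at most one entering and one leaving edge, after which contraction to $t$ yields a hedge in $\sG'$. This is exactly where the difficulty lives. Contracting $F \cap T$ can give $t$ several children (one per emitter of $T$ in $F$, each possibly pointing to a different common child), violating the out-degree-one requirement of a c-forest; pruning to restore it must simultaneously preserve the c-component property of $F$, keep the two nested forests $\sF' \subseteq \sF$ rooted at the \emph{same} root set, keep $\sF \cap B' \neq \emptyset$ and $\sF' \cap B' = \emptyset$, keep $R \subseteq \An_{\sG[\overline{B'}]}(A')$, and handle the case $R \cap T \neq \emptyset$ where there is no emitter below the intersection at all (your parenthetical ``or lies entirely inside $T$'' does not resolve this case). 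None of this is verified, and your preservation lemmas (ancestry and c-components) do not by themselves deliver it; in particular the c-component lemma for the congested case is not ``immediate'' from congestedness but needs condition~\ref{itm:parents} of Definition~\ref{def:transitcluster} (all receivers share their external parents) to reroute confounding paths through a single retained receiver. So as it stands the proposal has a genuine gap at its central step.

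You also miss a simplification that the paper exploits and that would remove the surgery entirely for one of the two cases: for a \emph{plain} cluster, $\Pa^*(\rec(T)) \cup \emi(T) \subseteq V$ implies no unobserved vertex has children both in $T$ and in $(V\cup U)\setminus T$, hence no c-component contains vertices of both $T$ and $B'$. Since the larger c-forest of any hedge for $p(x_{A'} \cond \doo(x_{B'}))$ is itself a c-component and must intersect $B'$, it can never intersect $T$; thus no hedge touches $T$ at all, hedges are in bijection with hedges of $\sG'$ avoiding $t$, and no contraction argument is needed. The delicate transfer is only required for congested clusters, and there the paper's proof organizes it around the correspondence of c-components containing $T$ versus $t$ and of root sets ($Q$ versus $(Q\setminus T)\cup\{t\}$), rather than around a pruning-to-a-single-path claim. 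To complete your version you would need to actually prove the pruning/contraction claim with all the hedge bookkeeping listed above, or restructure the congested case along those correspondence lines.
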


Figure~\ref{fig:nonidtransitcluster} demonstrates that in general identifiability may be lost in clustering even if the cluster is a transit cluster. In these examples, receiver $r$ and emitter $e$ form a transit cluster that is neither plain nor congested because $r$ has unobserved variables as parents but $r$ and $e$ do not belong to the same c-component. Consequently, Theorem~\ref{th:identifiability} does not apply for these transit clusters.
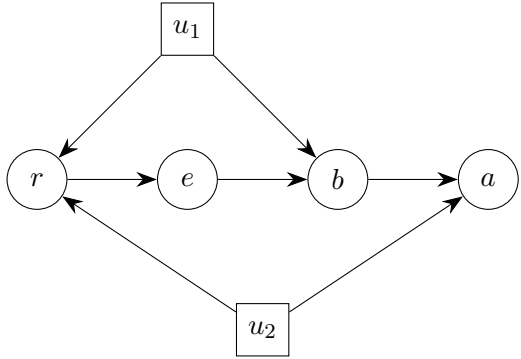
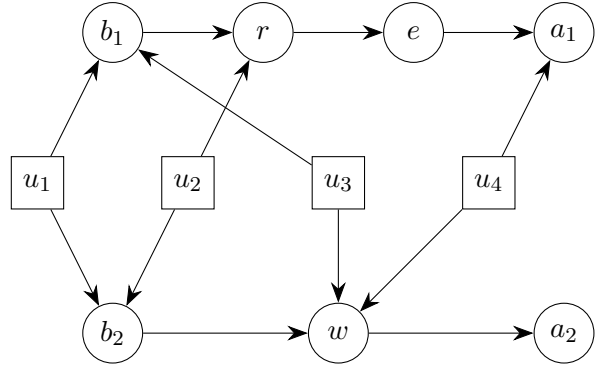
\begin{figure}[ht]
\begin{center}
\begin{subfigure}[t]{0.35\textwidth}
\centering
\begin{tikzpicture}
\node [obs = {a}] at (2,0) {};
\node [obs = {b}] at (0,0) {};
\node [obs = {e}] at (-2,0) {};
\node [obs = {r}] at (-4,0) {};
\node [lat = {u_1}] at (-2,2) {};
\node [lat = {u_2}] at (-1,-2) {};
\path [->] (r) edge (e);
\path [->] (b) edge (a);
\path [->] (e) edge (b);
\path [->] (u_1) edge (b);
\path [->] (u_1) edge (r);
\path [->] (u_2) edge (a);
\path [->] (u_2) edge (r);
\end{tikzpicture}
\caption{Query $p(x_a \cond \doo(x_b))$ is identified from $p(x_r,x_e,x_b,x_a)$.} 
\end{subfigure}
\hfill
\begin{subfigure}[t]{0.5\textwidth}
\centering
\begin{tikzpicture}
\node [obs = {b_1}] at (0,4) {};
\node [obs = {r}] at (2,4) {};
\node [obs = {e}] at (4,4) {};
\node [obs = {a_1}] at (6,4) {};
\node [obs = {b_2}] at (0,0) {};
\node [obs = {w}] at (3,0) {};
\node [obs = {a_2}] at (6,0) {};
\node [lat = {u_1}] at (-1,2) {};
\node [lat = {u_2}] at (1,2) {};
\node [lat = {u_3}] at (3,2) {};
\node [lat = {u_4}] at (5,2) {};
\path [->] (b_1) edge (r);
\path [->] (r) edge (e);
\path [->] (e) edge (a_1);
\path [->] (b_2) edge (w);
\path [->] (w) edge (a_2);
\path [->] (u_1) edge (b_1);
\path [->] (u_1) edge (b_2);
\path [->] (u_2) edge (r);
\path [->] (u_2) edge (b_2);
\path [->] (u_3) edge (b_1);
\path [->] (u_3) edge (w);
\path [->] (u_4) edge (a_1);
\path [->] (u_4) edge (w);
\end{tikzpicture}
\caption{Query $p(x_{a_1},x_{a_2} \cond \doo(x_{b_1},x_{b_2}))$ is identified from $p(x_{b_1},x_{b_2},x_r,x_e,x_w,x_{a_1},x_{a_2})$.}
\end{subfigure}
%

%

\caption{Two examples where identifiability is lost when transit cluster $T=\{r,e\}$ is replaced by a single vertex. The vertices of unobserved variables are denoted by squares.} \label{fig:nonidtransitcluster}
\end{center}
\end{figure}

\section{Robustness of Structural Assumptions} \label{sec:robustness}

Consider now the top-down causal modeling where a cluster $T$ represents observed variables $(X_{t_1},\ldots, X_{t_n})$ and an unspecified number of unobserved background variables. The cluster is represented by a single node $t$ in a DAG $\sG^\prime$ but the causal structure inside the cluster has not been specified. Assume that a causal effect $p(x_A|\doo(x_B))$, where $T \cap (A \cup B) = \emptyset$, is identifiable from $p(x_{V'})$ under the structural assumptions coded in a causal diagram $\sG^\prime$ and $g(p(x_{V'}))$ is an identifying functional for $p(x_A|\doo(x_B))$. 
We are interested in characterizing the internal structure of cluster $T$ for which we can guarantee that $g(p(x_V))$ obtained from $g(p(x_{V}'))$ by explicitly replacing $x_t$ by its observed components $(x_{t_1},\ldots, x_{t_n})$, is an identifying functional for $p(x_A|\doo(x_B))$ in $\sG$. We will show that a plain or congested transit cluster fulfills this requirement of structural robustness. 

We start with DAG $\sG^\prime$ where the single node transit cluster $\{t\}$ represents the random variables $X_{t_1},\ldots, X_{t_n},X_{u_1},\ldots, X_{u_m}$, and the number of unobserved variables $m$ has been chosen arbitrarily. We apply the peripheral extension of Definition~\ref{def:peripheral} until all variables $X_{t_1},\ldots, X_{t_n},X_{u_1},\ldots, X_{u_m} $ of the cluster are explicitly presented as vertices $\{t_1, \ldots, t_n,u_1,\ldots,u_m\}$ of graph $\sG$.  Finally, we state conditions that the identifying functional remains valid. 

\begin{restatable}{theorem}{extensionid} \label{th:extension_identifiability}
Let $X_V$ be a vector of observed random variables, $X_U$ a vector of unobserved random variables, and $\sG' = (V' \cup U',E')$ the causal diagram of a causal model $\sM'$ where vertex $t \in V'$ represents set $T = \{t_1, \ldots, t_n,u_1,\ldots,u_m\}$ in $\sG'$,  $t_1, \ldots, t_n \in V$,  $u_1,\ldots, u_m \in U$, and $v \in V' \setminus \{t\}$ implies $v \in V$. Let $\sG = (V \cup U, E)$ be a DAG obtained from $\sG^\prime$ by applying a series of peripheral extensions to vertex $t$ such a way that $\sG$ is a causal diagram. If $T$ is a plain or congested transit cluster in $\sG$, the following holds for disjoint subsets $A$ and $B$ of $V'$ such that $T \cap (A \cup B) = \emptyset$.
\begin{enumerate}
\item  Causal effect $p(x_A \cond \doo(x_B))$ is identifiable from $p(x_{V})$ in $\sG$ exactly when it is identifiable from $p(x_{V'})$ in $\sG'$. 
\item If $g(p(x_{V'}))$ is an identifying functional for $p(x_A \cond \doo(x_B))$ in $\sG'$, 
 it is also an identifying functional for $p(x_A \cond \doo(x_B))$ in $\sG$.
\end{enumerate}
\end{restatable}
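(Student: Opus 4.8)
The plan is to derive Theorem~\ref{th:extension_identifiability} as a corollary of the machinery already built, chaining together Theorem~\ref{th:peripheral_induced} and Theorem~\ref{th:identifiability}. First I would set up the induction: write the series of peripheral extensions that transforms $\sG'$ into $\sG$ as a finite sequence $\sG' = \sG_0, \sG_1, \ldots, \sG_N = \sG$, where $\sG_{i+1}$ is a peripheral extension graph of $\sG_i$ obtained by applying one operation from Definition~\ref{def:peripheral} to the current transit cluster $T_i$ (with $T_0 = \{t\}$ and $T_N = T$). Theorem~\ref{th:peripheral_induced} tells us two things at each step: $T_{i+1}$ is a transit cluster in $\sG_{i+1}$, and crucially, the \emph{induced graph} of $T_{i+1}$ in $\sG_{i+1}$ is the same as the induced graph of $T_i$ in $\sG_i$. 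Iterating, the induced graph of $T$ in $\sG$ equals the induced graph of $\{t\}$ in $\sG'$, which is just $\sG'$ itself.

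Once that observation is in place, part~1 is almost immediate: apply Theorem~\ref{th:identifiability} to the transit cluster $T$ in $\sG$. The hypothesis of that theorem requires $T$ to be a plain or congested restricted transit cluster with respect to $(V \cup U) \setminus (A \cup B)$; the restriction condition $T \cap (A \cup B) = \emptyset$ is given, and plainness/congestedness of $T$ in $\sG$ is assumed. Theorem~\ref{th:identifiability} then states that $p(x_A \cond \doo(x_B))$ is identifiable from $p(x_V)$ in $\sG$ precisely when it is identifiable from $p(x_{V'})$ in the induced graph of $T$, which we have just argued is $\sG'$. One subtlety to check is that $A, B \subseteq V'$ as subsets of the original vertex set correspond to the ``same'' vertices in $\sG$ (they are untouched by peripheral extension, since no peripheral operation modifies vertices outside the cluster or their adjacencies to cluster-external vertices beyond what is already controlled), so the queries match up literally.

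For part~2, the point is that Theorem~\ref{th:identifiability} does not merely assert equivalence of the yes/no identifiability question but, via its proof, that the identifying functional is \emph{the same expression}. Here I would invoke the clustering-equivalence of random variables set up just before Definition~\ref{def:causalmodel}: for the clustering-equivalent sets $V$ and $V'$ and any functional $g$ of the joint distribution, $g(p(x_V)) = g(p(x_{V'}))$ once $x_t$ is read as the tuple of its observed components $(x_{t_1}, \ldots, x_{t_n})$. So if $g(p(x_{V'}))$ identifies $p(x_A \cond \doo(x_B))$ in $\sG'$, then since the interventional distribution is itself invariant under the clustering (the query vertices are external to $T$), the same functional, reinterpreted over $x_V$, identifies the effect in $\sG$. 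I would make this precise by noting that Theorem~\ref{th:identifiability}'s proof exhibits a correspondence between causal models of $\sG$ and $\sG'$ agreeing on the relevant marginals and interventionals, so any formula valid in all models of $\sG'$ pulls back to a formula valid in all models of $\sG$.

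The main obstacle I anticipate is bookkeeping around the observed/unobserved partition through the peripheral extensions: Definition~\ref{def:peripheral}'s operations add bare vertices without specifying whether they are observed or unobserved, whereas the statement fixes $T = \{t_1,\ldots,t_n,u_1,\ldots,u_m\}$ with a designated split, and plainness/congestedness are conditions involving $V$ versus $U$. I would handle this by observing that the theorem only requires the \emph{final} cluster $T$ in the \emph{final} graph $\sG$ to be plain or congested and to be a causal diagram (each new $u_j$ has at most two children, noise terms clustered with children, etc.); the intermediate transit-cluster property is purely graphical and supplied by Theorem~\ref{th:peripheral_induced} regardless of labels. Thus the labeling is only consulted once, at the end, where the hypotheses of Theorem~\ref{th:identifiability} are verified directly. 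The rest is the routine induction and the invocation of clustering-equivalence of functionals.
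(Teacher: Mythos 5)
Part~1 of your argument is correct and is essentially the paper's: iterate Theorem~\ref{th:peripheral_induced} to conclude that $T$ is a transit cluster in $\sG$ whose induced graph is $\sG'$, then invoke Theorem~\ref{th:identifiability} (the restriction $T \cap (A \cup B) = \emptyset$ giving the required restricted cluster). Your remark about the observed/unobserved bookkeeping is also acceptable, since the theorem assumes the final $\sG$ is a causal diagram with $T$ plain or congested; the paper handles this by noting the extensions can be ordered so that latents are added last via operations~\ref{peri:addedge} and \ref{peri:addparent} without parents.

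The gap is in part~2. You rest the entire claim on the assertion that ``Theorem~\ref{th:identifiability}'s proof exhibits a correspondence between causal models of $\sG$ and $\sG'$ agreeing on the relevant marginals and interventionals.'' It does not: that proof is purely a graphical hedge-existence argument and says nothing about identifying functionals, nor does it construct from a model of $\sG$ a model of $\sG'$ with matching distributions. Likewise the clustering-equivalence fact $g(p(x_W)) = g(p(x_{W'}))$ from Section~\ref{sec:transitcausal} only concerns re-indexing variables within a single model; it does not relate the class of models compatible with $\sG$ to the class compatible with $\sG'$, which is what the universally quantified statement ``$g$ is an identifying functional'' requires. The missing step is precisely the substance of part~2. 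Your semantic route could be completed, but you would have to do the work yourself: from an arbitrary model $\sM$ with diagram $\sG$, build the grouped model $\sM'$ with $X_t = (X_{t_1},\ldots,X_{t_n})$, absorb the in-cluster latents $u_1,\ldots,u_m$ into the noise of $X_t$ (this is where plainness/congestedness enters, via $\emi(T) \subseteq V$ so no in-cluster latent has a child outside $T$), verify that the causal diagram of $\sM'$ is indeed $\sG'$ (minimality of parent sets, cross-boundary latents attaching to $t$), and check that $p_{\sM'}(x_{V'})$ equals $p_{\sM}(x_V)$ under the re-indexing while $p_{\sM'}(x_A \cond \doo(x_B)) = p_{\sM}(x_A \cond \doo(x_B))$ because $T \cap (A \cup B) = \emptyset$. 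The paper instead argues syntactically: it shows every d-separation among clustering-equivalent sets in $\sG'$ holds in $\sG$, and that the edge deletions used by the do-calculus rules transfer (incoming/outgoing edges of $t$ corresponding to edges into $\rec(T)$ and out of $\emi(T)$), so the do-calculus derivation yielding $g$ in $\sG'$ remains valid in $\sG$. Either route is legitimate, but as written your part~2 delegates its key step to a proof that does not contain it.
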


\section{Use cases and illustrations} \label{sec:illustration}

Transit clusters can be applied in various ways. Here we demonstrate their use in reducing the size of a causal diagram, simplification of identifying functionals, speeding up identification algorithms, and top-down causal modeling.

\subsection{Reducing the size of a causal diagram}

As an example of simplification of the causal diagrams and identifying functionals, as well as the robustness of causal effect estimation, we consider a graph related to the Sangiovese grapes studied earlier as a conditional linear Gaussian network by \citet{Magrini2017}, where the interest was in the effect of various treatments $X_b$ on the mean weight of grapes $X_a$. In addition to $a$ and $b$, the graph contains vertices $z_1,\ldots, z_{13}$ which are related to different characteristics of the soil, grape plants and must \citep{Magrini2017}. Compared to the original graph, we added a latent confounder $X_u$ between the treatment variable $X_b$ and the mean weight of grapes $X_a$ for illustrative purposes. We do not present the causal diagram graphically as its large number of vertices (16) and edges (57) makes it difficult to visualize clearly.

Applying algorithm \ref{alg:find_clusters} gives us transit cluster $T = \{z_1,\ldots,z_{13}\}$, with variables $r = \{z_1, z_2, z_4, z_{10}\}$ as receivers and $e = \{z_1, z_3, z_4, z_6, z_7, z_8,z_{10},z_{12},z_{13}\}$ as emitters (with variables $z_5$, $z_9$, and $z_{11}$ being neither). This leads to a simplified graph in Figure \ref{fig:sangiovese}.

\begin{figure}[h]
	\begin{center}
		\begin{tikzpicture}
		\node [obs = {a}] at (2,0) {};
		\node [obs = {b}] at (-2,0) {};
		\node [obs = {t}] at (0,0) {};
		\node [lat = {u}] at (0,2) {};
		\path [->] (b) edge (t);
		\path [->] (t) edge (a);
		\path [->] (u) edge (a);
		\path [->] (u) edge (b);
		\end{tikzpicture}
		\caption{Induced graph of the Sangiovese graph where the transit cluster $T = \{z_1,\ldots,z_{13}\}$ is replaced with single vertex $t$.}
		\label{fig:sangiovese}
	\end{center}
\end{figure}
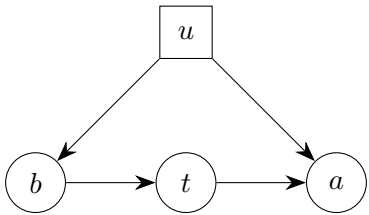

\subsection{Simplification of identifying functionals}

The application of the ID-algorithm \citep{shpitser2006} to original graph related to the Sangiovese grapes leads to long and complicated identifying functional:
\begin{equation}\label{eq:sangiovese_nocluster}
\begin{aligned}
p(x_a \cond \doo(x_b)) &=
\sum_{x_{z_1},\ldots,x_{z_{13}}}\left[\vphantom{\left(\sum_{x_b}\right)}\right. 
p(x_{z_{13}}|x_b,x_{z_1},\ldots,x_{z_{12}})p(x_{z_{12}}|x_b,x_{z_1},\ldots, x_{z_8}, x_{z_{10}}, x_{z_{11}}) \\
&\times p(x_{z_{11}}|x_b,x_{z_1},\ldots, x_{z_8}, x_{z_{10}})p(x_{z_{10}}|x_b,x_{z_1},\ldots, x_{z_8})p(x_{z_9}|x_b,x_{z_1},\ldots, x_{z_8})\\
&\times p(x_{z_8}|x_b,x_{z_1},\ldots, x_{z_7}) p(x_{z_7}|x_b,x_{z_1},\ldots, x_{z_6}) p(x_{z_6}|x_b,x_{z_1},x_{z_2},x_{z_3},x_{z_5})\\
&\times p(x_{z_5}|x_b,x_{z_1},x_{z_2},x_{z_3})p(x_{z_4}|x_b,x_{z_1})p(x_{z_3}|x_b,x_{z_1},x_{z_2})p(x_{z_2}|x_b,x_{z_1})p(x_{z_1}|x_b)  \\
&\times \left. \left(\sum_{x'_b}p(x_a|x'_b,x_{z_1},\ldots,x_{z_{13}})p(x'_b)\right)\right]
\end{aligned}
\end{equation}
On the contrary, the application of the ID-algorithm to the clustered graph of Figure~\ref{fig:sangiovese} leads to identifying functional of form
\begin{equation}\label{eq:sangiovese_cluster}
\begin{aligned}
p(x_a \cond \doo(x_b)) &= \sum_{x_t}p(x_t | x_b)\left(\sum_{x'_b}p(x_a|x'_b,x_t)p(x'_b)\right),
\end{aligned}
\end{equation}
i.e. front-door adjustment. This enables us to model $p(x_t | x_b)$ in an arbitrary, but consistent manner without making specific claims about the internal structure of $T$.

\subsection{Speeding up identification algorithms}

Identifying functionals obtained from the application of ID-algorithm or general do-calculus are often unnecessarily complex and could be further simplified \citep{tikka2017simplifying}. This can allow easier interpretation of the identifying functional and more efficient estimation of the causal effect. The R \citep{R} package causaleffect \citep{tikka2017b} implements an automatic simplification algorithm for this task, however, the algorithm can be slow in case of large graphs. Alternatively if we can first simplify the graph by clustering, we can reduce the computational 
burden of both the identification algorithm as well as subsequent simplification algorithm. For example, in case of the Sangiovese graph, the causaleffect package returns \eqref{eq:sangiovese_nocluster} in 0.1 seconds with simplification option disabled and 132 seconds with simplification enabled (which in this case does not lead to simpler equation) on a standard laptop. On the other hand, running the clustering algorithm and subsequent identification (which returns \eqref{eq:sangiovese_cluster}) takes only 0.5 seconds. Importantly, the simplification algorithm of \citet{tikka2017simplifying} is NP-hard, and thus it may be possible to obtain simpler identifying functionals using transit clusters in scenarios where direct simplification is infeasible. The code for this benchmark is also available in the GitHub repository.

\subsection{Top-down causal modeling}

As an example of peripheral extension and the robustness of the estimation strategies, consider a causal graph shown in Figure~\ref{fig:fsd}, studied earlier by \citet{helske2021}, where the interest is in the causal effect of the education level $X_e$ on income $X_i$. Variable $X_s$ measures general language skills on Illinois Test of Psycholinguistic Abilities (ITPA), which is a composite of 12 subtests. Thus instead of vertex $s$ representing a single composite variable $X_s$ in Figure~\ref{fig:fsd}, we can by the peripheral extension (\ref{def:peripheral}) treat it as a transit cluster $T = \{s_1,\ldots, s_{12}\}$ (with $s_i$ corresponding to the subtest $i$) without affecting the identifiability of the causal effect $p(x_i \cond \doo(x_e))$. While the causal effect estimates can depend on whether we use $X_s$ or $X_T$ in the modelling, the obtained estimator is robust to these changes in a sense that the methodology of \citet{helske2021} can be used to estimate the effect in both cases.

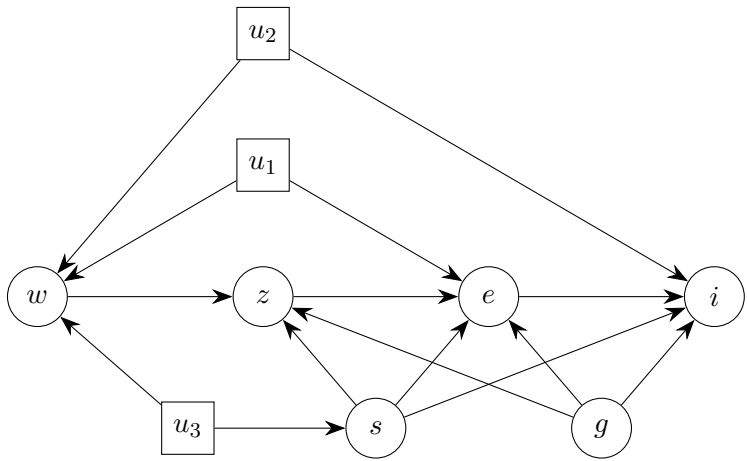
\begin{figure}
  \begin{center}
\begin{tikzpicture}
\node [obs = {g}] at (1.5,-1.75) {};
\node [obs = {s}] at (-1.5,-1.75) {};
\node [obs = {i}] at (3,0) {};
\node [obs = {e}] at (0,0) {};
\node [obs = {z}] at (-3,0) {};
\node [obs = {w}] at (-6,0) {};
\node [lat = {u_1}] at (-3,1.75) {};
\node [lat = {u_2}] at (-3,3.5) {};
\node [lat = {u_3}] at (-4,-1.75) {};
\path [->] (s) edge (i);
\path [->] (w) edge (z);
\path [->] (s) edge (e);
\path [->] (s) edge (z);
\path [->] (e) edge (i);
\path [->] (g) edge (i);
\path [->] (g) edge (e);
\path [->] (g) edge (z);
\path [->] (z) edge (e);
\path [->] (u_1) edge (e);
\path [->] (u_1) edge (w);
\path [->] (u_2) edge (i);
\path [->] (u_2) edge (w);
\path [->] (u_3) edge (s);
\path [->] (u_3) edge (w);
\end{tikzpicture}

\caption{Causal diagram representing the effect of education level $X_e$ on income $X_i$. Other variables represented in graph are gender $X_g$, score on Illinois Test of Psycholinguistic Abilities (ITPA) $X_s$, socioeconomic status of the parents $X_w$, and the grade point average $X_z$ at the end of primary school. Variables $X_{u_1}, X_{u_2}, X_{u_3}$ are unobserved.}
\label{fig:fsd}
\end{center}
\end{figure}

As another example of peripheral extension, we consider an epidemiological application studied earlier by \citet{karvanen2020search}. The question of interest is the causal effect of salt-adding behavior on the salt intake. The high salt intake is one of the causes of hypertension \citep{he2013effect}. 

The example is based on the National Health and Nutrition Examination Survey (NHANES, \url{https://wwwn.cdc.gov/nchs/nhanes/}) 2015--2016 that is an observational study on the health and nutritional status of adults and children in the United States.
The NHANES variables are already divided into categories by their content and the type of the data. In the top-down modeling, these categories may correspond to transit clusters in the causal diagram. An example of a causal diagram constructed by this approach is shown in Figure~\ref{fig:salt}. The peripheral extension (Definition~\ref{def:peripheral}) can be used to extend the clusters. For instance, the cluster represented by the vertex $a$, ``Salt-adding behavior'', may consists of the following variables measured in NHANES:
\begin{enumerate}
 \item \textit{How often do you add ordinary salt to your food at the table?} (Rarely 0, Occasionally 1, Very often 2)
 \item \textit{Did you add any salt to your food at the table yesterday?} (No 0, Yes 1), and
 \item \textit{How often is ordinary salt or seasoned salt added in cooking or preparing foods in your household?} (Never 0, Rarely 1, Occasionally 2, Very often 3).
\end{enumerate}
and the variables of the cluster represented by vertex $b$, ``Diet behavior'', may include
\begin{enumerate}
\item \textit{Are you on low salt/low sodium diet?}
\item \textit{Are you on other special diet?} (several options)
\item \textit{Number of meals not home prepared during the past 7 days}
\item \textit{Number of meals from fast food or pizza place during the past 7 days}
\end{enumerate}

The use of transit clusters provides a formal justification for the top-down modeling.
Especially, Theorem~\ref{th:extension_identifiability} states sufficient conditions for the validity of conclusions made with the clustered graph. 

\begin{figure}
\begin{center}
\begin{tikzpicture}[xscale=2.5,yscale=2.0]
\node [obs = {o}] at (0,0) {};
\node [obs = {d}] at (1,1) {};
\node [obs = {i}] at (1,-1) {};
\node [obs = {a}] at (2,1) {};
\node [obs = {b}] at (2,-1) {};
\node [obs = {s}] at (3,0) {};
\node [lat = {u_1}] at (0.4,-1.6) {};
\node [lat = {u_2}] at (4,0) {};

\path [->] (d) edge (o);
\path [->] (d) edge (i);
\path [->] (d) edge (a);
\path [->] (d) edge (b);
\path [->] (d) edge (s);

\path [->] (o) edge (i);
\path [->] (o) edge (a);
\path [->] (o) edge (b);
\path [->] (o) edge (s);

\path [->] (i) edge (a);
\path [->] (i) edge (b);
\path [->] (i) edge (s);

\path [->] (b) edge (a);
\path [->] (b) edge (s);
\path [->] (a) edge (s);

\path [->] (u_1) edge (o);
\path [->] (u_1) edge (i);
\path [->] (u_1) edge (b);
\path [->] (u_2) edge (a);
\path [->] (u_2) edge (b);
\end{tikzpicture}
\caption{Causal model for the salt intake example. The vertices represent the following transit clusters of variables: Salt-adding behavior $X_a$ (represented by a single vertex $a$), salt intake $X_s$, diet behavior $X_b$, demographic variables  $X_d$, occupation $X_o$, and income $X_i$.} \label{fig:salt}
\end{center}
\end{figure}
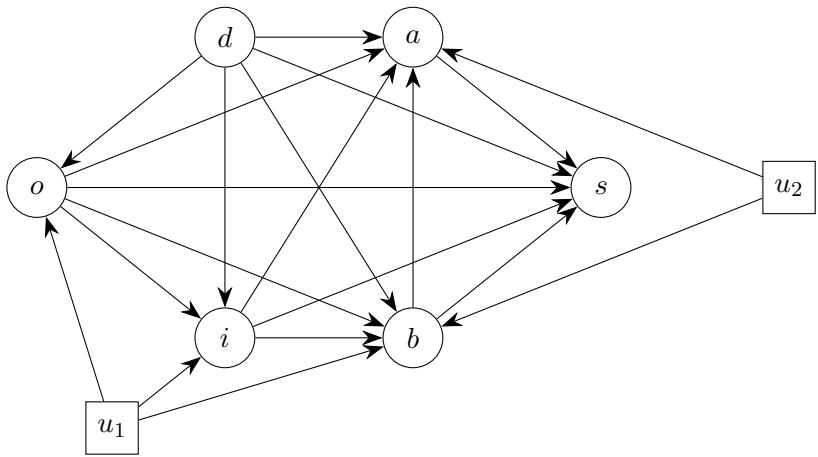

\section{Discussion} \label{sec:discussion}
We have considered clustering from two starting points. First, we started with an unclustered DAG that may have a large number of vertices and proposed algorithms for finding transit components and transit clusters, allowing us to simplify the representation of the DAG and the obtained identifying functional. Furthermore, we provided sufficient conditions for non-identifiability in a clustered DAG to imply non-identifiability in the original DAG. Second, we started with a clustered DAG where a single vertex represents a group of variables and presented the peripheral extension, a procedure for constructing all transit clusters that are compatible with the clustered DAG. We showed that an identifying functional for a causal effect in the clustered DAG remains valid in DAGs obtained via peripheral extension. 

A transit cluster was deliberately defined for a DAG without any reference to a causal model. This allows us to cluster vertices even before it is known which data will be available. The division into observed and unobserved variables is however hard-coded into the definition of a structural causal model where an unobserved variable cannot have parents. This restriction is taken into account in Theorems~\ref{th:identifiability} and \ref{th:extension_identifiability}.

The DAG-based definition of a transit cluster makes it possible to apply a workflow where Algorithm~\ref{alg:find_components} is first run for the whole graph in order to find all transit components. Restrictions may then be applied to these transit components before transit clusters are constructed by Algorithm~\ref{alg:find_clusters}. The same transit components can be re-used when the causal effect in the focus is changed to a new one that implies different restrictions for the transit clusters.


The examples presented in Section~\ref{sec:illustration} illustrate the use of transit clusters in reducing the size of a causal diagram, simplifying identifying functionals, and speeding up identification algorithms. 
The identifying functional defined using the representative vertex in place of transit cluster allows a researcher to focus on the overall structure of the functional when choosing suitable estimation methods for the causal effect. Transit clusters also provide a justification for the top-down causal modeling. 

In addition to the use cases considered,
clustering could be beneficial also in causal discovery \citep{spirtes2000,spirtes2001}. If a set of variables can be assumed to form a transit cluster, we may, at least theoretically, use any single variable of the set as representative of the whole cluster when considering whether the cluster and a variable outside the cluster are d-separated. In general, causal discovery methods can construct the underlying DAG only up to an equivalence class and additional challenges with finite samples may occur due to a variety of reasons, such as measurement error \citep{zhang2017}, selection bias \citep{zhang2016}, or missing data \citep{tu2019}. The assumption on a transit cluster could in some cases provide the information needed to reduce these ambiguities.

In future work, we would like to extend the results of Sections~\ref{sec:transitcausal} and \ref{sec:robustness} to more general identifiability problems with multiple data sources consisting of a mix of observational and interventional distributions. We hypothesize, that at least plain transit clusters can be used to retain identifiability in more complex settings. It may also be possible to extend the definition of transit clusters to graphs where the direction of some edges is unknown.

\acks{This work was supported by Academy of Finland grant numbers 311877 and 331817.}

\appendix
\section{Proofs} \label{app:basic_proofs}
We restate and prove all results of the paper.

\isdag*
\begin{proof}
Let $t$ be the single vertex in $\sG^\prime$ that corresponds to $T$ in $\sG$. We show that if there exists a directed path from $v_1$ to $v_2$ in $\sG^\prime$, there cannot also exist a directed path from $v_2$ to $v_1$. Assume first that both directed paths exist and neither of them contains $t$. This is a contradiction because then both paths would exist in a DAG $\sG$ as well. Next, assume without loss of generality that the directed path from $v_1$ to $v_2$ contains $t$. It follows that $\sG$ has a directed path $v_1 \rightarrow \ldots \rightarrow r \rightarrow \ldots \rightarrow e \rightarrow \ldots \rightarrow v_2$, where the existence of vertices $r \in \rec(T)$ and $e \in \emi(T)$ is guaranteed by the definition of transit cluster. Similarly, if the directed path from $v_2$ to $v_1$ contains $t$, there would a directed path from $v_2$ to $v_1$, which together with directed path from  $v_1$ to $v_2$ would create a cycle in $\sG$. If the directed path from $v_2$ to $v_1$ does not contain $t$, it will exist also in $\sG$ and form a cycle in $\sG$. We conclude that $\sG^\prime$ cannot have cycles and is thus a DAG.
\end{proof}
\reem*
\begin{proof}
Suppose instead that there exists $t \in T$ such that $t \not\in S$ and that $t \not\in \rec(T) \cup \emi(T)$. By condition~\ref{itm:no_ronsy_allowed}, $t$ is connected to a receiver or an emitter in $\sG[T^=]$. Assume first that $t$ is connected to receiver $r$ in $\sG[T^=]$. As $\rec(T)=\rec(S)$, $r$ is also a receiver for $S$. Follow the path from $r$ to $t$ and let $s$ be the last vertex that belongs $S$ and $q$ the next vertex that does not belong to $S$. If there is an edge $s \rightarrow q$, $s$ is an emitter for $S$ and if there is an edge $q \rightarrow s$, $s$ is a receiver for $S$. As $\rec(T)=\rec(S)$ and $\emi(T)=\emi(S)$, $s$ is also a receiver or an emitter for $T$. This leads to a contradiction because by definition the edges incoming to receivers and outgoing from emitters are cut in $\sG[T^=]$ and the path between $t$ and $r$ cannot exist. The case where $t$ is connected to an emitter in $\sG[T^=]$ proceeds analogously.
\end{proof}
\stilltransitcluster*
\begin{proof}
As $S$ and $T$ are disjoint, the vertices of $S$ as well as the edges between vertices of $S$ are unaffected by the clustering of $T$. It follows that $S^\prime = S$, where $S'$ is the clustering equivalent set of $S$. We will show that $\rec_{\sG^\prime}(S^\prime)=\rec_{\sG}(S)$, $\emi_{\sG^\prime}(S^\prime)=\emi_{\sG}(S)$, and $S^\prime$ fulfills the conditions of Definition~\ref{def:transitcluster} both in $\sG$ and $\sG^\prime$. As the edges between $S$ and $V \setminus (S \cup T)$ are unaffected by the clustering, it suffices to consider only edges between $S$ and $T$. If a parent of $\rec_{\sG}(S)$ belongs to $T$, condition~\ref{itm:parents} applied to $S$ in $\sG$ guarantees that vertex $t$ will be a parent of all vertices in $\rec_{\sG}(S)$ in $\sG^\prime$. If $t$ is a parent of $\rec_{\sG^\prime}(S^\prime)$, condition~\ref{itm:parents} applied to $S^\prime$ in $\sG^\prime$ guarantees that any member of $T$ that is a parent of a receiver in $\sG$ will be a parent of all vertices in $\rec_{\sG}(S)$ in $\sG$. Similarly, if a child of $\emi_{\sG}(S)$ belongs to $T$, vertex $t$ will be a children of all vertices in  $\emi_{\sG}(S)$ in $\sG^\prime$ and if $t$ is a child of $\emi_{\sG^\prime}(S^\prime)$, any member of $T$ that is a child of an emitter in $\sG$ will be a child of all vertices in $\emi_{\sG}(S)$ in $\sG$. It follows that $\rec_{\sG^\prime}(S^\prime)=\rec_{\sG}(S)$, $\emi_{\sG^\prime}(S^\prime)=\emi_{\sG}(S)$ and conditions~\ref{itm:parents} and \ref{itm:children} are fulfilled for $S^\prime=S$ both in $\sG$ and $\sG^\prime$. Conditions~\ref{itm:no_ronsy_allowed}, \ref{itm:forallr} and \ref{itm:foralle} are fulfilled as well as they consider only paths inside $S^\prime$.
\end{proof}
\modularity*
\begin{proof}
Denote $Q=T \cup S$ and $Q^\prime=\{t\} \cup S$. 

First, we assume that $\{t\} \cup S$ is transit cluster in $\sG^\prime$ and show that $T \cup S$ is transit cluster in $\sG$.

Condition~\ref{itm:parents}:  We will show for all $r \in \rec[\sG](Q)$  that $\Pa[\sG](r) \setminus Q = \Pa[\sG^\prime](\rec[\sG^\prime](Q^\prime)) \setminus Q^\prime$.
First let $v \notin Q$ to be a parent of receiver $r$ in $\sG$. It follows that in $\sG^{\prime}$, vertex $v$ is a parent of $r$ if $r \in S$ or a parent of $t$ if $r \in T$ because $v \notin Q$ in $\sG$. It follows that $r$ or $t$ is a receiver in $\sG^{\prime}$ and  $v \in \Pa[\sG^\prime](\rec[\sG^\prime](Q^\prime)) \setminus Q^\prime$. 

Now let $v \in \Pa[\sG^\prime](\rec[\sG^\prime](Q^\prime)) \setminus Q^\prime$ and consider two cases: a) If  $v$ is a parent of $t$ in $\sG^\prime$ then $t$ is a receiver in $\sG^\prime$. Further, $v$ is a parent of some $t_i \in T$ in $\sG$ because $t$ is a single vertex corresponding to transit cluster $T$ in $\sG$. It holds $t_i \in \rec[\sG](Q)$ because $v \notin Q$. Condition 1 of transit cluster guarantees that $v$ is also a parent of $r$. 
 b) If $v$ is a parent of $s_i \in S$ in $\sG^{\prime}$ then $v$ is a parent of $s_i \in S$ also in $\sG$ because $v \notin Q^\prime$ in $\sG^{\prime}$. Thus $s_i$ is a receiver in $\sG$ and $v \in \Pa(\rec(Q)_{\sG})_\sG \setminus Q$. 
 
Condition~\ref{itm:children}: The proof is analogous to condition~\ref{itm:parents}. 
 
Condition~\ref{itm:no_ronsy_allowed}: Consider $q_i \in Q$. Assume first that $q_i \in T$. There exist in $\sG^\prime$ vertex $v$ that is a receiver for $Q^\prime$ or an emitter for $Q^\prime$ and is connected to $t$. Applying conditions~\ref{itm:parents}, \ref{itm:children}, \ref{itm:forallr} and \ref{itm:foralle} for transit cluster $T$ guarantees that there a path between $q_i$ and $v$ in $\sG$. If $v \in S$, it is the required receiver or emitter for $Q$. If $v = t$, applying conditions~\ref{itm:parents}, \ref{itm:children}, \ref{itm:forallr} and \ref{itm:foralle} to transit cluster $T$ guarantees that set $T$ has the required receiver or emitter. 

Assume next that $q_i \in S$. There exist in $\sG^\prime$ vertex $v$ that is a receiver for $Q^\prime$ or an emitter for $Q^\prime$ and is connected to $q_i$. If $v \in S$, it satisfies the condition~\ref{itm:no_ronsy_allowed} for $Q$ because applying conditions~\ref{itm:forallr} and \ref{itm:foralle} to transit cluster $T$ guarantees that $t$ can be replaced by a path consisting of vertices in $T$. If $v = t$, applying conditions~\ref{itm:parents}, \ref{itm:children}, \ref{itm:forallr} and \ref{itm:foralle} for transit cluster $T$ guarantees that set $T$ has the required receiver or emitter.

Condition~\ref{itm:forallr}: Assume $\emi[\sG](Q) \neq \emptyset$ and let $r \in \rec[\sG](Q)$. a) If $r \in T$  then $t$ is a receiver in $\sG^\prime$ by the proof of condition~\ref{itm:parents}. It follows that there exists $e^{\prime} \in \emi[\sG^\prime](Q^\prime)$ such that $e^{\prime} \in \De[\sG^\prime](t)$. If $e^{\prime} \neq t$, it directly fulfills condition~\ref{itm:forallr}. If $e^{\prime} = t$, there exists $e \in \emi[\sG](T)$ such that $e \in \De[\sG](r)$, which fulfills condition~\ref{itm:forallr}. b) If $r \in S$ then $r \in \rec[\sG^\prime](Q^\prime)$ and there exists $e \in \emi[\sG^\prime](Q^\prime)$ such that $e \in \De[\sG^\prime](r)$. If $e \in S$, it fulfills condition~\ref{itm:forallr} because $Q^\prime$ is a transit cluster. If $e=t$, there exists an emitter in $T$ that fulfills condition~\ref{itm:forallr} because $T$ is a transit cluster.

Condition~\ref{itm:foralle}: The proof is analogous to condition~\ref{itm:forallr}.

Next, we assume that $T \cup S$ is transit cluster in $\sG$  and show that $\{t\} \cup S$ is transit cluster in $\sG^\prime$. Conditions~\ref{itm:parents} and \ref{itm:children} are already covered above when we showed that $\Pa(\rec(Q)_{\sG})_\sG \setminus Q = \Pa[\sG^\prime](\rec[\sG^\prime](Q^\prime)) \setminus Q^\prime$ and $\Ch(\emi(Q)_{\sG})_\sG \setminus Q = \Ch[\sG^\prime](\emi[\sG^\prime](Q^\prime)) \setminus Q^\prime$.

Condition~\ref{itm:no_ronsy_allowed}: Consider $q_i \in Q^\prime$. Assume first that $q_i = t$. For any  $t_i \in T$ there exist some vertex $v$ that is the required receiver or emitter in $\sG$. By the definition of clustering, $t$ and $v$ are connected in $\sG^\prime$. Assume next that $q_i \in S$. There exist in $\sG$ vertex $v$ that is a receiver for $Q$ or an emitter for $Q$ and is connected to $q_i$. If $v \in S$, it satisfies the condition~\ref{itm:no_ronsy_allowed}. If $v \in T$, vertex $t$ is the required receiver or emitter for $q_i$ in $\sG^\prime$. 

Condition~\ref{itm:forallr}: Assume $\emi[\sG^\prime](Q) \neq \emptyset$ and let $r \in \rec[\sG^\prime](Q^\prime)$. a) If $r = t$, set $\rec_{\sG}(Q) \cap T$ is non-empty and all members of this set fulfill condition~\ref{itm:forallr} in $\sG$. Let there be a path from $r_i \in \rec_{\sG}(Q) \cap T$ to $e \in \emi_{\sG}(Q)$ in $\sG$. It follows that either $e \in T$ and $t$ is the requested emitter in $\sG^\prime$ or $e \in S$ and there is a path from $t$ to $e$ in $\sG^\prime$ and $e$ is the requested emitter.

Condition~\ref{itm:foralle}: The proof is analogous to condition~\ref{itm:forallr}.
\end{proof}
\clusterunion*
\begin{proof}
Consider graph $\sG^\prime$ where the transit cluster $S$ is replaced by vertex $s$ and graph $\sG^{\prime\prime}$ where transit clusters $S$ and $T$ are replaced by vertices $s$ and $t$, respectively. It is easy to check that set $\{s, t\}$ is a transit cluster in  $\sG^{\prime\prime}$ under the assumption that $\Pa^*(\rec(S)) = \Pa^*(\rec(T))$ and $\Ch^*(\emi(S)) = \Ch^*(\emi(T))$. By applying Theorem~\ref{th:modularity} to $\sG^{\prime\prime}$ we conclude that $\{s\} \cup T$ is a transit cluster in $\sG^\prime$. By applying Theorem~\ref{th:modularity} then to $\sG^\prime$, we conclude that $S \cup T$ is a transit cluster in $\sG$.
\end{proof}
\peripheralinduced*
\begin{proof}
Let $\sG^+$ be the peripheral extension graph of $T^{+}$. A new receiver is created by operations~\ref{peri:addrec}, \ref{peri:addrecemi} and \ref{peri:addrec_noemi}. A new emitter is created by operations~\ref{peri:addemi}, \ref{peri:addrecemi} and \ref{peri:addemi_norec}.  A new emitter can be created also by operation~\ref{peri:dividenode}. Operations~\ref{peri:addrec}, \ref{peri:addemi}, \ref{peri:addrecemi}, \ref{peri:addrec_noemi} and \ref{peri:addemi_norec} explicitly copy parents and children so that the conditions~\ref{itm:parents} and \ref{itm:children} of transit cluster hold for $T^+$. If $t_i$ is an emitter in operation~\ref{peri:dividenode}, it will not be an emitter in $\sG^+$. Copying the children to new vertex $t_{k+1}$ ensures that condition~\ref{itm:children} is fulfilled for $t_{k+1}$. 

Before an operation, condition~\ref{itm:no_ronsy_allowed} hold for all existing vertices in $T$. Condition~\ref{itm:no_ronsy_allowed} holds also in $\sG^+$ for all existing vertices because the operations do not change the existence of the current paths. Condition~\ref{itm:no_ronsy_allowed} holds for the new vertex $t_{k+1}$ added by operations~\ref{peri:dividenode}, \ref{peri:addparent} and \ref{peri:addchild} because condition~\ref{itm:no_ronsy_allowed} holds for $t_i$, and $t_{k+1}$ is connected to $t_i$. Condition~\ref{itm:no_ronsy_allowed} directly holds for a receiver or an emitter added by operation~\ref{peri:addrec}, \ref{peri:addemi}, \ref{peri:addrecemi}, \ref{peri:addrec_noemi} or \ref{peri:addemi_norec}.

We also conclude that if there exist a path in $\sG$ from $r \in \rec[\sG](T)$ to $e \in \emi[\sG](T)$, there also exist a path from $r$ to $e$ in $\sG^+$ because operation~\ref{peri:addedge} does not remove edges or vertices, operations~\ref{peri:addmediator} and \ref{peri:dividenode} preserve directed paths, operations~\ref{peri:addparent} and \ref{peri:addchild} do not affect receivers and emitters, operations~\ref{peri:addrec}, \ref{peri:addemi} and \ref{peri:addrecemi} explicitly add an edge to create the required directed path, and operations~\ref{peri:addrec_noemi} and \ref{peri:addemi_norec} do not apply to cases where $T$ has both receivers and emitters. It follows that the conditions~\ref{itm:forallr} and \ref{itm:foralle} of transit cluster hold for $T^+$. Graph $\sG^\prime$ is the induced graph of $\sG^+$ because $\Pa[\sG^*](\rec[\sG^*](T^*)) \setminus T^* = \Pa[\sG](\rec[\sG](T)) \setminus T$ and $\Ch[\sG^+](\emi[\sG^+](T^+)) \setminus T^+ = \Ch[\sG](\emi[\sG](T)) \setminus T$.
\end{proof}
\peripheralconstruction*
\begin{proof}
At the beginning, all vertices in $T$ are unmarked, i.e., they are not yet included in the graph to be constructed.  Assume first $\rec(T) \neq \emptyset$ and $\emi(T) \neq \emptyset$. Apply operation~\ref{peri:dividenode} to $t$ to create set $A_0$ that has exactly one receiver $r$ and one emitter $e$. Choose $r$ to be an arbitrary member of $\rec(T)$ to $r$ and choose $e$ to be an arbitrary member of $\emi(T)$ that is a descendant of $r$. Apply operation~\ref{peri:addmediator} iteratively to construct a path corresponding to the path from $r$ to $e$ in $T$. Now all vertices of $T$ that belong to this path are marked.

Form a set $A_2$ of such receiver-emitter pairs in $T$ that there is a directed path from the receiver to the emitter. While there are unprocessed pairs in $A_2$, do the following operations: If both the receiver and the emitter are unmarked, apply operation~\ref{peri:addrecemi} to create the receiver-emitter pair and apply operation~\ref{peri:addmediator} to create the directed path between them. If the receiver is unmarked and the emitter is marked, apply operation~\ref{peri:addrec} to connect the receiver to a vertex that is on the receiver-emitter path and is an ancestor of all marked vertices on this path. Then apply operation~\ref{peri:addmediator} iteratively to create all vertices of the receiver-emitter path. If the receiver is marked  and the emitter is unmarked, apply operation~\ref{peri:addemi} to connect the emitter to a vertex that is on the receiver-emitter path and is a descendant of all marked vertices on this path. Then apply operation~\ref{peri:addmediator} iteratively to create all vertices of the receiver-emitter path. If both the receiver and the emitter are marked, apply first operation~\ref{peri:addedge} and then iteratively operation~\ref{peri:addmediator} to create the directed path between them.

Next process all vertices of $T$ that have not been marked yet. Apply operations~\ref{peri:addparent} and \ref{peri:addchild} to connect them to a vertex that is their child or parent. Repeat this until there are no vertices left. Finally, process all edges of $T$ and use operation~\ref{peri:addedge} to add the missing edges.

Next assume $\rec(T) = \emptyset$ and $\emi(T) \neq \emptyset$. Apply operation~\ref{peri:addrec_noemi} to add all receivers. Then process all vertices of $T$ that have not been marked yet similar way as above. Finally process all edges of $T$ and use operation~\ref{peri:addedge} to add the missing edges.  The case $\rec(T) = \emptyset$ and $\emi(T) \neq \emptyset$ proceed analogously.
\end{proof}
\clusterdecomposition*
\begin{proof}
Because $T$ is not a transit component, there exists a set $R \subset T$ such that $G[R]$ is connected and such that $R$ is not connected to $S = T \setminus R$ in $G[T]$. Note that such a set necessarily exists, because at least one vertex $t$ in $T$ is not connected to $T \setminus \{t\}$ in $G[T]$ due to $T$ not being a transit component. Next, we show that $S$ and $R$ are transit clusters. If $T$ has receivers, then the receivers of  $S$ and $R$ must have the same parents as the receivers of $T$ because $S$ and $R$ are disconnected in $G[T]$, and because $T$ is a transit cluster, thus satisfying condition~\ref{itm:parents}. Analogously, if $T$ has emitters then the children of the emitters of $S$ and $R$ must be the same, satisfying condition~\ref{itm:children}. Conditions~\ref{itm:no_ronsy_allowed} through \ref{itm:foralle} are satisfied for $S$ and $R$ because any path from an emitter to a receiver in $T$ exists either entirely in $S$ or $R$ because $S$ or $R$ are disconnected in $G[T]$. Thus $S$ and $R$ are disjoint transit clusters such that $T = S \cup R$ and $R$ is a transit component because it is connected in $G[R]$.
\end{proof}
\compterminates*
\begin{proof}
The sets $\sV_{\text{Ch}}$ and $\sV_{\text{Pa}}$ are finite, and for any set $A$ constructed on line~\ref{line:defineA}, there is only a finite number of possible components $\sC(\sG[A])$ in the innermost for-loop. Thus, there is finite number of iterations in total across all for-loops, and all other operations are well-defined and nonrecursive.
\end{proof}
\compfindersound*
\begin{proof}
We show that if $\compfinder(\sG, R)$ reaches line~\ref{line:add_component}, then the set being added to $\sA$ is a transit component in $\sG$. Because $\sT_{\sG|R} \subseteq \sT_{\sG}$ for all $R \subset V$, we can assume that $R = V$.  Let $V_i, V_j$ be a pair defined on lines~\ref{line:loop_receiver_candidates} and \ref{line:loop_emitter_candidates} such that line~\ref{line:add_component} is triggered in the same iteration. Let $Z$ and $W$ {}be defined as dictated by lines~\ref{line:restrict_an} and \ref{line:restrict_de}, respectively. The condition on line~\ref{line:has_ch_or_pa} must not have been fulfilled, which means that if $Z$ is non-empty, all of its members have parents, and if $W$ is non-empty, all of its members have children. Because the condition on line~\ref{line:re_or_em_nonempty} is fulfilled, we know that at least one of the sets $Z$ and $W$ is non-empty.

We summarize the construction of $A$ on line~\ref{line:defineA}. The set contains all vertices connected to $Z \cup W$ when incoming edges of $Z$ and outgoing edges of $W$ have been removed in $\sG$. The intuition is to construct a set $A$ such that $Z$ would be equal to $\rec(A)$ and $W$ would be equal to $emi(A)$. The construction together with lines~\ref{line:restrict_an} and \ref{line:restrict_de} ensures that there will be a path from any member of $Z$ to some member of $W$ and vice versa, which is required to satisfy conditions~\ref{itm:forallr} and \ref{itm:foralle} of Definition~\ref{def:transitcluster}. The line~\ref{line:defineA} directly enforces condition~\ref{itm:no_ronsy_allowed}. However, this construction alone does not guarantee that $Z$ and $W$ will be the set of receivers and emitters of $A$, respectively, because $Z$ might not have the same parents outside of $A$, or $W$ might not have the same children outside of $A$. It might also be the case that $A$ is not connected in $\sG[A]$.

Next, we break $A$ into its components in $\sG[A]$ and iterate over them on line~\ref{line:component_iteration}. Conditions~\ref{itm:no_ronsy_allowed}, \ref{itm:forallr} and \ref{itm:foralle} remain valid for each component. Because line~\ref{line:add_component} is reached, there must be at least one component $A_k$ for which line~\ref{line:re_em_verify} evaluates to true. This means that for such a set $A_k$, the sets $Z_k$ and $W_k$ have the same set of parents and children outside of $A_k$ as any of their members, respectively. This means that the remaining conditions~\ref{itm:parents} and \ref{itm:children} of Definition~\ref{def:transitcluster} are satisfied by $A_k$, making $A_k$ a transit component.
\end{proof}
\compfindercomplete*
\begin{proof}
We show that if $T \subset V$ is a transit component in $\sG$, then it will be a member of the set $\sA$ returned by \compfinder{}. Because $\sT_{\sG|R} \subseteq \sT_{\sG}$ for all $R \subset V$, we can assume that $R = V$. Definition~\ref{def:transitcluster} implies that there exists a pair of vertices $v_i,v_j \in V$ such that $\rec(T) \subseteq \Ch^*(v_i)$ and $\emi(T) \subseteq \Pa^*(v_j)$ by conditions~\ref{itm:parents} and \ref{itm:children}. Denote $V_i = \Ch^*(v_i)$ and $V_j = \Pa^*(v_j)$ with respect to the definitions on lines~\ref{line:loop_receiver_candidates} and \ref{line:loop_emitter_candidates}, respectively. Conditions~\ref{itm:forallr} and \ref{itm:foralle} further imply that $\rec(T) \subseteq \An(\emi(T))$ and $\emi(T) \subseteq \De(\rec(T))$. Therefore $\rec(T) \subseteq V_i \cap \An(V_j)$ and $\emi(T) \subseteq V_j \cap \De(V_i)$. At this point, we make an important choice; if there exist multiple $v_i,v_j$ pairs that satisfy these conditions, we choose one that minimizes the corresponding intersections $V_i \cap \An(V_j)$ and $V_j \cap \De(V_i)$. More precisely, we assume that for our choice $v_i,v_j$ there does not exist $v_i^\prime, v_j^\prime$ such that $V_i^\prime \cap \An(V_j^\prime) \subset V_i \cap \An(V_j)$ and $V_j^\prime \cap \De(V_i^\prime) \subset V_j \cap \De(V_i)$. We call this the minimal representative choice in the context of this proof and illustrate the meaning of this choice with an example.
%

It is easy to verify that $B = \{r_1,e_1\}$ is a transit cluster in the graph of Figure~\ref{fig:minimal_representative}. Suppose that we had chosen $v_i = x$ and $v_j = y$ resulting in $V_i = \Ch^*(x) = \{r_1,r_2,r_3\}$ and $V_j = \Pa^*(y) = \{e_1,e_2\}$. This is a valid choice because $\rec(B) \subseteq V_i,\, \emi(B) \subseteq V_j$, and for the intersections we have that $V_i \cap \An(V_j) = V_i,\, V_j \cap \De(V_i) = V_j$. However, $x, y$ is not the pair that minimizes these intersections. Choosing instead $V_i^\prime = \Ch^*(r_2) = \{r_1\}$ and $V_j^\prime = \Pa^*(e_2) = \{e_1\}$ we have that  $\rec(B) = V_i^\prime,\, \emi(B) = V_j$ and for the intersections it holds $V_i^\prime \cap \An(V_j^\prime) = V_i^\prime$, $V_j^\prime \cap \De(V_i^\prime) = V_j^\prime$. Now $ V_i^\prime \subset V_i$ and $V_j^\prime \subset V_j$ which shows that our initial choice $x,y$ was not minimal, and $r_2,e_2$ is actually the minimizing pair.

Let $Z = V_i \cap \An(V_j)$ and $W = V_j \cap \De(V_i)$ as defined on lines~\ref{line:restrict_an} and \ref{line:restrict_de}, respectively. Because $T$ is a transit component, it must have either receivers or emitters, which by definition have parents and children outside of $T$. This means that at least one of the sets $Z$ and $W$ is non-empty and $Z$ has parents in $\sG$ or $W$ has children in $\sG$. Thus, the for-loop does not continue on line~\ref{line:has_ch_or_pa}, and we move on to line~\ref{line:re_or_em_nonempty}, which is satisfied for the same reason.

Next, we construct the set $A$ on line~\ref{line:defineA}. Importantly, we must show that $T \subseteq A$. Suppose instead that there exists $t \in T$ such that $t \not\in A$ and $t$ is not a receiver or emitter of $T$. Because $T$ is a transit component, then $t$ must be connected to $\rec(T) \cup \emi(T)$ when incoming edges of $\rec(T)$ and outgoing edges of $\emi(T)$ have been removed. Due to the construction of $A$, this leaves the only option that $t$ is connected to $\rec(T) \cup \emi(T)$ only via paths that intersect $Z \setminus \rec(T)$ or $W \setminus \emi(T)$ and is no longer connected to $\rec(T) \cup \emi(T)$ when incoming edges of $Z$ and outgoing edges of $W$ are removed. Let $Z'$ and $W'$ denote those subsets of $Z$ and $W$ that only contain vertices that intersect such paths, respectively. This means that it must also be the case that $Z' \subset T$ and $W' \subset T$ because $T$ is connected in $\sG[T]$ and thus entire connecting path is in $T$. Suppose that the path has an incoming edge to $Z' \setminus \rec(T)$ and let $t_z \in Z' \setminus \rec(T)$ be a vertex on this path. Because $t_z \in T$ also, we have a contradiction, because $t_z$ is not a receiver of $T$ but $Z'$ is not empty which means that $t_z$ must have parents that are not members of $T$. The case for the path intersecting $W' \setminus \rec(T)$ is analogous. Thus we affirm that $T \subseteq A$.

Next, we must show that $T$ is a component of $\sG[A]$. Because $T \subseteq A$, there must be a component $A_k$ of $\sG[A]$ such that $T \subseteq A_k$. Let $Z_k$ and $W_k$ be defined according to lines~\ref{line:component_re} and \ref{line:component_em}, and suppose instead that there exists $a \in A_k \setminus T$ such that $a$ is connected to $T$ in $\sG[A_k]$. Let $a'$ be a vertex in $A_k \setminus T$ on the path from $a$ to $T$ in $\sG[A_k]$ such that it is either parent or a child of $T$. As $T$ and $A_k$ are both connected, it follows from the construction of $A$ that if $a'$ is a parent of $T$, then $a' \in Z_k \setminus \rec(T)$ or if $a'$ is a child of $T$, then $a' \in W_k \setminus \emi(T)$. Suppose that $a' \in Z_k \setminus \rec(T)$. Now, because $a'$ is a parent of $T$, it must a parent of all of its receivers. Furthermore, $\Ch[{\sG[A]}](a') \subset Z_k$, i.e., $a'$ necessarily has at least one fewer child than the representative $v_i$ (mainly, $a'$ itself). Then $\Ch[{\sG[A]}](a^\prime) \cap \An[{\sG[A]}](V_j) \subset V_i \cap \An[{\sG[A]}](V_j)$, which contradicts the minimal representative choice. The case for $a' \in W_k \setminus \emi(T)$ is analogous. Hence, $T$ is a component of $\sG[A]$.

We can now deduce that $Z_k \cap T = \rec[\sG](T)$ and $W_k \cap T = \emi[\sG](T)$. If there existed $z \in Z_k \cap T$ that is not a receiver of $T$, we would have a contradiction, because $T$ is a transit component, and $z_k \in \Ch[\sG]^*(v_i)$ meaning that $T$ would be connected to $\Pa^*(T)$ via a vertex that is not its receiver. The case for $W_k$ is once again analogous. The sets $Z_{\text{Pa}}$ and $W_{\text{Ch}}$ constructed on lines~\ref{line:common_pa} and \ref{line:common_ch} are simply the common parents of $\rec[\sG](T)$ and common children $\emi[\sG](T)$, and because $T$ is a transit component, the check of line~\ref{line:re_em_verify} evaluates to true as all receivers have the same parents and all emitters have the same children in $\sG$. Finally, \compfinder{} adds $T$ to the set $\sA$ on line~\ref{line:add_component}.
\end{proof}
\comppoly*
\begin{proof}
Let $n = |V|$ and $m = |E|$. Any restrictions on the vertices that are allowed to be members of transit clusters will only lead to a decrease in runtime, so we assume that $R = V$. Because the algorithm repeatedly accesses sets of parents, children, ancestors, and descendants of the vertex sets in the input graph $\sG$, we assume that these are derived as a preprocessing step, which evaluates each vertex and edge once in the worst case, thus taking $O(n + m)$ time to construct the sets (for example, via a depth-first search). Thus any future access to these sets can be carried out in constant time. 

The maximum number of unique parent and child sets in the collections $\sV_{\text{Pa}}$ and $\sV_{\text{Pa}}$ occurs when the parents and children of each vertex are unique. Thus there are at most $n$ iterations in both of the two outermost for-loops on lines~\ref{line:loop_receiver_candidates} and \ref{line:loop_emitter_candidates}, leading to $n^2$ iterations in total. Each of the constructions and verifications on lines~\ref{line:restrict_an}--\ref{line:re_or_em_nonempty} can be evaluated in $O(n)$ time with help of the preprocessing step.

The construction of the candidate set $A$ and its components $\sC(\sG[A]))$ on lines~\ref{line:defineA} and \ref{line:component_iteration} takes $O(n + m)$ time in the worst case, when the entire graph has to be traversed (again, for example by a single depth-first search for both tasks simultaneously, when the search reaches a vertex $v$ through an incoming edge to $Z$ or an outgoing edge from $W$, it simply immediately returns to the previous vertex without discovering $v$.). There are at most $n$ components of $\sG[A]$, which leads to at most $n$ iterations in the innermost for-loop on line~\ref{line:component_iteration}. During lines~\ref{line:component_re}--\ref{line:add_component}, each operation can be carried out in $O(n)$ time, once again taking advantage of the preprocessing step.

Combining all of the previous observations gives us
\[
O\left((n + m) + n^2(n + (n + m) + n^2) \right) = O(n^4 + n^3m) = O(|V|^4 + |V|^3|E|).
\]
\end{proof}
\countingcomponents*
\begin{proof}
From the assumptions it follows that for any transit component $S$ of $\sG$, we have that either $T \subset S$ or $T \cap S = \emptyset$. Thus, by Theorem~\ref{th:stilltransitcluster}, there must be an equal number of transit components that do not contain $T$ in $\sG$ and those that do not contain $t$ in $\sG'$. Similarly by Theorem~\ref{th:modularity} there must be an equal number of transit components that contain $T$ for $\sG$ and those contain $t$ for $\sG'$. In this case we apply the theorem to $S' = S \setminus T$ and $T$ to make the previous observation. This means that the only difference in the number of transit components of $\sG$ and $\sG'$ is that in $\sG$, the set $T$ induces $|\sT_{\sG[T]}|$ transit components, whereas in $\sG'$ the set $\{t\}$ induces a single transit component. However, the difference is offset by one because $T$ is not a transit component in $\sG[T]$.
\end{proof}
\componentamount*
\begin{proof}
Because \compfinder{} is sound and complete, we take advantage of the algorithm, and consider pairs of sets $V_i$ and $V_j$ defined on lines~\ref{line:loop_emitter_candidates} and \ref{line:loop_receiver_candidates}, and the corresponding sets $Z$ and $W$. We consider cases: 1) each pair $(V_i, V_j)$ produces at most one transit component, 2) at least one pair $(V_i, V_j)$ produces more than one transit component, and in addition 3) show that the equality $|\sT_{\sG}| = |V|(|V| + 1)/2 - 1$ is attainable.

\begin{enumerate}
\item In the first case, the maximum number of pairs $(V_i, V_j)$ such that $V_i \cap \An(V_j) \neq \emptyset$ and $V_j \cap \De(V_i) \neq \emptyset$ is clearly $\frac{|V|(|V| + 1)}2 - 1$, because $\sG$ is a DAG. Assuming that each pair where at least one of the aforementioned intersections is nonempty produces a distinct transit component, the claim follows.

\item In the second case, we must consider the repercussions of multiple transit components being induced by the same pair $(V_i, V_j)$. Also, we will associate each transit component $T$ with a unique representative pair $(V_i, V_j)$ as follows: if it occurs that a transit component $T$ is induced by two distinct pairs $(V_i,V_j)$ and $(V'_i, V'_j)$, we choose the pair that induces the smallest number of transit components. In the case that there are still multiple such pairs, the choice is arbitrary. Next, we must consider two separate scenarios: a) the sets $Z$ and $W$ corresponding to the pair $(V_i, V_j)$ defined on lines~\ref{line:restrict_an} and \ref{line:restrict_de} are both nonempty, and b) one of the sets $Z$ and $W$ is empty.
\begin{enumerate}[a)]
\item Suppose that the pair $(Z,W)$ induces $k$ transit components, $A_1, \ldots, A_k$, defined according to line~\ref{line:loop_components} which are the components of $A$, defined according to line~\ref{line:defineA}. Let $n = |V|$, $n_i = |A_i|$ for each $i = 1,\ldots,k$, and $n_0 = |V| - |A|$. Suppose that for some $A_j$, there exists a transit component $T$ of $\sG$ such that $A_j \cap T \neq \emptyset$ and $A_j \setminus T \neq \emptyset$. Because $T$ is a transit component and thus connected, it must contain at least one child of an emitter of $A_j$ or a parent of a receiver of $A_j$. Without loss of generality, assume that a child $c$ of an emitter of $A_j$ is a member of $T$. However, this also makes $c$ a child of an emitter $e$ of at least one other transit component $A_i$, $i \neq j$. It cannot be the case that there would exist a child of an emitter of $A_j$ such that it is not a child of an emitter of $A_i \neq A_j$ because we have assumed that the current pair $(V_i,V_j)$ produces the smallest number of transit components. Now, if $e \not\in T$, we have a contradiction, because $A_j \setminus T \neq \emptyset$, there must a path from $A_j$ to a receiver of $T$, but there cannot be a path from $e$ to the same receiver, because $A_i$ and $A_j$ are not connected in $\sG[A]$. If $e \in T$, it follows that $A_i \subset T$ using the same argument with nodes along any path from a receiver of $A_i$ to $e$ that always exists due to the definition of a transit cluster. Now, because $Z$ is not empty, there is a parent $p$ of a receiver of $A_j$ that is also a parent of a receiver of $A_i$. If $p \not\in T$ we have a contradiction, because there is a receiver of $T$ in $A_i$ which has different parents than a receiver of $T$ in $A_j$. If $p \in T$, this makes $p$ an emitter of $T$ and we have again a contradiction, because there would be a cycle in the induced graph of $T$, as there is a directed path from an emitter to a receiver in $T$. We conclude that no such transit component $T$ can exist. The case where $T$ contains instead a parent of a receiver of $A_j$ is analogous.

We can now apply Lemma~\ref{lem:counting_components} repeatedly to each set $A_1, \ldots, A_k$. Let $\sG'$ denote the graph obtained after clustering each $A_i$, $k = 1, \ldots, k$. We have that
\[
  |\sT_{\sG}| = |\sT_{\sG'}| + \sum_{i = 1}^k |\sT_{\sG[A_i]}|.
\]
We now apply induction to the original claim in terms of the number of vertices. The base case clearly holds, and the induction assumption is that for all DAGs with $m < n$ vertices, the number of transit components is less or equal to $\frac{m(m+1)}2 - 1$. For the induction step, applying the induction assumption to the above equation yields the following inequality:
\begin{align*}
  |\sT_{\sG}| &\leq \frac{(n_0+k)(n_0+k+1)}2 - 1 + \sum_{i=1}^k \left(\frac{n_i(n_i + 1)}2 - 1\right) \\
              &= \frac{(n_0+k)(n_0+k+1)}2 - k - 1 + \sum_{i=1}^k \frac{n_i(n_i + 1)}2.
\end{align*}
Note that in the case that $n_i = 1$ for all $i = 1,\ldots,n$ we have $\sum_{i=1}^k n_i = k$, $n_0 + k =n$ and we obtain:
\begin{align*}
  \frac{(n_0+k)(n_0+k+1)}2 - k - 1 + \sum_{i=1}^k \frac{n_i(n_i + 1)}2 &= \frac{n(n+1)}2 - k - 1 + \sum_{i=1}^k \frac{1\cdot2}2 \\ &= \frac{n(n+1)}2 -k -1 + k \\ &= \frac{n(n+1)}2 - 1.
\end{align*}
In the case that at least one $n_i > 1$ we obtain instead (by multiplying both sides by 2 for convenience):
\begin{align*}
  2|\sT_{\sG}| &\leq (n_0+k)(n_0+k+1) - 2k -2 + \sum_{i=1}^k n_i(n_i + 1)\\
               &= n_0^2 + 2kn_0 + k^2 + n_0 - k -2 + \sum_{i=1}^k n_i(n_i + 1) \\
               &= n_0^2 + 2kn_0 + k^2 + n_0 - k -2 + \sum_{i=1}^k n_i^2 + \sum_{i=1}^k n_i \\
               &= \sum_{i=0}^k n_i^2 + 2kn_0 + k^2 - k -2 + \sum_{i=0}^k n_i \\
               &< \sum_{i=0}^k n_i^2 +2n_0\sum_{i=1}^k n_i + k(k-1) -2 + \sum_{i=0}^k n_i \\
               &< \sum_{i=0}^k n_i^2 +2n_0\sum_{i=1}^k n_i + \sum_{i \neq j, 0 < i,j \leq k} n_i n_j + \sum_{i=0}^k n_i - 2\\
               &= \sum_{i=0}^k n_i^2 + \sum_{i \neq j, 0 \leq i,j \leq k} n_i n_j + \sum_{i=0}^k n_i - 2\\
               &= \left( \sum_{i=0}^k n_i \right)\left( \sum_{i=0}^k n_i + 1 \right) -2\\
               &= n(n+1) -2.
\end{align*}\label{eq:numberofcomps}

The first inequality in \eqref{eq:numberofcomps} is due to $k < \sum_{i=1}^k n_i$ and the last inequality follows from the fact that there are $k(k-1)$ pairs $i,j$ such that $i \neq j$ and $0 < i,j \leq k$. The second and third terms after the second inequality can be combined by changing the summation indices, and the second to last line is just a further refactoring of the terms. Thus we have that $|\sT_{\sG}| \leq \frac{n(n+1)}2 - 1$ for all $1 \leq k \leq n$ and partitions $n_0,n_1, \ldots, n_k$ such that $\sum_{i=0}^k n_i = |V|$.

\item Without loss of generality, assume that $Z = \emptyset$. This means that each of the $k$ components have the same set of receivers, mainly, the empty set. Now consider the possible number of candidate pairs $(V_i,V_j)$, the amount of unique receivers has decreased by $k-1$, because we know the empty set to be the candidate receiver set of at least $k$ pairs $(V_i,V_j)$. Now, we have found $k$ transit components, but the potential number of remaining $(V_i,V_j)$ pairs has decreased by at least $k$ as well.

Assuming that all remaining pairs $(V_i,V_j)$ fall into the first or the previous case, the claim immediately follows. If not, we can repeat the above argument until this is the case, because the amount of new transit components found each time cannot exceed the amount that the number of undiscovered transit components decreases by. The case for $W = \emptyset$ is identical.
\end{enumerate}
\item Let $\sG = (V,E)$ consists of single directed path $v_1 \rightarrow \cdots \rightarrow v_n$. It is easy to see that any transit component of $\sG$ also takes the form of a directed path $v_a \rightarrow \cdots \rightarrow v_b$, $v_a,v_b \in V$. There are $n - 1$ possible ways to choose the length $k$ of such a path (the path of full length $n$ is not a transit component), and for each length $k$, there are $n-k+1$ ways to choose the starting vertex of the path. Thus we obtain the following:
\begin{align*}
  |\sT_{\sG}| &= \sum_{i = 1}^{n - 1} (n-i+1) \\
              &= -1 + \sum_{i = 1}^{n} (n-i+1) \\
              &= n^2 + n -1 - \sum_{i=1}^n i \\
              &= n(n+1) - 1 - \frac{n(n+1)}2 \\
              &= \frac{n(n+1)}2 -1. 
\end{align*}
\end{enumerate}
\end{proof}
\clustterminates*
\begin{proof}
Because there is a finite number of restricted transit components considered in the loop on line~\ref{line:loop_components} of \compfinder{}, it remains to show that \expander{} always terminates for valid inputs $T$, $\sA$, $\sB$, and $\sG$. First, we note that in all following recursive calls to \expander{}, the number of elements in the set $\sB'$ decreases, and eventually the set becomes empty. Thus any single branch of the recursion will eventually reach the returning line~\ref{line:return_from_recursion}. Further, because there are only a finite number of elements considered in the loop on line~\ref{line:loop_remaining_components} of \expander{}, there can only be a finite number of branches in the recursion. Thus \expander{} always terminates.
\end{proof}
\clustsound*
\begin{proof}
New members to the output set $\sA$ are added from the output of the subroutine \expander{}. This subroutine performs recursive unions of transit clusters and transit components while ensuring that the conditions of Corollary~\ref{th:union} hold, which guarantee that the union is a transit cluster. Thus, only transit clusters are ever added to the set $\sA$.
\end{proof}
\clustcomplete*
\begin{proof}
By Theorem~\ref{thm:cluster_is_component_union}, any transit cluster can be constructed iteratively from transit components. It is easy to see that Corollary~\ref{th:union} specifies the only valid union for a transit component and a transit cluster that are disjoint and not connected in the subgraph induced by their union. It follows that all possible distinct unions of transit components are considered in the recursive calls to \expander{} launched by \clustfinder{} on line~\ref{line:recursion_start}. Following this, if a union $S \cup T$ of transit components $S$ and $T$ is valid according to Corollary~\ref{th:union} on line~\ref{line:theorem_condition} of \expander{}, a new recursive call to \expander{} is launched on line~\ref{line:valid_union_recursion}, which will now consider all possible unions of the transit cluster $S \cup T$, and the remaining transit components of $\sT_{\sG|R}$. Eventually, all possible unions are considered, and thus by Theorem~\ref{thm:cluster_is_component_union}, all transit clusters are included in the set $\sA$ that is returned by \clustfinder{}  on line~\ref{line:return_clusters}.
\end{proof}
\clustpolydelay*
\begin{proof}
Recalling Theorem~\ref{th:component_amount}, we note that the number of transit components grows as $O(|V|^2)$.
We must consider four cases: 1) the initial preprocessing step that enables the application of Corollary~\ref{th:union} in $O(|V|)$ time, 2) finding the initial cluster, 3) finding any following cluster, and 4) termination after finding the last cluster. Let $k = \frac{|V|(|V|+1)}2 - 1$ denote the maximum potential number of transit components of $\sG$.
\begin{enumerate}
  \item To apply Corollary~\ref{th:union} in linear time, we must first determine the parents and children of each vertex, thus requiring the traversal of the entire graph, which takes $O(|V| + |E|)$ time. We assume that the input $|\sT_{\sG|R}|$ is encoded in such a way, that the sets $\rec(T)$ and $\emi(T)$ can be obtained in constant time for any transit component (i.e., the transit components ``know'' their own sets of receivers and emitters). Once sets of parents and children for each vertex are obtained, it is easy to see that we can construct any parent set $\Pa^*(\rec(T))$ or child set $\Ch^*(\emi(T))$ in $O(|V|)$ time. Test for equality between sets is also an $O(|V|)$ operation.
  \item Before the first valid union is found, only unions of transit components are considered. There are at most $k$ possible options for the set $T$ and $k - 1$ options for the set $S$. In the worst case, none of the possible pairs produces a valid union on line~\ref{line:theorem_condition}, leading to $k(k-1)/2$ operations, because only distinct unions are considered. With the preprocessing step, Corollary~\ref{th:union} can be applied in $O(|V|)$ time (assuming a dynamic programming approach that keeps track of the emitters, receivers and their parents and children for valid unions $S \cup T$).
  \item Suppose that the sets $S$ and $T$ have produced a valid transit cluster in a recursive call to \expander{}. In the worst case, this cluster was found in the first iteration of every recursive call that preceded it, it is at the greatest depth of the recursion, i.e., the set $\sB'$ would be empty in the next call, and the next cluster is not found until the recursion exists back to the very top level and reaches its final iteration. Because the number of iterations is reduced by one at each step of the recursion, the total number of remaining iterations is $(k-1) + (k-2) + ... + 1 = (k-1)k/2$. Applying Corollary~\ref{th:union} is the same as in the previous case.
  \item This scenario is almost identical to the previous. Here, the worst case is different only in the aspect that instead of finding the next cluster, the algorithm terminates, leading to the same number of remaining iterations $(k-1)k/2$.
\end{enumerate}
In summary, the initialization of case (1) can be carried out in $O(|V| + |E|)$ time. In cases (2), (3), and (4) there are at most $(k-1)k/2$ operations, each of which can be carried out in $O(|V|)$ time, which gives us
\[
  O(|V|(k-1)k/2) = O\left(|V|\left[\frac{|V|(|V|-1)}2 - 1\right]\frac{|V|(|V|-1)}2\right) = O(|V|^5).
\]
\end{proof}
\identifiability*
\begin{proof}
\begin{description}
\item[Plain transit cluster:] 
We show that neither $T$ in $\sG$ nor $t$ in $\sG'$ can be a part of a hedge for $p(x_A \cond \doo(x_B))$ and that clustering cannot affect hedges outside $T$. A hedge consists two c-components that fulfill certain conditions. Importantly, one of these c-components shares some vertices with set $B$. Condition $\Pa^*(\rec(T)) \cup \emi(T) \subseteq V$ in Definition~\ref{def:plaintransitcluster} guarantees that there is no variable in $U$ with children both in $T$ and $V \setminus T$. Consequently, there cannot be a c-component that contains members from both $T$ and $B$. Similarly in $\sG'$, there is no variable in $U'$ with children both in $\{t\}$ and $V \setminus \{t\}$. 

It remains to show that clustering does not change existing c-components. Assume that $v_i,v_j \in V \setminus T$ belong to same c-component and are therefore connected by a path specified in Definition~\ref{def:c-component}. Condition $\Pa^*(\rec(T)) \cup \emi(T) \subseteq V$ guarantees that such a path cannot contain members $T$ and cannot be affected by clustering of $T$. We conclude that clustering a plain transit cluster does not change the identifiability properties of $p(x_A \cond \doo(x_B))$.

\item[Congested transit cluster:] 
We need to show that there exists a hedge in $\sG$ if and only if there exists a hedge in $\sG'$. More precisely, we have to prove that c-components and their root sets fulfill the requirements of hedges in $\sG$ and $\sG'$

We first show $T$ is a subset of the vertex set $C$ of a c-component in $\sG$ if and only if $\{t\}$ is a subset of the vertex set $C'$ of a c-component in $\sG'$. 
If $C = T$ or $C'=\{t\}$ we may use similar reasoning as in the first part of the proof to conclude that neither $T$ in $\sG$ nor $t$ in $\sG'$ can be a part of a hedge for $p(x_A \cond \doo(x_B))$. Assume next $C \setminus T \neq \emptyset$. As $\emi(T) \subseteq V$ by Definition~\ref{def:congestedtransitcluster}, there must exist $u_i \in U \setminus T$ such $u_i$ has one child in $T$ and one child in $V \setminus T$. Consequently, in $\sG'$, there is an edge $u_i \rightarrow t$ and $t \in C'$. The vertices outside $T$ are unaffected and $C \setminus T = C' \setminus \{t\}$. Similarly, if we assume $C' \setminus \{t\} \neq \emptyset$,  there is an edge $u_i \rightarrow t$ and $t \in C'$. By Definition~\ref{def:congestedtransitcluster}, all vertices of $T$ belong to the same c-component whose vertex set must be $C$. 

We have to also show that clustering cannot change the properties that a c-component is required to have in order to be a hedge. Consider a hedge in $\sG$ and let $Q$ denote its root set (sink). Suppose that there exists $t_i \in T \cap Q$. If there exists in $\sG$ a directed path from $B$ to $t_i$ in the larger c-forest of the hedge then such a path will also exists in $\sG'$ from $B$ to $t$ because $t_i$ must be on a path from $\rec(T)$ to $\emi(T)$. Any path from $B$ to $Q \setminus T$ remain unchanged by the clustering, thus a corresponding hedge can be constructed in $G'$ with the root set $(Q \setminus T) \cup \{t\})$. 

Consider next a hedge in $\sG'$ that contains $t$.   It follows that vertex $t$ must have at least one parent in $\sG$. If $t$ is in the root set $Q$, we may construct a corresponding hedge in $\sG$ by choosing a receiver as the root set. If $t$ is not in the root set $Q$, the hedge must have a path $v_i \rightarrow t \rightarrow v_j$. This path can be replaced in $\sG$ by a path $v_i \rightarrow r \rightarrow \ldots \rightarrow e \rightarrow v_j$, where $r \in \rec(T)$ and $e \in \emi(T)$. Combining the conclusions above, it follows that set $T$ is a part of a hedge in $\sG$ if and only if vertex $t$ is a part of a hedge in $\sG'$. 

It remains to show that clustering cannot affect hedges outside $T$. Assume that $v_i,v_j \in V \setminus T$ belong to same c-component and are therefore connected by a path specified in Definition~\ref{def:c-component}. Condition $\emi(T) \subseteq V$ together with the fact that members of $\rec(T)$ must be observed by Definition~\ref{def:causalmodel} guarantees that such a path cannot contain members $T$ and cannot be affected by clustering of $T$. We conclude that clustering a congested transit cluster does not change the identifiability properties of $p(x_A \cond \doo(x_B))$.
\end{description}
\end{proof}
\extensionid*
\begin{proof}
First we note that peripheral extension is guaranteed to produce a causal diagram if all vertices $t_1,\ldots,t_n$ corresponding observed variables are added first and vertices $u_1,\ldots,u_n$ corresponding unobserved background variables are then added using only operations~\ref{peri:addedge} and \ref{peri:addparent} with the restriction that vertices $u_1,\ldots,u_m$ cannot not have parents.

The first  claim follows directly from Theorem~\ref{th:identifiability}. 

For the second claim, we show the do-calculus derivation for obtaining $g(p(x_{V'}))$ in $\sG^\prime$ is valid in $\sG$ when the relevant sets are replaced by clustering equivalent sets.

We show that the d-separation $V_1^\prime \independent V_2^\prime \cond V_3^\prime$ in $\sG'$ implies $V_1 \independent V_2 \cond V_3$ in $\sG$ where $V_1^\prime,V_2^\prime,V_3^\prime$ are the clustering equivalent sets of $V_1,V_2,V_3 \subset V$, respectively. 
Consider a path from $V_1^\prime$ to $V_2^\prime$ in $\sG'$ that is blocked on the condition of $V_3^\prime$. Now there are four options that arise from the definition of d-separation: 1) The path does not contain $t$. 2) The path contains $t$ but $t$ does not block the path. 3) The path contains a chain or a fork where the middle vertex $t$ belongs to $V_3^\prime$. 4)  The path contains a collider $t$ and  $\De_{\sG^\prime}(t) \cap V_3^\prime = \emptyset$.  

1) The same path exists also in $\sG$ and is blocked. 2) The definition of a transit cluster guarantees that the corresponding path exists in $\sG$. The path is blocked by the same vertex in both $\sG$ and $\sG'$. 3) In the case of a chain $v_1 \rightarrow t \rightarrow v_2$, vertex $v_1$ is a parent of a receiver and $v_2$ is a child of an emitter. In $\sG$, we have a corresponding directed path $v_1 \rightarrow r \rightarrow \ldots \rightarrow e \rightarrow v_2$ where $r \in T$ is a receiver and $e \in T$ is an emitter. In the case of a fork $v_1 \leftarrow t \rightarrow v_2$, both $v_1$ and $v_2$ are children of an emitter. In $\sG$, we have a corresponding fork $v_1 \leftarrow e \rightarrow v_2$ where $e \in T$ is an emitter.   In both cases, the path is blocked because $T \subseteq V_3$. 4) Collider $v_1 \rightarrow t \leftarrow v_2$ implies that $t$ is a receiver in $\sG'$. In $\sG$, we have a corresponding collider  $v_1 \rightarrow r \leftarrow v_2$ where $r \in T$ is receiver. Assumption $t \notin V_3^\prime$ implies $T \cap V_3 = \emptyset$. For the descendants it holds $\De_{\sG^\prime}(t) = De_{\sG}(r) \setminus T$. We conclude that the path is blocked in $\sG$.

The rules of do-calculus operate in graphs obtained from $\sG'$ by removing some edges. There are three possibilities when each edge removed is considered: i) the edge does not involve $t$, ii) the edge $(v,t)$ is an incoming edge of $t$, and iii) the edge $(t,v)$ is an outgoing edge of $t$. In case i) the removed edge exists also in $\sG$ and can be removed there as well. In case ii), all edges from $v$ to $\rec(T)$ are removed in $\sG$. This breaks all undirected paths that correspond a path containing edge $(v,t)$ in $\sG'$. In case iii), all edges from $\emi(T)$ to $v$ are removed in $\sG$.  This breaks all undirected paths that correspond a path containing edge $(t,v)$ in $\sG'$. Together with the d-separation properties proved above, this guarantees that the do-calculus derivation applicable in $\sG'$ is also applicable in $\sG$ and $g(p(x_V^\prime))$ is an identifying functional for $p(x_A \cond \doo(x_B))$ in $\sG$.
\end{proof}
\section{Hedges and Causal Effect Identifiability} \label{app:hedges}
We introduce c-trees, c-forests and hedges as defined by \citet{shpitser2006} using our notation. We begin by defining maximal c-components. 
\begin{definition}[Maximal c-component]
Let $\sG = (V \cup U, E)$ be a causal diagram. Then a subgraph $\sG^\prime = (V^\prime, E^\prime)$ of $\sG$ is a maximal c-component if it is a c-component and if $\sG^* \subseteq \sG^\prime$ for all c-components $\sG^* = (V^* \cup U^*,E^*)$ such that $V^\prime \cap V^* = \emptyset$.
\end{definition}
Next, we define c-trees which characterize direct effects.
\begin{definition}[c-tree] Let $\sG$ be a causal diagram such that it has only one maximal c-component, and such that every observed node has most one child. If there exists a node $y$ such that $\sG[\An(y)] = \sG$ then $\sG$ is a \emph{$y$-rooted c-tree}.
\end{definition}
C-forest is the generalization of a c-tree where the root $y$ is extended to a \emph{root set}, i.e., the set of nodes $\{v \in V \mid \De^*(v)_\sG = \emptyset\}$ for a causal diagram $\sG = (V \cup U,E)$.
\begin{definition}[c-forest] Let $\sG = (V \cup U,E)$ be a causal diagram and let $Y$ be its root set. If $\sG$ is c-component and every observed node has at most one child, then $\sG$ is a \emph{$Y$-rooted c-forest}. 
\end{definition}
A special pair of c-forests can be used to characterize general causal effects of the form $p(x_A \cond \doo(x_B))$.
\begin{definition}[Hedge] Let $\sG = (V \cup U,E)$ be a causal diagram, and let $A, B \subset V$ be disjoint subsets. If there exists two $R$-rooted c-forests $\sF = (V_\sF \cup U_\sF,E_\sF)$ and $\sF^\prime = (V_{\sF^\prime} \cup U_{\sF^\prime},E_{\sF^\prime})$ such that $V_\sF \cap B \neq \emptyset$, $V_{\sF^\prime} \cap B = \emptyset$, $\sF^\prime \subseteq \sF$ and $R \subset \An[{\sG[\overline{B}]}](A),$ then $\sF$ and $\sF^\prime$ form a \emph{hedge} for $p(x_A \cond \doo(x_B))$ in $\sG$.
\end{definition}
Hedges completely characterize the identifiability of causal effects from the joint distribution over the observed variables of the causal model.
\begin{theorem}[Hedge criterion, Corollary~3 of \citep{shpitser2006}]
$p(x_A \cond \doo(x_B))$ is identifiable from $p(x_V)$ in $\sG$ if and only if there does not exist a hedge for $p(x_{A^\prime} \cond \doo(x_{B^\prime}))$ in $\sG$, for any $A^\prime \subseteq A$ and $B^\prime \subseteq B$.
\end{theorem}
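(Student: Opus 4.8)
The plan is to prove the two directions separately: the forward direction (no hedge $\Rightarrow$ identifiable) by appealing to a sound and complete recursive identification algorithm, and the reverse direction (hedge $\Rightarrow$ non-identifiable) by an explicit construction of two counterexample causal models that agree on $p(x_V)$ but disagree on the effect. Throughout I would work with the c-component machinery and the definitions of c-forest and hedge already fixed in Appendix~\ref{app:hedges}.

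For the forward direction, I would invoke the recursive ID algorithm that operates by factorizing a query over the maximal c-components of the relevant subgraph and restricting to ancestral subsets (via $\An[\sG]$). The first step is \emph{soundness}: by induction on the recursion depth, every operation the algorithm performs — marginalization, restriction to ancestors, and c-component decomposition — is a valid probabilistic or do-calculus manipulation, so any functional it returns is a correct identifying functional for $p(x_A \cond \doo(x_B))$. The second step is to pin down the unique failure mode: the recursion aborts only upon reaching a subquery $p(x_S \cond \doo(x_T))$ whose relevant subgraph contains a single maximal c-component that, together with a strictly larger c-forest reaching into $T$, is exactly a hedge for some $p(x_{A'} \cond \doo(x_{B'}))$ with $A' \subseteq A$ and $B' \subseteq B$. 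Reading the structure isolated at the failure line as a pair of $R$-rooted c-forests $\sF' \subseteq \sF$ with $V_\sF \cap B' \neq \emptyset$ and $V_{\sF'} \cap B' = \emptyset$ shows that algorithm failure implies a hedge; the contrapositive is precisely the ``if'' half of the claim.

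For the reverse direction, suppose a hedge $(\sF, \sF')$ for $p(x_{A'} \cond \doo(x_{B'}))$ exists with common root set $R \subset \An[{\sG[\overline{B'}]}](A')$, where necessarily $B' \cap V_{\sF'} = \emptyset$ so that the intervened vertices lie in $V_\sF \setminus V_{\sF'}$. I would construct two structural causal models $\sM_1, \sM_2$ sharing the diagram $\sG$, with all variables binary and the standard parity gadget: each observed vertex in the hedge outputs the exclusive-or $\bigoplus$ of its observed parents and the bits of the unobserved confounders incident to it, each confounder is an independent fair coin, and variables outside $\sF$ merely propagate or are set to constants. The two models are made to differ only by flipping the output (adding a constant bit) on the vertices of $V_\sF \setminus V_{\sF'}$. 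The crux is a bookkeeping verification: because every confounded component closes up within $\sF$, the flips cancel exactly in the joint observational law, giving $p_{\sM_1}(x_V) = p_{\sM_2}(x_V)$, whereas intervening on $B'$ deletes the incoming confounded edges of the $B'$-vertices and breaks the cancellation, so that $p_{\sM_1}(x_R \cond \doo(x_{B'})) \neq p_{\sM_2}(x_R \cond \doo(x_{B'}))$. Since $R \subseteq \An[{\sG[\overline{B'}]}](A')$, this discrepancy propagates along the un-intervened ancestral paths to $A'$, so $p(x_{A'} \cond \doo(x_{B'}))$, and hence $p(x_A \cond \doo(x_B))$, fails to be identifiable.

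The main obstacle is the completeness half of the forward direction paired with the matching construction in the reverse direction: one must argue that the algorithm's failure pattern is \emph{exactly} a hedge — no weaker and no stronger — and that the parity models realizing a given hedge agree on the entire observed margin. The delicate point is verifying the exact cancellation of the XOR flips over each confounded component while simultaneously ensuring the post-intervention distributions genuinely separate; this is where the c-forest hypotheses are used essentially, namely $\sF' \subseteq \sF$, the single shared root set $R$, and $V_{\sF'} \cap B' = \emptyset$. Getting these invariants to line up so that the two conclusions dovetail is the technical heart of the argument, and in the present paper one may alternatively simply cite Corollary~3 of \citet{shpitser2006}.
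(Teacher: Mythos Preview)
The paper does not prove this statement at all: it is stated in Appendix~\ref{app:hedges} purely as a quoted result, with the attribution ``Corollary~3 of \citet{shpitser2006}'' serving in lieu of a proof. So there is no ``paper's own proof'' to compare against; the paper's approach is simply to cite.

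Your proposal is therefore not wrong, but it is doing strictly more than the paper does. What you have sketched is essentially the original Shpitser--Pearl argument (soundness of the ID recursion, hedge-as-failure-witness for completeness, and the binary XOR two-model construction for the non-identifiability direction), and you yourself note in the last sentence that one may ``alternatively simply cite Corollary~3 of \citet{shpitser2006}.'' That alternative is exactly what the paper chooses. If the goal is to match the paper, a one-line citation suffices; if the goal is a self-contained proof, your outline is on the right track but would need the full details of the parity-cancellation bookkeeping and the precise matching of the ID algorithm's failure configuration to the hedge definition, which you correctly flag as the delicate part.
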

\bibliography{references}
\end{document}